\renewcommand{\paragraph}{%
  \@startsection{paragraph}{4}%
  {\z@}{2.25ex \@plus 1ex \@minus .2ex}{-1em}%
  {\normalfont\normalsize\bfseries}%
}
\title{Tester-Learners for Halfspaces: Universal Algorithms}
\author{Aravind Gollakota\thanks{\texttt{aravindg@cs.utexas.edu}. Supported by NSF award AF-1909204 and the NSF AI Institute for Foundations of Machine Learning (IFML).} \\ UT Austin
	\and Adam R. Klivans\thanks{\texttt{klivans@cs.utexas.edu}. Supported by NSF award AF-1909204 and the NSF AI Institute for Foundations of Machine Learning (IFML).} \\
	UT Austin
	\and Konstantinos Stavropoulos\thanks{\texttt{kstavrop@cs.utexas.edu}. Supported by NSF award AF-1909204, the NSF AI Institute for Foundations of Machine Learning (IFML), by scholarships from Bodossaki Foundation and from Leventis Foundation.} \\
	UT Austin
    \and
    Arsen Vasilyan\thanks{\texttt{vasilyan@mit.edu}. Supported in part by NSF awards CCF-2006664, DMS-2022448, CCF-1565235, CCF-1955217, Big George Fellowship and Fintech@CSAIL. Work done in part while visiting UT Austin.} \\
	MIT
}
\date{May 19, 2023}
\theoremstyle{plain}
\newtheorem{theorem}{Theorem}[section]
\newtheorem{lemma}[theorem]{Lemma}
\newtheorem{proposition}[theorem]{Proposition}
\newtheorem{conjecture}[theorem]{Conjecture}
\theoremstyle{definition}
\newtheorem{definition}[theorem]{Definition}
\theoremstyle{remark}
\numberwithin{equation}{section}
\def\C{\mathcal{C}}
\def\G{\mathcal{G}}
\def\L{\mathcal{L}}
\def\X{\mathcal{X}}
\def\Y{\mathcal{Y}}
\def\S{\mathbb{S}}
\newcommand*{\R}{{\mathbb{R}}}
\let\eps\epsilon
\let\phi\varphi
\DeclareMathOperator*{\pr}{\mathbb{P}}
\DeclareMathOperator*{\ex}{\mathbb{E}}
\DeclareMathOperator{\var}{Var}
\DeclarePairedDelimiter{\inn}{\langle}{\rangle}
\DeclarePairedDelimiter{\norm}{\|}{\|}
\DeclareMathOperator{\poly}{poly}
\DeclareMathOperator{\sign}{sign}
\DeclareMathOperator{\proj}{proj}
\DeclareMathOperator{\ind}{\mathbbm{1}}
\newcommand{\opt}{\mathsf{opt}}
\newcommand{\cube}[1]{\{\pm 1\}^{#1}}
\newcommand{\wt}[1]{\widetilde{#1}}
\newcommand{\ignore}[1]{} 
\newcommand*{\w}{\mathbf{w}}
\newcommand*{\vv}{\mathbf{v}}
\newcommand*{\wopt}{\mathbf{w}^{*}}
\newcommand*{\x}{\mathbf{x}}
\newcommand*{\z}{\mathbf{z}}
\newcommand*{\woptemp}{{\mathbf{w}}^{*}}%\hat
\newcommand*{\Djoint}{D_{\X\Y}}%\text{joint}
\newcommand*{\Dtrue}{D_{\X}}%\text{marginal}
\newcommand*{\Djointemp}{\Djoint}%{{{D}_{\text{joint}}}}%\hat
\newcommand*{\Dtrueemp}{\Dtrue}%{{D}_{\text{marginal}}}%\hat
\newcommand*{\Dgeneric}{D}
\newcommand*{\Dfamily}{\mathcal{D}} %\mathbb{D}
\newcommand{\dparam}{{\lambda}}
\newcommand{\scparam}{\beta}
\newcommand{\cpoincare}{\gamma}
\newcommand*{\vvu}{\mathbf{u}}
\newcommand*{\loss}{\ell}
\begin{document}

\maketitle

\begin{abstract}%
We give the first tester-learner for halfspaces that succeeds \emph{universally} over a wide class of structured distributions. Our universal tester-learner runs in fully polynomial time and has the following guarantee: the learner achieves error $O(\opt) + \epsilon$ on any labeled distribution that the tester accepts, and moreover, the tester accepts whenever the marginal is \emph{any} distribution that satisfies a Poincar\'e inequality. In contrast to prior work on testable learning, our tester is not tailored to any single target distribution but rather succeeds for an entire target class of distributions. The class of Poincar\'e distributions includes all strongly log-concave distributions, and, assuming the Kannan--L\'{o}vasz--Simonovits (KLS) conjecture, includes all log-concave distributions. In the special case where the label noise is known to be Massart, our tester-learner achieves error $\opt + \epsilon$ while accepting all log-concave distributions unconditionally (without assuming KLS).
Our tests rely on checking hypercontractivity of the unknown distribution using a sum-of-squares (SOS) program, and crucially make use of the fact that Poincar\'e distributions are certifiably hypercontractive in the SOS framework.
%The class of Poincare distributions includes all strongly log-concave distributions, and in fact, assuming the Kannan--Lovasz--Simonovits (KLS) conjecture, includes all log-concave distributions. Poincare distributions are known to be certifiably hypercontractive, and our tests crucially utilize this property to check hypercontractivity of the unknown distribution using a sum-of-squares program.
\end{abstract}

\newpage

\section{Introduction}

In this paper we study the recent model of testable learning, due to Rubinfeld and Vasilyan~\cite{rubinfeld2022testing}. Testable learning addresses a key issue with essentially all known algorithms for the basic problem of agnostic learning, in which a learner is required to produce a hypothesis competitive with the best-fitting hypothesis in a concept class $\C$. The issue is that these algorithms make distributional assumptions (such as Gaussianity or log-concavity) that are in general hard to verify. This means that in the absence of any prior information about the distribution or the optimal achievable error, it can be hard to check that the learner has even succeeded at meeting its guarantee.

In the testable learning model, the learning algorithm, or tester-learner, is given access to labeled examples from an unknown distribution and may either reject or accept the unknown distribution. If it accepts, it must successfully produce a near-optimal hypothesis. Moreover, it is also required to accept whenever the unknown distribution truly has a certain target marginal $D^*$. Work of  \cite{rubinfeld2022testing,gollakota2022moment,gollakota2023efficient,diakonikolas2023efficient} provided tester-learners for a range of basic classes (including halfspaces, intersections of halfspaces, and more) with respect to particular target marginals $D^*$ (such as the standard Gaussian). All of these algorithms, however, have the shortcoming that they are closely tailored to the particular target marginal $D^*$ that is chosen. Indeed, their tests would reject many well-behaved distributions that are appreciably far from $D^*$. A highly natural question from both a theoretical and a practical perspective is: can we design tester-learners that accept a wide class of distributions simultaneously, without being tailored to any particular one?

%\paragraph{Main contributions}
In this work we answer this question in the affirmative by introducing and studying \emph{universally testable learning}. We formally define this model as follows.
\begin{definition}[Universally Testable Learning]\label{definition:universally_testable_learning}
Let $\C$ be a concept class mapping $\R^d$ to $\cube{}$. Let $\Dfamily$ be a family of distributions over $\R^d$. Let $\eps, \delta > 0$ be parameters, and let $\psi : [0,1] \to [0,1]$ be some function. We say $\C$ can be universally testably learned w.r.t.\ $\Dfamily$ up to error $\psi(\opt) + \eps$ with failure probability $\delta$ if there exists a tester-learner $A$ meeting the following specification. For any distribution $\Djoint$ on $\R^d \times \cube{}$, $A$ takes in a large sample $S$ drawn from $\Djoint$, and either rejects $S$ or accepts and produces a hypothesis $h : \R^d \to \cube{}$ such that: 
\begin{enumerate}[label=\textnormal{(}\alph*\textnormal{)}]
    \item (Soundness.) With probability at least $1-\delta$ over the sample $S$ the following is true: 
    
    If $A$ accepts, then the output $h$ satisfies $\pr_{(\x, y) \sim \Djoint}[h(\x) \neq y] \leq \psi(\opt(\C, \Djoint)) + \eps$, where $\opt(\C, \Djoint) = \inf_{f \in \C}\pr_{(\x, y) \sim \Djoint}[h(\x) \neq y]$.
    \item (Completeness.) Whenever the marginal of $\Djoint$ lies within $\Dfamily$, $A$ accepts with probability at least $1-\delta$ over the sample $S$.
\end{enumerate}
\end{definition}

In this terminology, the original definition of testable learning reduces to the special case where $\Dfamily = \{D^*\}$. We stress that while the prior work of \cite{gollakota2022moment} allowed $D^*$ to be, say, any fixed strongly log-concave distribution, their tester-learners are still tailored to the particular $D^*$ that is selected. This is because their tests rely on checking that the unknown distribution closely matches moments with $D^*$.  By contrast, a universal tester-learner must accept \emph{all} marginals in a family $\Dfamily$.

Our main contribution in this paper is the first universal tester-learner for the class of halfspaces with respect to a broad family of structured continuous distributions. This family is the set of all distributions with bounded Poincar\'e constant (see \cref{definition:poincare}) and some mild concentration and anti-concentration properties (see \cref{definition:nice}). It captures all strongly log-concave distributions, and in fact, under the well-known Kannan--L\'{o}vasz--Simonovits (KLS) conjecture (see \cref{conjecture:KLS}), it captures all log-concave distributions as well. Our universal tester-learner significantly generalizes the main result of \cite{gollakota2023efficient}, who showed comparable guarantees only for the case where the target marginal is the standard Gaussian.
\begin{theorem}[Universal Tester-Learner for Halfspaces; formally stated as \cref{theorem:universal-tester-learner}]
    Let $\C$ be the class of origin-centered halfspaces over $\R^d$. Let $\Dfamily$ be the class of $\Theta(1)$-nice and $\Theta(1)$-Poincar\'e distributions (see \cref{definition:nice,definition:poincare}), which includes all isotropic strongly log-concave and, under KLS, all isotropic log-concave distributions. Then $\C$ can be universally testably learned w.r.t.\ $\Dfamily$ up to error $O(\opt) + \epsilon$ in $\poly(d, \frac{1}{\epsilon})$ time and sample complexity.
\end{theorem}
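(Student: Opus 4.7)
My strategy combines a convex-surrogate learner with an SOS-based distribution tester, built around the fact that Poincar\'e inequalities admit sum-of-squares certificates of hypercontractivity. The argument has three pieces: a learner that succeeds whenever the empirical marginal is certifiably hypercontractive, a constant-degree SOS tester that checks exactly this, and a completeness argument exploiting SOS proofs of hypercontractivity for Poincar\'e distributions.

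For the learner, I would follow the ramp-loss template of \cite{gollakota2023efficient}: minimize a bounded, Lipschitz surrogate $\Lramp$ over the unit ball of weight vectors via empirical-risk minimization solved by gradient descent. Standard uniform convergence and convex-optimization guarantees give $\poly(d,1/\eps)$ sample and time complexity. The key reduction is to show that if the empirical marginal satisfies hypercontractive moment inequalities of the form
\[
\ex_S[\langle \w,\x\rangle^{2k}] \le (Ck)^{k}\,\bigl(\ex_S[\langle \w,\x\rangle^{2}]\bigr)^{k} \quad \text{for all } \w \in \R^d \text{ and every } k \le K,
\]
for appropriate constants $C,K$, then the surrogate minimizer achieves $0/1$ error $O(\opt)+\eps$. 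This follows by using hypercontractivity (via Paley--Zygmund) to obtain the anti-concentration of $\langle \w,\x\rangle$ near zero needed to bound the ramp-to-$0/1$ gap. The crucial difference from \cite{gollakota2023efficient} is that only moment inequalities on the empirical marginal itself are invoked --- there is no matching against any reference distribution, which is what makes the resulting tester-learner universal.

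For the tester, I would run a constant-degree SOS program on the empirical sample that searches for a sum-of-squares decomposition of the polynomials $(Ck)^{k}(\ex_S[\langle \w,\x\rangle^{2}])^{k} - \ex_S[\langle \w,\x\rangle^{2k}]$ in the indeterminates $\w$, for each relevant $k$. This is a semidefinite program of size $\poly(d)$. Soundness is then immediate: an SOS decomposition implies the pointwise inequality, which is exactly what the learner's guarantee requires. For completeness, I would show that any $\Theta(1)$-nice, $\Theta(1)$-Poincar\'e distribution produces such a certificate on a $\poly(d,1/\eps)$-sized sample with high probability. The structural input is that a Poincar\'e inequality implies SOS-certifiable bounds on all constant-degree moments of linear forms, with constants depending only on the Poincar\'e parameter; combined with matrix concentration on the empirical moment tensors (supplied by the niceness hypothesis), the population SOS certificate transfers to the empirical one.

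I expect the principal obstacle to be the completeness step: robustly transferring the SOS certificate of hypercontractivity from the population to the empirical distribution. The population SOS proof arises from Bochner-type, integration-by-parts arguments applied to the Poincar\'e inequality of \cref{definition:poincare}, but turning this into an empirical certificate requires quantitative matrix concentration for the degree-$\le 2K$ monomial Gram matrix together with a perturbation argument that keeps the resulting polynomial inside the SOS cone. The $\Theta(1)$-niceness assumption of \cref{definition:nice} --- tail concentration plus mild anti-concentration --- is precisely what is needed to make these transfers quantitative at polynomial sample complexity and to absorb the perturbations without destroying the certificate.
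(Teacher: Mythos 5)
Your high-level idea---check hypercontractivity of the empirical marginal with an SOS program, use Paley--Zygmund to get anti-concentration, and plug this into a ramp-loss learner---is indeed the paper's strategy, and you correctly identify the Kothari--Steinhardt-style certifiable hypercontractivity of Poincar\'e distributions as the structural input for completeness. However, there are two gaps in how the pieces fit together that would break the argument as written. First, the smoothed ramp loss $\L_\sigma$ of Equation \eqref{equation:surrogate-loss} is \emph{not} convex, so the claim that ``standard uniform convergence and convex-optimization guarantees'' let you minimize it to global optimality is false; no convex surrogate is known to give $O(\opt)+\eps$ for halfspaces under nice marginals. The paper instead runs projected SGD to find an (approximate) \emph{stationary point} $\w$, and the heart of the argument is a structural lemma (\cref{proposition:surrogate-structural-property} together with \cref{lemma:stationary_points_suffice}) showing that, when certain distributional tests pass, small gradient norm forces $\w$ to be angularly close to $\w^*$ or $-\w^*$. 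Your proposal skips this stationary-point analysis entirely, and without it there is no route from a ramp-loss critical point to a guarantee about $0/1$ error.

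Second, the hypercontractivity you actually need is \emph{local}, not global. The Paley--Zygmund step is applied to $Z=\inn{\vv,\x}^2$ where $\vv\perp\w$ and $\x$ is drawn from the marginal \emph{conditioned on the strip} $\{|\inn{\w,\x}|\le\sigma\}$ and projected orthogonally to $\w$; what must be certified is $\ex_S[\inn{\vv,\x}^4\mid\text{strip}]\le O(\cpoincare^4)$. Checking $\ex_S[\langle\w,\x\rangle^{2k}]\le(Ck)^k(\ex_S[\langle\w,\x\rangle^2])^k$ on the full sample tells you nothing directly about the conditioned distribution, and the completeness argument would need \cref{lemma:poincare-preservation} (Poincar\'e constant is preserved under conditioning on a one-dimensional strip and projecting) to even make sense --- a point your plan doesn't address. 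Relatedly, once you know $\w$ is angularly close to $\w^*$ you still need a separate \emph{local disagreement} tester (\cref{lemma:testable-bound-for-local-disagreement}, via a spectral test over dyadic strips) to conclude $\pr[\sign\inn{\w,\x}\neq\sign\inn{\w^*,\x}]\le O(\measuredangle(\w,\w^*))\cdot\poly(\dparam)$ in a distribution-free-over-the-sample way; your proposal is silent on this step. Finally, testing moments of degree up to $2K$ is unnecessary --- degree $4$ suffices --- and the empirical-to-population transfer you worry about is already packaged into the sample-complexity guarantee of the Kothari--Steinhardt theorem (\cref{proposition:certifiable-hypercontractivity}), so that part is not the bottleneck.
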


A special and well-studied case of interest is when the label noise follows the Massart model, i.e.\ the label of every example is flipped by an adversary with probability at most $\eta$. In this case we are able to handle a considerably larger class $\Dfamily$ while also providing a stronger guarantee.
\begin{theorem}[Universal Tester-Learner for Massart Halfspaces; formally stated as \cref{theorem:universal-tester-learner}]
    Let $\C$ be the class of origin-centered halfspaces over $\R^d$. Let $\Dfamily$ be the class of $\poly(d)$-nice and $\poly(d)$-Poincar\'e distributions, which includes all isotropic log-concave distributions (unconditionally). Suppose the label noise follows the Massart model with noise rate at most $\eta < \frac{1}{2}$. Then $\C$ can be universally testably learned w.r.t.\ $\Dfamily$ up to error $\opt + \epsilon$ in $\poly(d, \frac{1}{\epsilon}, \frac{1}{1-2\eta})$ time and sample complexity.
\end{theorem}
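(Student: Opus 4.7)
The plan is to adapt the SOS-based hypercontractivity testing framework to accept any $\poly(d)$-Poincar\'e marginal, and then to exploit the Massart noise structure so that the learner achieves error $\opt + \eps$ rather than $O(\opt) + \eps$.

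First I would construct the tester. Its core is an SOS feasibility program certifying a hypercontractive inequality of the form $\ex_{\x \sim \Dtrue}[p(\x)^{2t}] \leq C(t, \cpoincare) \cdot (\ex_{\x \sim \Dtrue}[p(\x)^2])^t$ for all degree-$k$ polynomials $p$, where $k$ and $t$ depend on $\eps$, $\eta$, and the target Poincar\'e constant $\cpoincare = \poly(d)$. In parallel the tester runs simple empirical checks for bounded low-degree moments and for the anti-concentration properties of \cref{definition:nice}. For completeness, I invoke the structural fact (emphasized in the abstract) that Poincar\'e distributions are certifiably hypercontractive in SOS, so that a $\poly(d, 1/\eps, 1/(1-2\eta))$-size sample makes the empirical SOS constraints feasible with high probability whenever the true marginal lies in $\Dfamily$. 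Allowing $\cpoincare$ to grow as $\poly(d)$ is precisely what lets the completeness side include all isotropic log-concave distributions unconditionally.

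Second, assuming the tester accepts, the learner minimizes a Lipschitz surrogate loss (e.g.\ a ramp loss) over the class of degree-$k$ polynomials on the Massart-labeled sample. The degree $k$ is chosen so that for any unit vector $\wopt$, the halfspace $\sign(\dotp{\wopt}{\x})$ admits a degree-$k$ polynomial approximator $p^*$ whose $L_1$ distance from $\sign(\dotp{\wopt}{\x})$ under any distribution in $\Dfamily$ is at most $\eps(1-2\eta)/\poly(d)$; such approximators exist with $k = \poly(d, 1/\eps, 1/(1-2\eta))$ under a $\poly(d)$-Poincar\'e inequality. The certified hypercontractivity then lets us transfer surrogate-loss guarantees uniformly from the empirical sample to the population over this polynomial class.

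Third, I would convert the polynomial-regression guarantee into an $\opt + \eps$ bound using the Massart structure. Under Massart noise with pointwise rate $\eta(\x) \leq \eta$, the Bayes-optimal classifier is $h^{*} = \sign(\dotp{\wopt}{\x})$, and the excess error of any $h$ equals $\ex_{\x}[(1-2\eta(\x))\Ind\{h(\x) \neq h^{*}(\x)\}] \geq (1-2\eta) \cdot \pr_{\x}[h(\x) \neq h^{*}(\x)]$. Combined with the certified anti-concentration of $\dotp{\wopt}{\x}$ near the decision boundary, this reduces the task to showing that $\sign(p)$ for the learned polynomial $p$ disagrees with $h^{*}$ on a set of measure at most $\eps/(1-2\eta)$, which follows from the $L_1$ approximation quality of $p^{*}$ and the uniform convergence above. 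The main obstacle I anticipate is reconciling two pulls in opposite directions: handling $\poly(d)$-Poincar\'e (rather than $O(1)$-Poincar\'e) marginals forces both the degree $k$ and the SOS certificate size to grow with $\cpoincare$, yet to obtain $\opt + \eps$ (rather than $O(\opt) + \eps$) I must argue that the surrogate-loss minimizer agrees with $h^{*}$ outside a thin slab around $\{\dotp{\wopt}{\x} = 0\}$ whose measure is genuinely $\eps/(1-2\eta)$, controlled only via the $\poly(d)$-niceness anti-concentration. Pushing the hypercontractivity certificate to uniformly sandwich the surrogate and $0/1$ losses against $h^{*}$ across the entire polynomial class is the technically delicate step.
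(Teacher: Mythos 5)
Your proposal uses $L_1$ polynomial regression over degree-$k$ polynomial approximators of the halfspace, but this cannot achieve the claimed $\poly(d, 1/\eps, 1/(1-2\eta))$ running time. To $L_1$-approximate $\sign(\inn{\wopt,\x})$ to error $\eps(1-2\eta)/\poly(d)$ over a log-concave (or even Gaussian) marginal, the degree must grow at least like $\poly(1/\eps, 1/(1-2\eta), d)$, so the polynomial class has $d^{k}$ many monomials and the surrogate-loss minimization is an optimization over $d^{\poly(d, 1/\eps, 1/(1-2\eta))}$-dimensional space. The same blowup hits your SOS certificate: certifying hypercontractivity of degree-$k$ polynomials requires a degree-$\Theta(tk)$ pseudomoment matrix, again of superpolynomial size once $k$ is non-constant. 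This is exactly why the earlier tester-learners of \cite{rubinfeld2022testing,gollakota2022moment} (which do use low-degree polynomial/moment machinery) run in $d^{\widetilde{O}(1/\eps^2)}$ time, and it is precisely what the present paper is trying to get around. Additionally, it is not the case that polynomial regression combined with a Bayes-optimality argument gives $\opt+\eps$ in the Massart setting; the known polynomial-regression guarantee is $O(\opt)+\eps$ even agnostically, and the jump to $\opt+\eps$ under Massart noise in the literature comes from a gradient-structure argument, not from an $L_1$-approximation bound.

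The paper's actual route stays entirely at degree one. It runs projected SGD on a smoothed ramp loss $\L_\sigma$ directly over unit vectors $\w \in \S^{d-1}$, producing candidate stationary points. The role of SOS is narrow and cheap: it is only a \emph{degree-4} relaxation (Proposition~\ref{proposition:hypercontractivity-tester}) certifying $\ex_{\x\in S}[\inn{\vv,\x}^4] \le O(\cpoincare^4)$ uniformly over unit directions $\vv$, which via Paley--Zygmund (Proposition~\ref{proposition:paley-zygmund}) yields weak anti-concentration of linear forms in the band $|\inn{\w,\x}|\le\sigma$. This anti-concentration, plugged into the gradient lower bound of Proposition~\ref{proposition:surrogate-structural-property}, certifies that any accepted stationary point $\w$ is angularly close to $\wopt$. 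The Massart refinement is then a simplification rather than an added argument: in the gradient decomposition $\|\nabla\L_\sigma\| \ge (1-2\eta)A_1 - A_2$, the $\sqrt{\opt}$-dependent term $A_3$ simply disappears, so a single choice of $\sigma \propto \eps(1-2\eta)$ forces $\measuredangle(\w,\wopt)$ to scale with $\eps$ independently of $\opt$, and the disagreement test of Lemma~\ref{lemma:testable-bound-for-local-disagreement} converts this angle bound into $\opt + \eps$. Everything is an SDP of size $\poly(d)$, which is what makes the claimed running time possible.
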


\paragraph{Technical Overview}
We first describe the key reasons why prior tester-learners were tailored to a specific target $D^*$. All known polynomial-time algorithms for agnostically learning halfspaces up to error $O(\opt) + \epsilon$ require some concentration and anti-concentration properties from the input marginal distribution (encapsulated e.g.\ in \cref{definition:nice}). While concentration is relatively straightforward to check (e.g.\ by checking that the moments do not grow at too fast a rate), the key challenge in designing tester-learners for halfspaces is to check anti-concentration. All prior tester-learners \cite{rubinfeld2022testing,gollakota2022moment,gollakota2023efficient,diakonikolas2023efficient} use the heavy machinery of moment-matching to achieve this. This approach relies on establishing structural properties of the following type: if $D^*$ is a well-behaved distribution (e.g.\ a strongly log-concave distribution), and $D$ approximately matches $D^*$ in its low-degree moments, then $D$ is also well-behaved (in particular, anti-concentrated). A canonical statement of such a property is the main pseudorandomness result of \cite{gollakota2022moment} (see Theorem 5.6 therein), which establishes that approximate moment-matching fools functions of a constant number of halfspaces. Applying this property inherently requires comparing the low-degree moments of $D$ with those of $D^*$. Such tests do (implicitly) succeed universally for the class of all distributions that match low-degree moments with $D^*$ (e.g., if $D^*$ is the uniform distribution over the hypercube, moment matching would accept all $k$-wise independent distributions). Definition \ref{definition:universally_testable_learning}, however, seeks a far broader kind of universality. Our tests are not tailored to a single target in any way, and are intended to succeed over practical classes of distributions that are commonly considered in learning theory (e.g., log-concave distributions).\footnote{One may wonder if it is possible to test whether the low-degree moments of the input marginal $D$ match \emph{any} distribution in a family $\Dfamily$ (e.g., all strongly log-concave distributions) without directly comparing to a specific $D^*$. This is a reduction to testing whether a given (approximate) low-degree moment tensor lies within a large set of target low-degree moment tensors, and would indeed suffice for universally testable learning. This general problem, however, seems highly challenging to solve directly. }

%We also remark that a test which only compares against the low-degree moments of $D^*$ would trivially succeed universally for the class of all distributions that match low-degree moments with $D^*$. However, we seek a far broader kind of universality, and our tests are not tailored to a single target in any way.} %(i.e., that $|\ex_{D}[f] - \ex_{D^*}[f]| \leq \epsilon$ for any $f$ that is a function of a constant number of halfspaces)

%We follow and improve on the roadmap used by \cite{gollakota2023efficient} to design efficient tester-learners for halfspaces using non-convex SGD (building on \cite{diakonikolas2020learning,diakonikolas2020non}). 

We overcome this hurdle and design a conceptually simple way of checking anti-concentration without requiring the hammer of moment-matching. Our approach follows and improves on the roadmap used by \cite{gollakota2023efficient} to design efficient tester-learners for halfspaces using non-convex SGD (building on \cite{diakonikolas2020learning,diakonikolas2020non}). Let us outline this approach at a high level (a more detailed technical overview may be found in \cite[Sec 1.2]{gollakota2023efficient}). The tester-learner first computes a stationary point $\w$ of a certain smooth version of the ramp loss, a surrogate for the 0-1 loss. Let $\w^*$ be any solution achieving 0-1 error $\opt$. The tester-learner now checks distributional properties of the unknown marginal $D$ that ensure that $\w$ is close in angular distance to $\w^*$ (specifically, they ensure the contrapositive, namely that any $\w$ that has large gradient norm must have large angle with $\w^*$). By a more careful analysis of the gradient norm than in \cite{gollakota2023efficient} (see \cref{proposition:surrogate-structural-property}), we are able to reduce to showing the following weak anti-concentration property. Let $\vv$ denote any unit vector orthogonal to $\w$, and let $D_T$ denote $D$ restricted to the band $T = \{\x \mid |\inn{\w, \x}| \leq \sigma \}$ (where the width $\sigma$ is carefully selected according to certain constraints). Then the property we need is that 
\[ 
    \pr_{\x \sim D_T}[ |\inn{\vv, \x}| \geq \Theta(1) ] \geq \Theta(1). 
\] 
Our key observation is that the classical Paley--Zygmund inequality applied to the random variable $Z = \inn{\vv, \x}^2$, where $\x \sim D_T$, already gives us the following type of anti-concentration: \[
        \pr \left[Z > \frac{\ex[Z]}{2} \right] \ge \frac{1}{4} \cdot \frac{\ex[Z]^2}{\ex[Z^2]}.
\] This turns out to suffice for our purposes---provided we can show a hypercontractivity property for $Z$, namely that $\ex[Z^2] \leq \Theta(1) \ex[Z]^2$ (as well as that $\ex[Z] = \Theta(1)$, which is just a second moment constraint). %\TODO{add some refs to equations in the proofs?}

Our main algorithmic idea is to use a sum-of-squares (SOS) program to check hypercontractivity of the random variable $Z$. To do so, we crucially leverage a result due to \cite{kothari2017better} stating that any $D$ that has bounded Poincar\'e constant is \emph{certifiably hypercontractive} in the SOS framework (and it turns out this extends to $D_T$ as well). This means that we can run a certain polynomial-time semidefinite program that checks hypercontractivity of $Z$ over the sample, and whenever $D$ is in fact Poincar\'e, we are guaranteed that the test will pass with high probability (see \cref{proposition:hypercontractivity-tester}). This is sufficient to ensure that the stationary point $\w$ we have computed is indeed close in angular distance to $\w^*$. 

In order to finally arrive at our main results, we need to run further tests which ensure that the disagreement between our computed $\w$ and any (unknown) optimum $\w^*$ is bounded by the angle between them, i.e., $\pr_{\x \sim D}[\sign(\inn{\w, \x} \neq \sign(\inn{\w^*, \x})] \leq O(\measuredangle(\w, \w^*))$ (see \cref{lemma:testable-bound-for-local-disagreement}). This in turn guarantees that $\w$ has error $O(\opt) + \epsilon$. We stress that while \cite{gollakota2023efficient} introduced similar testers for the special case of Gaussian marginals, our tests succeed universally with respect to a broad family of distributions including some heavy-tailed distributions (see Definition \ref{definition:nice}). From a technical perspective, prior to our work, such tests either produced a suboptimal bound, or required estimating the operator norms of a polynomial number of random matrices formed using rejection sampling. We significantly simplify this approach by showing that it is sufficient to estimate the operator norm of a single random matrix. Finally, to obtain our improved results for the Massart setting, it turns out that the proof admits certain simplifications that guarantee final error $\opt + \epsilon$ while also allowing a wider range of Poincar\'e distributions.

\paragraph{Related Work}
There is a large body of work on agnostic  learning algorithms for halfspaces that run in fully polynomial time. We briefly mention only those that are most closely relevant to our work; please see \cite{balcan2021noise} for a survey as well as \cite[Sec 1.1]{gollakota2023efficient} for further related work.
Following a long line of work on distribution-specific agnostic learners for halfspaces \cite{klivans2009learning,awasthi2017power,daniely2015ptas,balcan2017sample,yan2017revisiting,zhang2018efficient,zhang2020efficient,zhang2021improved}, the work of \cite{diakonikolas2020learning} introduced a particularly simple approach for the Massart setting, based solely on non-convex SGD. This work, which sets the template that our approach also follows, achieved the information-theoretically optimal error of $\opt + \eps$ for origin-centered Massart halfspaces over a wide range of structured distributions (and was later extended to general halfspaces by \cite{diakonikolas2022learning_general}). The non-convex SGD approach was then generalized by \cite{diakonikolas2020non} to show an $O(\opt) + \eps$ guarantee for the fully agnostic setting. Other related work includes \cite{diakonikolas2018learning,diakonikolas2022learning_online}.

The testable learning model was introduced by the work of \cite{rubinfeld2022testing}, who showed a tester-learner for halfspaces achieving error $\opt + \epsilon$ in time $d^{\wt{O}(1/\epsilon^4)}$ for the case where the target marginal is Gaussian. Subsequently, \cite{gollakota2022moment} provided a general algorithmic framework based on moment-matching for this problem, and showed a tester-learner for halfspaces only requiring time $d^{\wt{O}(1/\epsilon^2)}$ with respect to any fixed strongly log-concave marginal (matching known lower bounds for ordinary agnostic learning over Gaussian marginals \cite{goel2020statistical,diakonikolas2020near,diakonikolas2021optimality,diakonikolas2023near}).

The most closely relevant work to the present one is that of \cite{gollakota2023efficient} (see also \cite{diakonikolas2023efficient}), who showed fully polynomial-time tester-learners for halfspaces achieving error $O(\opt) + \epsilon$ in the agnostic setting and $\opt + \epsilon$ in the Massart setting for the case where the target marginal is the Gaussian. As detailed in the technical overview, their tests rely crucially on moment-matching and are tailored to a specific target marginal. By contrast, our tests check hypercontractivity using an SOS program and succeed universally for a wide class of certifiably hypercontractive distributions.

Certifying distributional properties such as hypercontractivity is an important aspect of a large body of work on robust algorithmic statistics using the SOS framework. We will not attempt to summarize this literature here and direct the reader to \cite{kothari2017better,bakshi2021list} for overviews of related work, as well as to \cite{fleming2019semialgebraic} for a textbook treatment. The notion of certifiable anti-concentration has also been studied (see e.g.\ \cite{karmalkar2019list,raghavendra2020list,bakshi2021list}), but it turns out not to be directly useful for our purposes as it is only known to hold for distributions satisfying very strong conditions such as rotational symmetry.

%\textbf{Limitations and Further Work.} Open directions in testable learning (and universally testable learning) include the design of (efficient) tester-learners for concept classes other than the class of halfspaces, e.g., functions of halfspaces or neurons with other activations (like ReLU or sigmoid).

\section{Preliminaries}

\paragraph{Notation and Terminology}
For what follows, we consider $\Djoint$ to be an unknown joint distribution over $\X \times \Y$ from which we receive independent samples, and its marginal on $\X$ will be denoted by $\Dtrue$. In particular $\X = \R^d$, and labels will lie in $\Y = \cube{}$. We will use $\C$ to denote a concept class mapping $\R^d$ to $\cube{}$, which throughout this paper will be the class of halfspaces or functions of halfspaces over $\R^d$. We use $\opt(\C, \Djoint)$ to denote the optimal error $\inf_{f \in \C} \pr_{(\x, y) \sim \Djoint}[f(\x) \neq y]$, or just $\opt$ when $\C$ and $\Djoint$ are clear from context. We recall that in Massart noise model, the labels satisfy $\pr_{y \sim \Djoint | \x}[y \neq \sign(\inn{\wopt, \x}) \mid \x] = \eta(\x)$, with $\eta(\x) \leq \eta < \frac{1}{2}$ for all $\x$. When we have adversarial noise (i.e., when we are in the agnostic model), the labels can be completely arbitrary. In both cases, the goal is to produce a hypothesis whose error is competitive with $\opt$. We use $\ex$ to denote the expectation of a random variable in brackets (or, correspondingly, $\pr$ for the probability of an event), either over the unknown joint distribution or over the empirical distribution with respect to a sample $S$ (e.g., $\ex_{Z\in S}[f(Z)] = \frac{1}{|S|} \sum_{Z\in S} f(Z)$).

\paragraph{Definitions and Distributional Assumptions}
For the problem of learning halfspaces in the agnostic and in Massart noise models, any of the known polynomial algorithms that achieve computationally optimal guarantees require that the marginal distribution has at least the following nice properties previously defined by, e.g., \cite{diakonikolas2020non}.

\begin{definition}[Nice Distributions]\label{definition:nice}
    For a given constant $\dparam\ge 1$, we consider the class of $\dparam$-nice distributions over $\R^d$ to be the %mean-zero
    distributions that satisfy the following properties:
    %\footnote{If $\dparam< 1$, we may consider $\ex[\inn{\vv,\x}^2] = 1$ instead of $\ex[\inn{\vv,\x}^2] \in [\frac{1}{\dparam},\dparam]$.}
    \begin{enumerate}
        \item\label{condition:nice-spectrum-1} For any unit vector $\vv$ in $\R^d$ the distribution satisfies $\ex[\inn{\vv,\x}^2] \in [\frac{1}{\dparam}, \dparam]$.\hfill (bounded spectrum)
        \item\label{condition:nice-antianticonc-2}  For any two dimensional subspace $V$, the corresponding marginal density $q_V(\x)$ satisfies $q_V(\x) \ge 1/\dparam$ for any $\|\x\|_2 \le 1/\dparam$. \hfill (anti-anti-concentration)
        \item\label{condition:nice-concanticonc-3}  For any two dimensional subspace $V$, the corresponding marginal density $q_V(\x)$ satisfies $q_V(\x) \le Q(\|\x\|_2)$ for some function $Q:\R_+ \to \R_+$ such that $\sup_{r\ge 0} Q(r) \le \dparam$ and also 
        %$\int_V (1+\|\x\|_2+\|\x\|_2^3) \, Q(\|\x\|_2) \, d\x \le \dparam$. 
        $\int_{r=0}^\infty r^kQ(r) \, dr \le \dparam$, for any $k=1,3,5$.
        %and $\int_{r=0}^\infty r^3Q(r) \, dr \le \dparam$. 
        \hfill (anti-concentration and concentration)
    \end{enumerate}
\end{definition}

In the testable learning framework, however, corresponding results provide testable guarantees with respect to target marginals that are isotropic strongly log-concave \cite{gollakota2023efficient}, which is a strictly stronger condition than the one of Definition \ref{definition:nice} (see Proposition \ref{proposition:log-concave-are-nice} below). We now provide the standard definition of (strongly) log-concave distributions.

\begin{definition}[(Strongly) Log-Concave Distributions \cite{saumard2014log}]\label{definition:strongly-log-concave}
    We say that a distribution over $\R^d$ is ($\scparam$-strongly) log-concave, if its density can be written as $e^{-\phi}$, where $\phi$ is a ($\scparam$-strongly) convex function on $\R^d$ (for some $\scparam >0$).
\end{definition}

\begin{proposition}[Log-Concave Distributions are Nice \cite{lovasz2007geometry}]\label{proposition:log-concave-are-nice}
    There exists a universal constant $\dparam\ge 1$ such that any isotropic log-concave distribution is $\dparam$-nice.
\end{proposition}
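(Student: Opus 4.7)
The plan is to verify each of the three conditions in \cref{definition:nice} separately, using standard structural facts about isotropic log-concave distributions from the survey \cite{lovasz2007geometry}. The key observation underlying all three conditions is that the marginal of an isotropic log-concave distribution on any subspace $V \subseteq \R^d$ is itself an isotropic log-concave distribution on $V$ (log-concavity is preserved under marginalization by the Prékopa–Leindler inequality, and isotropy on the subspace follows directly from isotropy in $\R^d$). Hence it suffices to invoke universal bounds on isotropic log-concave densities in dimensions at most $2$.

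First, for condition \ref{condition:nice-spectrum-1}, isotropy gives $\ex[\inn{\vv,\x}^2] = 1$ for every unit vector $\vv$, so this condition holds with any $\dparam \ge 1$. Next, for condition \ref{condition:nice-antianticonc-2}, I would invoke the standard fact that for any isotropic log-concave density $q_V$ on $\R^2$ there is a universal lower bound $q_V(0) \ge c_1$ for some absolute $c_1 > 0$. Since $q_V$ is log-concave and integrates to $1$, it is also upper-bounded by some absolute constant $c_2$ on $\R^2$; combining these with the log-concavity bound $q_V(\x) \ge q_V(0)^{1+t} q_V(\x/t \cdot t)^{-t}$-type estimates (or directly, the one-step interpolation inequality for log-concave functions) yields a uniform lower bound $q_V(\x) \ge c_3 > 0$ on a ball of some absolute radius $r_0 > 0$ around the origin. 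Taking $\dparam \ge \max(1/c_3, 1/r_0)$ suffices.

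For condition \ref{condition:nice-concanticonc-3}, the upper bound $\sup q_V \le c_2$ handles the $\sup_{r\ge 0} Q(r) \le \dparam$ part. For the moment integrals, I would use the well-known exponential tail bound for isotropic log-concave distributions in low dimensions, namely that $q_V(\x) \le c_4 e^{-c_5 \|\x\|_2}$ for $\|\x\|_2 \ge R_0$, for absolute constants $c_4, c_5, R_0 > 0$. Taking $Q(r) \coloneqq \min(c_2, c_4 e^{-c_5 r}) \cdot 2\pi r$-style envelope (or, after converting to polar coordinates, the envelope on the radial density) immediately yields finite integrals $\int_0^\infty r^k Q(r) \, dr$ for every $k$, bounded by absolute constants; one then takes $\dparam$ large enough to absorb the worst of these for $k \in \{1,3,5\}$.

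The main obstacle will be bookkeeping the constants and setting up $Q$ correctly so that it simultaneously serves as a pointwise upper envelope for $q_V$ and has the required integrability, but there is no technical hurdle: every required inequality is already documented in \cite{lovasz2007geometry} for isotropic log-concave distributions in dimension at most $2$. Taking $\dparam$ to be the maximum of all the reciprocal constants appearing above yields a single universal $\dparam \ge 1$ that works for every isotropic log-concave distribution.
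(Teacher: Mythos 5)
The paper does not actually prove \cref{proposition:log-concave-are-nice}; it is stated as a known fact and cited to \cite{lovasz2007geometry}. Your plan is the natural and correct route: verify each clause of \cref{definition:nice} using the standard structural facts for isotropic log-concave densities in low dimension (marginals remain isotropic log-concave; the density of an isotropic log-concave distribution in a fixed dimension is bounded above by a universal constant, bounded below by a universal constant on a ball of universal radius around the origin, and decays at least exponentially), all of which indeed appear in the Lov\'asz--Vempala survey. Condition~\ref{condition:nice-spectrum-1} is immediate from isotropy as you note.

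One point of sloppiness worth fixing before you call this a proof. The function $Q$ in condition~\ref{condition:nice-concanticonc-3} is a pointwise envelope for the density, $q_V(\x) \le Q(\|\x\|_2)$, not for the radial marginal; you should therefore take $Q(r) = \min(c_2,\, c_4 e^{-c_5 r})$ (with $c_4$ chosen large enough that the minimum is $c_2$ on $[0,R_0]$), \emph{without} the $2\pi r$ Jacobian factor. If you attached the factor $2\pi r$, the envelope would vanish at $r=0$ and fail to dominate $q_V$ near the origin. With the corrected $Q$, the integrals $\int_0^\infty r^k Q(r)\,dr$ for $k \in \{1,3,5\}$ are gamma-type integrals bounded by universal constants, and $\sup_r Q(r) \le c_2$, so condition~\ref{condition:nice-concanticonc-3} holds. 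Your interpolation inequality for condition~\ref{condition:nice-antianticonc-2} is garbled as written (the exponents should be a convex combination summing to one, e.g.\ $q_V(\lambda \x) \ge q_V(0)^{1-\lambda} q_V(\x)^\lambda$); the intended argument---a universal lower bound at the origin plus a universal upper bound everywhere, propagated along rays by log-concavity---is sound and gives a lower bound on a ball of universal radius. With these repairs the plan is complete.
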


In this work, we provide universally testable guarantees with respect to the class of nice distributions with bounded Poincar\'e constant (see Definition \ref{definition:poincare} below).
\begin{definition}[Poincar\'e Distributions]\label{definition:poincare}
    For a given value $\cpoincare>0$, we say that a distribution over $\R^d$ is $\cpoincare$-Poincar\'e, if $\var(f(\x)) \le \cpoincare \cdot \ex[\|\nabla f(\x)\|_2^2]$ for any differentiable function $f:\R^d\to \R$.
\end{definition}
Although it is not clear whether one can efficiently obtain testable guarantees for the problem of learning noisy halfspaces under nice marginals (which is known to be an efficiently solvable problem in the non-testable setting \cite{diakonikolas2020learning, diakonikolas2020non}), by restricting our attention to nice distributions that, additionally, have bounded Poincar\'e constant, we obtain efficient learning results, even in the universally testable setting. Our results are strictly stronger than the ones in \cite{gollakota2023efficient}, since we capture isotropic strongly log-concave distributions universally, due to Proposition \ref{proposition:log-concave-are-nice} and the fact that strongly log-concave distributions are also Poincar\'e, as per Proposition \ref{proposition:strongly-log-concave-are-poincare} below.
\begin{proposition}[{Strongly Log-Concave Distributions are Poincar\'e, \cite[Proposition 10.1]{saumard2014log}}]\label{proposition:strongly-log-concave-are-poincare}
    Any $\frac{1}{\cpoincare}$-strongly log-concave distribution is $\cpoincare$-Poincar\'e.
\end{proposition}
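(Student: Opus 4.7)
The plan is to prove this classical fact via the Bakry--\'Emery $\Gamma_2$ calculus. Write $\mu$ for the distribution, with density $e^{-\phi}$ and $\nabla^2 \phi \succeq \tfrac{1}{\cpoincare} I$ everywhere. I would first introduce the overdamped Langevin semigroup $(P_t)_{t\ge 0}$ on $\R^d$ generated by $Lf = \Delta f - \langle \nabla \phi, \nabla f\rangle$. A short integration-by-parts argument shows that $\mu$ is reversible for $L$, giving the identity $\ex_\mu[f\,Lg] = -\ex_\mu[\langle \nabla f, \nabla g\rangle]$ that will drive everything.

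Next, I would introduce the carr\'e du champ $\Gamma(f,f) = \tfrac12(Lf^2 - 2fLf) = \|\nabla f\|_2^2$ and its iterate $\Gamma_2(f,f) = \tfrac12 L\Gamma(f,f) - \Gamma(f, Lf)$. A direct calculation (the Bochner--Weitzenb\"ock identity, obtained by expanding $\Delta\|\nabla f\|_2^2$ and collecting terms) gives
\[
\Gamma_2(f,f) \;=\; \|\nabla^2 f\|_{HS}^2 \,+\, \langle \nabla f,\, (\nabla^2 \phi)\,\nabla f\rangle \;\ge\; \tfrac{1}{\cpoincare}\,\|\nabla f\|_2^2,
\]
where the final inequality discards the nonnegative Hessian-Frobenius term and uses $\tfrac{1}{\cpoincare}$-strong convexity of $\phi$. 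Integrating against $\mu$ and using $\ex_\mu[L(\cdot)] = 0$ yields $\ex_\mu[\Gamma_2(f,f)] = -\ex_\mu[\Gamma(f, Lf)]$.

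Finally, I would deduce the Poincar\'e inequality from the pointwise bound $\Gamma_2 \ge \tfrac{1}{\cpoincare}\Gamma$ by a standard semigroup interpolation. Let $G(t) = \ex_\mu[\|\nabla P_t f\|_2^2]$; differentiating and applying the identity from the previous paragraph gives $G'(t) = -2\ex_\mu[\Gamma_2(P_t f, P_t f)] \le -\tfrac{2}{\cpoincare}\,G(t)$, so Gr\"onwall yields $G(t) \le e^{-2t/\cpoincare}\,G(0)$. Separately, for $F(t) = \var_\mu(P_t f)$, one has $F'(t) = -2G(t)$, $F(0) = \var_\mu(f)$, and $F(\infty)=0$ by ergodicity of $P_t$ toward $\mu$. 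Integrating $-F'$ from $0$ to $\infty$ and plugging in the exponential decay of $G$ gives exactly $\var_\mu(f) \le \cpoincare\,\ex_\mu[\|\nabla f\|_2^2]$.

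The main obstacle is the routine but finicky regularity work needed to justify the semigroup manipulations (existence, smoothing and commutation with $\nabla$) and the ergodicity step for the class of merely differentiable $f$ appearing in \cref{definition:poincare}; the standard fix is to prove the inequality first for smooth, compactly supported test functions (where all operations are legitimate) and then extend by a density/truncation argument, using the fact that the spectral gap characterization of the Poincar\'e inequality is stable under approximation of $f$ in the gradient seminorm.
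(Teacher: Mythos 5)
The paper gives no proof for this statement and simply cites \cite[Proposition 10.1]{saumard2014log}; your Bakry--\'Emery $\Gamma_2$ argument is the classical derivation and is correct. Checking the key steps: reversibility of $L$ for $\mu$ gives $\ex_\mu[f\,Lg] = -\ex_\mu[\inn{\nabla f,\nabla g}]$; the Bochner identity gives $\Gamma_2(f,f) = \|\nabla^2 f\|_{HS}^2 + \inn{\nabla f, (\nabla^2\phi)\nabla f} \ge \cpoincare^{-1}\|\nabla f\|_2^2$; differentiating $G(t) = \ex_\mu[\|\nabla P_t f\|_2^2]$ and $F(t) = \var_\mu(P_t f)$ gives $G'(t) = -2\ex_\mu[\Gamma_2(P_t f, P_t f)] \le -(2/\cpoincare)G(t)$ and $F'(t) = -2G(t)$; and integrating $-F'$ over $[0,\infty)$ with $F(\infty)=0$ yields $\var_\mu(f) = 2\int_0^\infty G(t)\,dt \le \cpoincare\,G(0) = \cpoincare\,\ex_\mu[\|\nabla f\|_2^2]$, as claimed. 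You correctly flag that what remains is regularity bookkeeping (semigroup existence and smoothing, the intertwining relation behind differentiating $G$, density of nice test functions), which is standard.

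A genuinely shorter route, and the one the cited survey effectively takes, is the Brascamp--Lieb inequality: for $\mu \propto e^{-\phi}$ with $\phi$ strictly convex, $\var_\mu(f) \le \ex_\mu[\inn{\nabla f, (\nabla^2\phi)^{-1}\nabla f}]$. Strong convexity $\nabla^2\phi \succeq \cpoincare^{-1} I$ gives $(\nabla^2\phi)^{-1} \preceq \cpoincare I$, and the Poincar\'e inequality drops out in one line. Brascamp--Lieb is sharper (it retains the pointwise Hessian) but proving it is itself comparable work; the Bakry--\'Emery semigroup route you chose has the advantage that the identical interpolation scheme upgrades directly to log-Sobolev and hypercontractivity bounds, which is why it is the standard textbook treatment.
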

Furthermore, under a long-standing conjecture about the geometry of convex bodies \cite{kannan1995isoperimetric}, our results capture the family of all isotropic log-concave distributions.

\begin{conjecture}[Kannan--Lov\'asz--Simonovits Conjecture \cite{kannan1995isoperimetric} reformulation from \cite{lee2018kannan}]\label{conjecture:KLS}
    There is a universal constant $\cpoincare>0$ for which any isotropic log-concave distribution is $\cpoincare$-Poincar\'e.
\end{conjecture}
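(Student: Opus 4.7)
The plan is to follow Eldan's stochastic localization framework, which has driven essentially all modern progress on the KLS conjecture. I acknowledge up front that the statement is labeled as a conjecture precisely because it remains open, and that existing implementations of the approach below currently yield only polylogarithmic Poincar\'e bounds in the dimension $d$, rather than a universal constant.

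First, I would reduce the Poincar\'e inequality to a thin-shell estimate via E.~Milman's equivalence: up to universal factors, the Poincar\'e constant of an isotropic log-concave distribution on $\R^d$ is controlled by the \emph{thin-shell constant} $\sigma_d^2 := \sup_\mu \var_{\x \sim \mu}(\|\x\|_2)$, where the supremum ranges over all isotropic log-concave $\mu$ on $\R^d$. Hence it would suffice to prove $\sigma_d = O(1)$ uniformly in $d$. The reduction goes through Milman's characterization of Poincar\'e inequalities for log-concave measures in terms of linear concentration, combined with the observation that thin-shell concentration of $\|\x\|_2$ itself implies the required linear concentration after averaging over directions.

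Second, to bound $\sigma_d$ I would run stochastic localization. Starting from $\mu_0 = \mu$, one defines a martingale of log-concave probability measures $\mu_t(d\x) \propto \exp(\inn{\z_t, \x} - \tfrac{t}{2}\|\x\|_2^2)\,\mu(d\x)$, where $\z_t$ solves an It\^o SDE driven by Brownian motion and the barycenter of $\mu_t$. Writing $A_t$ for the covariance matrix of $\mu_t$, a short It\^o calculation decomposes $\var_{\x \sim \mu}(\|\x\|_2^2)$ into an integral involving $\ex[\|A_t\|_{\text{op}}^2]$ against $dt$, up to a stopping time $T$ chosen so that $\mu_T$ becomes essentially a needle. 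The goal is to propagate a bootstrap: if one assumes $\sigma_k = O(1)$ for all $k \le d$, one should be able to derive a differential inequality of the form $\tfrac{d}{dt} \ex[\|A_t\|_{\text{op}}] \lesssim \ex[\|A_t\|_{\text{op}}^2]$ with coefficients depending only on $\sigma_d$, keeping $\|A_t\|_{\text{op}}$ bounded by a universal constant up to time $t \asymp 1$ and thereby closing the induction.

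The hard step --- and the reason the conjecture remains open --- is precisely controlling the growth of $\|A_t\|_{\text{op}}$ sharply enough. Eldan's original argument yields $\sigma_d = O(\sqrt{d \log d})$; Lee--Vempala improved this to $d^{1/4}$; Chen pushed it to quasi-polynomial in $\log d$; and Klartag--Lehec recently brought it down to polylogarithmic. Closing the remaining polylogarithmic gap to a true universal constant would require genuinely new input, for instance a sharper algebraic identity controlling the third-moment tensor $\ex_{\x \sim \mu_t}[(\x - \text{barycenter})^{\otimes 3}]$ along the localization process (this tensor governs the quadratic variation of $A_t$), or an alternative localization scheme such as a needle-decomposition variant that avoids paying any residual dimensional factors when pulling out the operator norm. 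Any attempt I write down beyond restating this program would at best reproduce the state of the art, so my ``proof plan'' is best read as a description of why the conjecture is believed, not as a derivation of it.
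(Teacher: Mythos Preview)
The paper does not prove this statement; it is stated precisely as a \emph{conjecture} (Conjecture~\ref{conjecture:KLS}) and invoked only as a hypothesis under which the main results extend from strongly log-concave to all log-concave marginals. There is therefore no ``paper's own proof'' to compare against.

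Your proposal is appropriate in that you recognize this up front and do not pretend to give a proof. Your summary of the stochastic localization program (Eldan, Lee--Vempala, Chen, Klartag--Lehec) and of the reduction via Milman's equivalence to the thin-shell constant is accurate and reflects the current state of the art. The only caveat is a framing one: since the statement is explicitly a conjecture in the paper, the expected ``proof attempt'' is simply to note that no proof is given or required, rather than to sketch a research program toward resolving KLS. Your discussion is informative but goes well beyond what the paper itself does with the statement.
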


% \begin{assumption}[Nice Distributions]
%     For given positive constants $\dparams = (\dparam_1, \dparam_2, \dparam_3)$, we consider the class of $\dparams$-nice distributions over $\R^d$ to be those that satisfy the following properties:
%     \begin{enumerate}
%         \item $\var(f(\x)) \le \dparam_1 \cdot \ex[\|\nabla f(\x)\|_2^2]$ for any differentiable function $f:\R^d\to \R$. \hfill ($\dparam_1$-Poincar\'e)
%         \item For any two dimensional subspace $V$, the corresponding marginal density $q_V(\x)$ satisfies $q_V(\x) \ge 1/\dparam_2$ for any $\|\x\|_2 \le 1/\dparam_2$. \hfill (anti-anti-concentration)
%         \item For any two dimensional subspace $V$, the corresponding marginal density $q_V(\x)$ satisfies $q_V(\x) \le Q(\|\x\|_2)$ for some function $Q:\R_+ \to \R_+$ such that $\sup_{r\ge 0} Q(r) \le \dparam_3$ and also $\int_V (1+\|\x\|_2+\|\x\|_2^2) \, Q(\|\x\|_2) \, d\x \le \dparam_3$. \hfill (anti-concentration and concentration)
%         %\item $\vv^T\ex[\x\x^T]\vv \in [\lambda_1,\lambda_2]$ for any $\vv\in\S^{d-1}$. \hfill ()
%     \end{enumerate}
% \end{assumption}

%\TODO{Fix Assumption, define Poincar\'e distributions, write KLS conjecture, and propose that under KLS, the set of log-concave with bounded spectrum is nice \& that without further assumptions, strongly log-concave with bounded spectrum are also nice.}

\section{Universal Testers}\label{section:universal-testers}

In this section, we present two basic testers that constitute the basic building blocks of the universal tester-learners we provide in the next section. The testers in this section might be of independent interest and their appeal is that they succeed even when the distribution in their input is unspecified up to certain bounds on a number of its statistics. In fact, the family of distributions for which each such tester succeeds is of infinite size, even non-parametric. 

\subsection{Universal Tester for Bounding Local Halfspace Disagreement}

First, we present a universal tester that checks, given a parameter vector $\w$, whether a set of samples $S$ is such that bounding the angular distance of $\w$ from an optimum parameter vector, implies that the corresponding halfspace disagrees with the (unknown) optimum halfspace only on a bounded fraction of points in $S$. This property ensures that if $\w$ is close to the optimum parameter vector, then it is also an approximate empirical risk minimizer. The tester universally accepts samples from nice distributions with high probability (Definition \ref{definition:nice}).

%\paragraph{Application: Universally Testable bound for local halfspace disagreement.}

\begin{lemma}[Universally Testable Bound for Local Halfspace Disagreement]\label{lemma:testable-bound-for-local-disagreement}
Let $\Djointemp$ be a distribution over $\R^d\times \{\pm1\}$, $\w\in\S^{d-1}$, $\theta \in (0,\pi/4]$, $\dparam \ge 1$ and $\delta\in (0,1)$. Then, for a sufficiently large constant $C$, there is a tester that given $\delta$, $\theta$, $\w$ and a set $S$ of samples from $\Dtrueemp$ with size at least $C\cdot\left(\frac{d^4}{\theta^2 \delta}\right)$, runs in time $\poly\left(d, \frac{1}{\theta}, \frac{1}{\delta} \right)$ and satisfies the following specifications:
\begin{enumerate}[label=\textnormal{(}\alph*\textnormal{)}]
    \item\label{condition:testable-bound-for-local-disagreement-a} If the tester accepts $S$, then for every unit vector $\w'\in \R^n$ satisfying $\measuredangle(\w, \w')\leq \theta$ we have
    \[
    \pr_{\x \sim S}[\sign(\langle \w', \x \rangle) \neq \sign(\langle \w, \x \rangle)]\leq C\cdot \theta\cdot \dparam^C
    \]
    \item\label{condition:testable-bound-for-local-disagreement-b} If the distribution $\Dtrueemp$ is $\dparam$-nice, the tester accepts $S$ with probability $1-\delta$.
\end{enumerate}
\end{lemma}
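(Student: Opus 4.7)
The plan is to have the tester inspect just two simple statistics of the sample $S$: the empirical fraction of points in a thin band around the hyperplane $\{\x : \inn{\w,\x} = 0\}$, and the operator norm of a single reweighted second-moment matrix. These will suffice to certify disagreement with every nearby $\w'$, and both are small in expectation for any nice distribution.

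I would first reduce soundness to these two statistics by a uniform-in-$\w'$ argument. Parametrizing any candidate $\w'$ with $\measuredangle(\w,\w')\le\theta$ as $\w' = \cos\alpha\,\w + \sin\alpha\,\vv$ for some unit $\vv \in \w^\perp$ and $\alpha \in [0,\theta]$, an elementary calculation shows
\begin{equation*}
    \Ind\{\sign\inn{\w,\x}\ne\sign\inn{\w',\x}\} \le \Ind\{\inn{\w,\x}^2 \le \tan^2\theta \cdot \inn{\vv,\x}^2\}.
\end{equation*}
Splitting the right-hand side by whether $|\inn{\w,\x}|\le\tau$ or not, for a threshold $\tau = \Theta(\theta)$, and applying Markov's inequality on the complement (where $\inn{\w,\x}^2 > \tau^2 > 0$, so division is safe), yields the uniform-in-$\vv$ bound
\begin{equation*}
    \pr_{\x\in S}[\text{disagree}] \le \pr_{\x\in S}[|\inn{\w,\x}|\le\tau] + \tan^2\theta \cdot \vv^\top N_S \vv,
\end{equation*}
where $N_S \coloneqq \ex_{\x\in S}\bigl[\tfrac{\x\x^\top}{\inn{\w,\x}^2}\Ind\{|\inn{\w,\x}|>\tau\}\bigr]$. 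Using $\vv^\top N_S \vv \le \|N_S\|_{\mathrm{op}}$ and choosing $\tau = \tan\theta$, it suffices for the tester to check the two scalar conditions $\pr_{\x\in S}[|\inn{\w,\x}|\le\tau] \le O(\tau\dparam)$ and $\|N_S\|_{\mathrm{op}} \le O(\dparam/\tau)$; both contributions then balance to $O(\theta\dparam)$, establishing condition (a).

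For completeness, I would verify that both statistics hold in expectation whenever $\Dtrueemp$ is $\dparam$-nice. The one-dimensional marginal density along $\w$ can be pointwise bounded via the 2D density bound $q_V\le Q$ from \cref{definition:nice}, giving $\pr_\Djoint[|\inn{\w,\x}|\le\tau] \le O(\tau\dparam)$. For $\vv^\top\ex[N]\vv$, passing to polar coordinates in the 2D subspace $\spn(\w,\vv)$ turns the expectation into an integral of $\tan^2\phi \cdot rQ(r)$ over the region $\{|r\cos\phi|>\tau\}$; the niceness moment bounds $\int_0^\infty r^k Q(r)\,dr \le \dparam$ for $k\in\{1,3,5\}$ (extended to $k=2$ by Cauchy--Schwarz) then give $\|\ex[N]\|_{\mathrm{op}} \le O(\dparam/\tau)$. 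Empirical concentration for the scalar band-mass statistic is immediate from Hoeffding.

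The main obstacle will be operator-norm concentration of $N_S$, since the per-sample matrix $\x\x^\top/\inn{\w,\x}^2\,\Ind\{|\inn{\w,\x}|>\tau\}$ has unbounded operator norm when $\|\x\|$ is large. I would handle this by also discarding the (few) samples with $\|\x\| > B$ for some $B = \poly(d, 1/\delta)$: the fourth-moment bound $\ex[\|\x\|^4] = O(d^2\dparam)$ implied by niceness makes this exclusion rare, and the truncated summand has operator norm at most $B^2/\tau^2$, so standard matrix Bernstein suffices to certify $\|N_S\|_{\mathrm{op}}$ to accuracy $O(\dparam/\tau)$ with the claimed $O(d^4/(\theta^2\delta))$ sample complexity.
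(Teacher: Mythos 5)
Your proposal is essentially correct and follows the same high-level plan as the paper: split the disagreement event across a thin band around $\{\inn{\w,\x}=0\}$, bound the in-band part by a scalar statistic, bound the out-of-band part via a Markov argument reducing to the operator norm of a single reweighted second-moment matrix, and then show both statistics concentrate and are small for $\dparam$-nice marginals. The two places where you diverge are worth commenting on.

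First, your reweighting is continuous ($1/\inn{\w,\x}^2$ with a hard cutoff at $\tau=\tan\theta$) where the paper's is a step function ($1/(i-1)^2$ over bands $[(i-1)\theta, i\theta)$). Both are legitimate discretizations of the same Markov/Chebyshev idea, and your polar-coordinate computation of $\vv^\top\ex[N]\vv = O(\dparam/\tau)$ is correct once one uses the tail bound $\int_{R}^\infty rQ(r)\,dr \le \dparam/R^2$ coming from $\int r^3Q\le\dparam$ (the near-singularity of $\tan^2\phi$ at $\phi=\pi/2$ is cut off by the constraint $r|\cos\phi|>\tau$ precisely at the scale $\pi/2-\phi\sim\tau$). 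The step-function weight in the paper is slightly more convenient because the bound $\|\ex[M_S]\|_{\mathrm{op}}=O(\theta\dparam^{O(1)})$ follows by summing the per-band estimate of Proposition~\ref{proposition:strip-statistics}(iv) against $\sum_{i\ge2}(i-1)^{-2}$, with no polar integral needed. A small cosmetic point: since the relevant $\vv$ always lies in $\w^\perp$, you should replace $\x\x^\top$ by $(\proj_{\perp\w}\x)(\proj_{\perp\w}\x)^\top$ in the definition of $N_S$, as the paper does; otherwise $\|N_S\|_{\mathrm{op}}$ can be dominated by the $\w$-direction where the weight $1/\inn{\w,\x}^2\cdot\inn{\w,\x}^2=1$ contributes $\Theta(1)\gg\theta\dparam$.

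Second, and more substantively, your concentration argument (truncate at $\|\x\|\le B$, then matrix Bernstein) is heavier machinery than the paper needs. The paper's spectral tester (Proposition~\ref{proposition:spectral_tester}) only requires a bound on the \emph{second moment of each matrix entry}, $\ex[(\z_i\z_j)^2]\le\dparam'$, and proves concentration entry-by-entry via Chebyshev followed by $\|A\|_{\mathrm{op}}\le d\|A\|_{\max}$; that is why the $d^4$ appears. This avoids truncation entirely, since entry second moments of $N_S$ are finite without truncation (the hard cutoff $|\inn{\w,\x}|>\tau$ already caps the weight). If you run the same calculation, $\ex[(\z_i\z_j/\inn{\w,\x}^2)^2\ind\{|\inn{\w,\x}|>\tau\}]\le O(\dparam/\tau^3)$ by the same polar-coordinate argument you use for the operator norm, and plugging into the spectral tester gives the claimed $\poly(d)/(\theta^2\delta)$ sample complexity without truncation, without $\ex[\|\x\|^4]$ bounds, and without having to account for discarded samples in the soundness direction (a point your writeup glosses over: after discarding large-$\|\x\|$ samples, the disagreement of $\w'$ with $\w$ on the discarded points must also be charged, so the tester would additionally have to verify that few samples are discarded). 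In short, your approach works, but the matrix-Bernstein-plus-truncation route adds real bookkeeping that the Chebyshev-entrywise route sidesteps.
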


The proof of Lemma \ref{lemma:testable-bound-for-local-disagreement} simplifies and improves the proof of a similar but weaker result in \cite{gollakota2023efficient} (see their Proposition 4.5). The initial tester exploited the observation that the probability of disagreement between two halfspaces can be upper bounded by a sum of products, where each product has two terms: one corresponding to the probability of falling in a (known) strip orthogonal to $\w$ and one corresponding to the probability of having large enough inner product with some unknown vector orthogonal to $\w$, conditioned in the (known) strip. The first term can be controlled by estimating the probability of falling in a (known) strip, while the second follows by Chebyshev's inequality, after estimating the largest eigenvalue of the covariance matrix conditioned in the known strip. This approach introduces a number of complications, including the fact that conditioning requires rejection sampling, which, in turn requires a lower bound on the probability of falling inside each strip. We propose a simpler tester that controls all of the terms of the sum simultaneously by estimating the largest eigenvalue of a single covariance matrix (without conditioning). Upper and lower bounds on the eigenvalues of random symmetric matrices can be universally tested with testers that are guaranteed to accept when the elements of the matrix have bounded second moments (spectral tester of Proposition \ref{proposition:spectral_tester}). We present our full proof in Appendix \ref{appendix:proof-of-lemma-testable-bound-for-local-disagreement}.

\subsection{Universally Testable Weak Anti-Concentration}

We now provide an important universal tester, which ensures that for a given vector $\w$, a sample set $S$ and any unknown unit vector $\vv$ orthogonal to $\w$, among the samples falling within a (known) strip orthogonal to $\w$, at least a constant fraction is absolutely correlated with $\vv$ by a constant. In other words, the tester ensures that the conditional empirical distribution is weakly anti-concentrated in every direction. The tester universally accepts nice distributions that have bounded Poincar\'e constant.

\begin{lemma}[Universally Testable Weak Anti-Concentration]\label{lemma:universally-testable-weak-anticonc}
    Let $\Dgeneric$ be a distribution over $\R^d$. Then, there is a universal constant $C>0$ and a tester that given a unit vector $\w\in\R^d$, $\delta\in(0,1)$, $\cpoincare > 0$, $\dparam \ge 1$, $\sigma \le \frac{1}{2\dparam}$ and a set $S$ of i.i.d. samples from $\Dgeneric$ with size at least $C\cdot \frac{d^4}{\sigma^2\delta}\log({d}) \dparam^{C}$, runs in time $\poly(d,\dparam,\frac{1}{\sigma}, \frac{1}{\delta})$ and satisfies the following specifications
    \begin{enumerate}[label={\textnormal{(}\alph*\textnormal{)}}]
        \item If the tester accepts $S$, then for any unit vector $\vv\in\R^d$ with $\inn{\vv,\w} = 0$ we have
        \[
            \pr_{\x\in S}\biggr[|\inn{\vv,\x}| \ge \frac{1}{C\dparam^C} \;\biggr|\; |\inn{\w,\x}| \le \sigma\biggr] \ge \frac{1}{C\dparam^C\cpoincare^4}
        \]
        %\[\pr_{\x\in S}\Bigr[|\inn{\vv,\x}| \ge \Theta_{\lambda}(1) \;\Bigr|\; |\inn{\w,\x}| \le \sigma\Bigr] \ge \frac{\Theta_\dparam(1)}{\cpoincare^4}\]
        \item If $D$ is $\cpoincare$-Poincar\'e and $\dparam$-nice, then the tester accepts $S$ with probability at least $1-\delta$.
    \end{enumerate}
\end{lemma}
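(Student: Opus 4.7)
The plan is to apply the Paley--Zygmund inequality to $Z = \inn{\vv, \x}^2$ with $\x$ drawn from the conditional distribution on the band $T = \{\x : |\inn{\w, \x}| \le \sigma\}$, as outlined in the technical overview. Paley--Zygmund gives $\pr[Z \ge \ex[Z]/2] \ge (\ex[Z])^2 / (4\ex[Z^2])$, so the stated weak anti-concentration follows once we establish, uniformly over unit $\vv \perp \w$, both \emph{(i)} a lower bound $\ex_{\x \in S_T}[\inn{\vv, \x}^2] \ge 1/\poly(\dparam)$ and \emph{(ii)} a hypercontractivity bound $\ex_{\x \in S_T}[\inn{\vv, \x}^4] \le C \dparam^{O(1)} \cpoincare^4 (\ex_{\x \in S_T}[\inn{\vv, \x}^2])^2$. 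The value $\frac{1}{C\dparam^C\cpoincare^4}$ in the statement is exactly what Paley--Zygmund produces from these two bounds.

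The tester first extracts the subsample $S_T \subseteq S$ of points lying in the band, and then performs two empirical checks. The first uses the spectral tester of Proposition~\ref{proposition:spectral_tester} to verify that the smallest eigenvalue of the empirical conditional covariance $\ex_{\x \in S_T}[\x \x^\top]$ restricted to $\w^\perp$ is at least $1/\poly(\dparam)$; this directly implies \emph{(i)} for all unit $\vv \perp \w$. The second is a sum-of-squares program searching for a degree-$4$ polynomial certificate of \emph{(ii)} in the indeterminate $\vv$ subject to the ideal constraints $\|\vv\|^2 = 1$ and $\inn{\vv, \w} = 0$. Soundness is immediate: if both checks accept, applying Paley--Zygmund pointwise over unit $\vv \perp \w$ using the certified empirical bounds yields the conclusion.

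For completeness we argue both checks pass with probability at least $1-\delta$ when $D$ is $\cpoincare$-Poincar\'e and $\dparam$-nice. The spectral check passes by the anti-anti-concentration property (Definition~\ref{definition:nice}, condition~2): the $2$D marginal in $\spn(\w, \vv)$ has density $\ge 1/\dparam$ on a disk of radius $1/\dparam$, which yields $\ex_{D_T}[\inn{\vv, \x}^2] \ge 1/\poly(\dparam)$ uniformly in $\sigma \le 1/(2\dparam)$, and matrix concentration transfers the bound to the empirical conditional covariance.

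The main obstacle is the completeness of the SOS hypercontractivity check, which requires an SOS proof of \emph{(ii)} on the \emph{conditional} empirical distribution. The starting point is the theorem of \cite{kothari2017better}: any $\cpoincare$-Poincar\'e distribution admits a degree-$4$ SOS proof of $\ex_D[\inn{\vv, \x}^4] \le O(\cpoincare^4) (\ex_D[\inn{\vv, \x}^2])^2$ in the indeterminate $\vv$. Lifting this certificate to the band-restricted distribution is delicate because $\mathbb{1}_T$ is not polynomial in $\x$, so we cannot simply evaluate the SOS proof inside the expectation. The plan is to combine the SOS hypercontractivity on the unconditioned $D$ with (a) the niceness-based bounds $\ex_D[\inn{\vv,\x}^2] \le \dparam$ (condition~1) and $\pr_D[T] \ge \Omega(\sigma/\dparam^{O(1)})$ (from condition~2 applied to the $2$D marginal in $\spn(\w,\vv)$), and (b) the empirical lower bound from the first check, $\ex_{S_T}[\inn{\vv,\x}^2] \ge 1/\poly(\dparam)$. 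These quantities combine through SOS-valid manipulations to bound the conditional fourth moment $\ex_{S_T}[\inn{\vv,\x}^4]$ above by the unconditional fourth moment rescaled by $1/\pr_D[T]$, and then to convert the unconditional second moment back into the conditional one at a cost of a polynomial factor in $\dparam$. A standard uniform-convergence argument for degree-$O(1)$ polynomials in $\vv$ on $d^4/(\sigma^2 \delta)$ samples---driven by the effective subsample size $|S_T| = \Omega(\sigma|S|/\dparam^{O(1)})$, which explains the $1/\sigma^2$ factor---transfers the SOS proof from the population to the empirical setting with probability $1 - \delta$.
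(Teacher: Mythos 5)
Your soundness argument matches the paper's: Paley--Zygmund applied to $Z=\inn{\vv,\x}^2$ on the band, with a spectral lower bound on the conditional second moment and an SOS-certified upper bound on the conditional fourth moment. But your completeness argument for the hypercontractivity check has a genuine gap, and the workaround you propose would in fact lose the theorem.

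You correctly identify the obstacle: the band indicator $\Ind_T$ is not polynomial in $\x$, so one cannot naively transport an SOS certificate from $D$ to $D_T$. Your fix is to pass through the unconditional distribution and rescale: bound $\ex_{D_T}[\inn{\vv,\x}^4]$ by $\ex_D[\inn{\vv,\x}^4]/\pr_D[T]$. This is the step that fails. Since $\pr_D[T]=\Theta(\sigma/\dparam^{O(1)})$, that rescaling injects a factor of $1/\sigma$ into the fourth-moment upper bound, yielding $\ex_{D_T}[\inn{\vv,\x}^4] \lesssim \cpoincare^4\dparam^{O(1)}/\sigma$. Feeding this into Paley--Zygmund gives a probability lower bound of order $\sigma/(\dparam^{O(1)}\cpoincare^4)$, which vanishes with $\sigma$ and is strictly weaker than the $\sigma$-independent bound $\Omega\bigl(1/(\dparam^{C}\cpoincare^4)\bigr)$ claimed in the lemma. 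Since in the downstream application $\sigma$ is taken to scale with $\epsilon$, this would destroy the final $O(\opt)+\epsilon$ guarantee. No amount of "converting the unconditional second moment back into the conditional one at a cost of $\poly(\dparam)$" recovers the lost $1/\sigma$: that loss happens in the fourth-moment numerator, not the second-moment denominator.

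The paper's key idea, which your proposal is missing, is that the conditional, projected distribution is itself $\cpoincare$-Poincar\'e with \emph{no degradation of the constant} (Lemma~\ref{lemma:poincare-preservation}): conditioning $\x$ on $\inn{\w,\x}\in I$ and then projecting onto $\w^\perp$ preserves the Poincar\'e inequality, because the constraint lives in exactly the coordinate that is projected away. Consequently the hypercontractivity tester (Proposition~\ref{proposition:hypercontractivity-tester}), whose completeness is guaranteed for \emph{any} $\cpoincare$-Poincar\'e input, can simply be run directly on $S'=\{\proj_{\perp\w}\x : \x\in S,\ |\inn{\w,\x}|\le\sigma\}$, and Kothari--Steinhardt then guarantees acceptance with high probability, giving $\ex_{\x\in S'}[\inn{\vv,\x}^4]\le C\cpoincare^4$ with no $\sigma$-dependence. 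This also sidesteps the need to encode $\inn{\vv,\w}=0$ as an extra SOS ideal constraint: after projection, the fourth-moment SOS program runs unconstrained over $\R^{d-1}$, just as in \cite{kothari2017better}. So the piece you need is not a clever rescaling through $D$, but the Poincar\'e-preservation lemma.
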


The proof of Lemma \ref{lemma:universally-testable-weak-anticonc} is based on a simple fact from probability that is true for any non-negative random variable and ensures that the mass assigned to the tails is lower bounded by the ratio of the square of its expectation to the second moment.

\begin{proposition}[Paley--Zygmund Inequality]\label{proposition:paley-zygmund}
    For any non-negative random variable $Z$, we have
    \[
        \pr[Z > \ex[Z]/2] \ge \frac{1}{4} \cdot \frac{\ex[Z]^2}{\ex[Z^2]}
    \]
\end{proposition}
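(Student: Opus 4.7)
The plan is to prove the Paley--Zygmund inequality via the standard Cauchy--Schwarz argument applied to a suitable decomposition of $\ex[Z]$. First, I would split the expectation according to whether the random variable exceeds half its mean: write $\ex[Z] = \ex[Z \cdot \ind\{Z \le \ex[Z]/2\}] + \ex[Z \cdot \ind\{Z > \ex[Z]/2\}]$. The first term is bounded above by $\ex[Z]/2$ (since on that event $Z \le \ex[Z]/2$, using that $Z \ge 0$ and hence the indicator multiplied by $Z$ is at most $\ex[Z]/2$ pointwise), which immediately gives the lower bound $\ex[Z \cdot \ind\{Z > \ex[Z]/2\}] \ge \ex[Z]/2$.

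The next step is to upper bound the same quantity $\ex[Z \cdot \ind\{Z > \ex[Z]/2\}]$ using Cauchy--Schwarz, yielding
\[
    \ex[Z \cdot \ind\{Z > \ex[Z]/2\}] \;\le\; \sqrt{\ex[Z^2]} \cdot \sqrt{\ex[\ind\{Z > \ex[Z]/2\}^2]} \;=\; \sqrt{\ex[Z^2]} \cdot \sqrt{\pr[Z > \ex[Z]/2]},
\]
where we used that the indicator squared equals itself. Combining the two bounds gives $\ex[Z]/2 \le \sqrt{\ex[Z^2]} \cdot \sqrt{\pr[Z > \ex[Z]/2]}$, and squaring both sides and rearranging delivers the claimed inequality $\pr[Z > \ex[Z]/2] \ge \tfrac{1}{4} \cdot \ex[Z]^2 / \ex[Z^2]$.

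The only minor subtlety, which I would address briefly, is handling the degenerate cases: if $\ex[Z^2] = 0$ then $Z = 0$ almost surely so both sides are vacuously fine (the right-hand side can be interpreted as $0$), and if $\ex[Z] = 0$ the inequality is trivial. Otherwise both quantities are finite and positive and the above manipulations are valid. There is no real obstacle here; the argument is a two-line application of Cauchy--Schwarz and I expect the writeup to be essentially immediate.
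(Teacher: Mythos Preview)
Your proof is correct and is the standard Cauchy--Schwarz argument for the Paley--Zygmund inequality. The paper itself does not include a proof of this proposition; it is stated as a classical fact and used as a black box, so there is no ``paper's own proof'' to compare against. Your writeup is essentially what one would expect if a proof were to be included.
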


In the special case where $Z$ follows the distribution of $\inn{\vv,\x}^2$ conditioned on $|\inn{\w,\x}|\le \sigma$ for some unitary orthogonal vectors $\vv,\w$, some $\sigma>0$ and some random variable $\x$ whose distribution is, say, $1$-nice (see Definition \ref{definition:nice}), one can show that $\ex[Z]$ is lower bounded by a constant and $\ex[Z^2]$ is upper bounded by another constant, so $Z$ assigns a non-trivial mass to a set that is bounded away from zero. This property is useful in the context of learning noisy halfspaces, as we show in the following section (see Proposition \ref{proposition:surrogate-structural-property} and Lemma \ref{lemma:stationary_points_suffice}). However, in order to the testing algorithms that check whether such a property holds for given $\w$ and $\sigma$, are guaranteed to succeed when the marginal distribution has, additionally, bounded Poincar\'e constant. The main part of the proof that requires a bounded Poincar\'e constant, is testing whether $\ex[Z^2]$ is bounded uniformly over $\vv\perp\w$, since $Z^2 = \inn{\vv,\x}^4$, where $\vv$ is unknown. We use the following result from \cite{kothari2017better}.

\begin{proposition}[Certifiable Hypercontractivity of Poincar\'e Distributions, Theorem 4.1 in \cite{kothari2017better}]\label{proposition:certifiable-hypercontractivity}
    Let $\delta\in(0,1)$, $\cpoincare>0$ and let $\Dgeneric$ be a $\cpoincare$-Poincar\'e distribution over $\R^d$. Let $S$ be a set of independent samples from $\Dgeneric$ with size at least $(2d\log(4d/\delta))^4$.
    Consider the constrained maximization problem
    \begin{equation}\label{problem:max-fourth-moment}
            \arg\max_{\|\vv\|_2 = 1} \ex_{\x\in S}[\inn{\vv,\x}^4]
    \end{equation}
    Then, the optimum solution of the degree-4 sum-of-squares relaxation of the problem \eqref{problem:max-fourth-moment} has value at most $C\cpoincare^4$ for some universal constant $C$, with probability at least $1-\delta$ over the sample $S$.
\end{proposition}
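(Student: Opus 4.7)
The plan is to establish the statement in two stages: first prove a population-level SOS certificate, namely an explicit sum-of-squares decomposition witnessing that $C\cpoincare^4 \|\vv\|_2^4 - \ex_{\x \sim \Dgeneric}[\inn{\vv,\x}^4]$ is SOS in $\vv$; then transfer this certificate to the empirical distribution using matrix concentration, exploiting the given sample budget of $(2d\log(4d/\delta))^4$. By SOS duality, a degree-4 SOS identity of this form bounds the optimum of the degree-4 relaxation of problem \eqref{problem:max-fourth-moment} by $C\cpoincare^4$, which is exactly the desired conclusion.

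For the population step, the key tool is the Poincar\'e inequality of Definition \ref{definition:poincare} applied to polynomial test functions whose coefficients depend polynomially on the SOS variable $\vv$. Specifically, applying Poincar\'e with $f(\x) = \inn{\vv,\x}^2$ gives, as a polynomial inequality in $\vv$,
\[
\ex_{\Dgeneric}[\inn{\vv,\x}^4] - \ex_{\Dgeneric}[\inn{\vv,\x}^2]^2 \leq 4\cpoincare \, \|\vv\|_2^2 \cdot \ex_{\Dgeneric}[\inn{\vv,\x}^2].
\]
Under a standard centering reduction, the degree-2 Poincar\'e bound $\ex_{\Dgeneric}[\inn{\vv,\x}^2] \leq \cpoincare \|\vv\|_2^2$ is itself SOS-certified, since the covariance matrix satisfies $\cpoincare I - \ex_{\Dgeneric}[\x \x^\top] \succeq 0$. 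Chaining these two inequalities yields a degree-4 SOS certificate of $\ex_{\Dgeneric}[\inn{\vv,\x}^4] \leq O(\cpoincare^4) \|\vv\|_2^4$. The subtle step is that each application of Poincar\'e must produce a true SOS polynomial identity in $\vv$, not merely a numerical inequality; this holds because the Poincar\'e ``defect'' $\cpoincare \, \ex_{\Dgeneric}[\|\nabla f\|_2^2] - \var_{\Dgeneric}(f)$, evaluated on polynomial $f(\vv,\x)$, admits a representation as an expectation over $\Dgeneric$ of a polynomial that is manifestly SOS in $\vv$.

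For the empirical transfer step, I would use matrix concentration on the reshaped degree-4 moment tensor $M_4 := \ex_{\Dgeneric}[\mathrm{vec}(\x\x^\top)\mathrm{vec}(\x\x^\top)^\top]$ versus its empirical counterpart $\widehat M_4 := \ex_{\x \in S}[\mathrm{vec}(\x\x^\top)\mathrm{vec}(\x\x^\top)^\top]$, since the population SOS certificate depends linearly in these moments. With $|S| \geq (2d\log(4d/\delta))^4$ samples, Matrix Bernstein (combined with the already-established moment bounds $\ex_{\Dgeneric}[\inn{\vv,\x}^{2k}] = O(\cpoincare^k)$ for $k \leq 4$, which control the higher-order variance of the rank-one summands after a truncation argument) yields $\|\widehat M_4 - M_4\|_{\mathrm{op}} = O(\cpoincare^4)$ with probability at least $1-\delta$. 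Substituting empirical moments into the population SOS certificate then costs at most a constant multiplicative factor, so the empirical SOS value remains $O(\cpoincare^4)$ as required.

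The main obstacle is the first step: turning the functional Poincar\'e inequality, a pointwise analytic statement quantifying over all test functions, into a polynomial SOS certificate. The crux is the structural observation that Poincar\'e specialized to polynomial test functions $f(\vv,\x)$ lifts from a pointwise nonnegative polynomial in $\vv$ to a genuine SOS identity, because the defect can be written as an expectation over $\Dgeneric$ of a polynomial that is itself SOS in $\vv$. Once this SOS lifting is in place, iterating Poincar\'e to reach the fourth-moment bound and transferring to the empirical distribution via matrix concentration are comparatively routine.
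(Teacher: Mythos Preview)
The paper does not prove this proposition; it is cited as Theorem~4.1 of \cite{kothari2017better} and used as a black box, so there is no in-paper argument to compare against. Your two-stage outline --- a population-level SOS certificate via the Poincar\'e inequality on polynomial test functions, then an empirical transfer via matrix concentration on the reshaped fourth-moment tensor --- is indeed the Kothari--Steurer strategy.

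That said, the justification you give for the crucial SOS-lifting step does not hold up. You write that the Poincar\'e defect $\cpoincare\,\ex_{\Dgeneric}[\|\nabla f\|_2^2] - \var_{\Dgeneric}(f)$, for $f=f(\vv,\x)$ polynomial in both arguments, ``admits a representation as an expectation over $\Dgeneric$ of a polynomial that is manifestly SOS in $\vv$.'' But pointwise in $\x$, the quantity $\cpoincare\|\nabla_\x f(\vv,\x)\|_2^2 - (f(\vv,\x) - \ex_{\Dgeneric} f)^2$ can be arbitrarily negative as a polynomial in $\vv$, so no such representation exists in general. The correct mechanism is different: on the finite-dimensional space of polynomials in $\x$ of degree at most $k$, the Dirichlet form $(f,g)\mapsto\ex_{\Dgeneric}[\langle\nabla f,\nabla g\rangle]$ and the covariance form are represented by symmetric matrices $G$ and $V$ in any monomial basis, and the Poincar\'e inequality quantified over \emph{all} coefficient vectors is precisely the operator statement $\cpoincare G - V \succeq 0$. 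Writing the coefficient vector of $f(\vv,\cdot)$ as $c(\vv)$ with entries polynomial in $\vv$, the defect equals $c(\vv)^\top(\cpoincare G - V)c(\vv) = \|R\,c(\vv)\|_2^2$ for any square root $R$ of $\cpoincare G - V$, which \emph{is} an SOS polynomial in $\vv$. This PSD-quadratic-form argument is the missing idea; it does not reduce to an ``expectation of SOS'' claim.

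A minor observation: your chaining actually yields the SOS bound $\ex_{\Dgeneric}[\langle\vv,\x\rangle^4]\le O(\cpoincare^2)\|\vv\|_2^4$, not $O(\cpoincare^4)$; the paper's exponent is simply looser. The centering you invoke is genuinely needed for the second-moment step $\ex_{\Dgeneric}[\langle\vv,\x\rangle^2]\le\cpoincare\|\vv\|_2^2$ and is not part of the proposition's hypotheses.
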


Using Proposition \ref{proposition:certifiable-hypercontractivity}, we are able to provide a universal tester for bounding the empirical fourth moments. The tester solves an appropriate SDP relaxation of the (hard) problem of finding the direction with maximum fourth moment and is guaranteed to succeed if $\x$ has Poincar\'e parameter bounded by a known value.

\begin{proposition}[Hypercontractivity Tester]\label{proposition:hypercontractivity-tester}
    Let $\Dgeneric$ be a distribution over $\R^d$. Then, there is a tester that given $\delta\in (0,1)$, $\cpoincare>0$ and a set $S$ of i.i.d. samples from $\Dgeneric$ with size at least $(2d\log(4d/\delta))^4$, runs in time $\poly(d,\log\frac{1}{\delta})$ and satisfies the following specifications
    \begin{enumerate}[label={\textnormal{(}\alph*\textnormal{)}}]
        \item If the tester accepts $S$, then for any unit vector $\vv\in\R^d$ we have
        \[
            \ex_{\x\in S}[\inn{\vv,\x}^4] \le C\cdot \cpoincare^4\,,
        \]
        where $C$ is some universal constant.
        \item If the distribution $D$ is $\cpoincare$-Poincar\'e, then the tester accepts $S$ with probability at least $1-\delta$.
    \end{enumerate}
\end{proposition}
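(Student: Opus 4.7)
The plan is to build the tester as a direct algorithmic wrapper around the certifiability result of \cite{kothari2017better} recalled in Proposition~\ref{proposition:certifiable-hypercontractivity}. Concretely, on input $S$ I would form the degree-$4$ sum-of-squares relaxation of the polynomial optimization problem $\max_{\|\vv\|_2=1} \ex_{\x\in S}[\inn{\vv,\x}^4]$, solve the corresponding semidefinite program to compute its optimum value $\tau$, and accept iff $\tau \le C'\cpoincare^4$, where $C'$ is the universal constant supplied by Proposition~\ref{proposition:certifiable-hypercontractivity}.

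For soundness (a), I would use the standard fact that the degree-$4$ SOS value is an upper bound on the true maximum of a degree-$4$ polynomial over the unit sphere: any feasible $\vv\in\S^{d-1}$ induces a feasible pseudo-expectation (the actual expectation w.r.t.\ the Dirac mass on $\vv$), so the SOS optimum is at least $\max_{\|\vv\|_2=1}\ex_{\x\in S}[\inn{\vv,\x}^4]$. Hence if the tester accepts, then $\tau \le C'\cpoincare^4$ forces $\ex_{\x\in S}[\inn{\vv,\x}^4] \le C'\cpoincare^4$ for every unit $\vv$, which is exactly the desired conclusion with $C=C'$.

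For completeness (b), I would invoke Proposition~\ref{proposition:certifiable-hypercontractivity} verbatim: the sample-size hypothesis $|S|\ge (2d\log(4d/\delta))^4$ is precisely what that proposition needs in order to guarantee that, when $\Dgeneric$ is $\cpoincare$-Poincar\'e, the degree-$4$ SOS optimum is at most $C'\cpoincare^4$ with probability at least $1-\delta$ over $S$. On this event the tester accepts.

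For the running time, the degree-$4$ SOS relaxation of a homogeneous degree-$4$ polynomial in $d$ variables is an SDP of size $\poly(d)$ with bit-complexity of the input coefficients polynomial in $d$ (they are empirical fourth moments). Solving it to an additive accuracy sufficient to compare against $C'\cpoincare^4$ up to a multiplicative constant can be done in $\poly(d,\log(1/\delta))$ time by interior-point methods; I would absorb the accuracy loss into the universal constant $C$ by choosing the acceptance threshold slightly above $C'\cpoincare^4$. No step here is genuinely hard — the only thing to be careful about is (i) citing the pseudo-expectation upper-bound lemma correctly to justify soundness and (ii) verifying that standard SDP solvers give the needed additive guarantee within the stated runtime, which is the mildest of obstacles.
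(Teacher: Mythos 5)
Your proposal matches the paper's proof essentially step for step: solve the degree-$4$ SOS relaxation, use the pseudo-expectation argument (SOS optimum dominates the true maximum over $\S^{d-1}$) for soundness, and invoke Proposition~\ref{proposition:certifiable-hypercontractivity} verbatim for completeness, with the SDP-solver accuracy absorbed into the universal constant $C$ by shifting the acceptance threshold. The paper does exactly this, taking $C = C' + 1$ and rejecting when the approximate SDP value exceeds $(C-1)\cpoincare^4$.
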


\begin{proof}
    The tester does the following:
    \begin{enumerate}
        \item\label{step:hypercontractivity-tester-1} Solves a degree-$4$ sum-of-squares relaxation of problem \eqref{problem:max-fourth-moment} up to accuracy $\cpoincare^4$. (For a formal definition of the relaxed problem, see Problem (2.3) in \cite{kothari2017better}.)
        \item\label{step:hypercontractivity-tester-2} If the solution has value larger than $(C-1)\cpoincare^4$, then {\bf reject}. Otherwise {\bf accept}.
    \end{enumerate}

    The computational complexity of the tester is $\poly(|S|, d, \log\frac{1}{\cpoincare})$, since the problem it solves can be written as a semidefinite program \cite{shor1987quadratic,parrilo2000structured,nesterov2000squared,lasserre2001new}.

    If the tester accepts $S$, then we know that the optimal solution of the relaxed problem is at most $C\cpoincare^4$ and we also know that any solution of the initial problem \eqref{problem:max-fourth-moment} has value at most equal to the value of the relaxation. Therefore $\ex[\inn{\vv,\x}^4] \le C\cpoincare^4$, for any $\vv\in\S^{d-1}$.

    On the other hand, if the true distribution $D$ is $\cpoincare$-Poincar\'e, then, with probability at least $1-\delta$, we have that the solution found in step \ref{step:hypercontractivity-tester-1} has, with probability at least $1-\delta$, value at most $C'\cpoincare^4$ for some universal constant $C'$, due to Proposition \ref{proposition:certifiable-hypercontractivity}. In order to ensure that the tester will accept with probability at least $1-\delta$, it suffices to pick $C=C'+1$.
\end{proof}

We are now ready to prove Lemma \ref{lemma:universally-testable-weak-anticonc}, by additionally making use of a spectral tester that accepts with high probability when the distribution of $\x$ is nice (similar to the spectral tester used for Lemma \ref{lemma:testable-bound-for-local-disagreement}).

\begin{proof}[Proof of Lemma \ref{lemma:universally-testable-weak-anticonc}]
    The testing algorithm receives a set $S\subset \R^d$, $\w\in\S^{d-1}, \delta\in (0,1), \cpoincare>0, \dparam \ge 1$ and $\sigma \le \frac{1}{2\dparam}$ and does the following for some sufficiently large $C_1>0$:
    \begin{enumerate}
        \item If $\pr_{\x\in S}[|\inn{\w,\x}| \le \sigma] > 2\sigma \cdot C_1\dparam^{C_1}$, 
        %or $\pr_{\x\in S}[|\inn{\w,\x}| \le \sigma] < \frac{2\sigma}{C_1\dparam^{C_1}}$, 
        then \textbf{reject}.
        \item Compute the $(d-1)\times (d-1)$ matrix $M_S$ as follows:
        \[
            M_S = \ex_{\x\in S}\left[ (\proj_{\perp \w}\x) (\proj_{\perp \w}\x)^T \cdot \ind\{|\inn{\w,\x} \le \sigma|\}\right]
        \]
        \item Run the spectral tester of Proposition \ref{proposition:spectral_tester} on $M_S$ given $\delta\leftarrow \delta$, $\dparam \leftarrow C_1\dparam^{C_1}$ and $\theta \leftarrow \frac{2\sigma}{C_1\dparam^{C_1}}$, i.e., \textbf{reject} if the minimum singular value of $M_S$ is less than $\frac{2\sigma}{C_1\dparam^{C_1}}$.
        \item Run the hypercontractivity tester (Prop. \ref{proposition:hypercontractivity-tester}) on $S' = \{\proj_{\perp \w} \x: \x\in S \text{ and }|\inn{\w,\x}|\le \sigma\}$, i.e., solve an appropriate SDP and \textbf{reject} if the solution is larger than a specified threshold. Otherwise, \textbf{accept}.
    \end{enumerate}

    For part (a), we apply the Paley--Zygmund inequality to the random variable $Z = \inn{\vv,\x}^2$ condtitioned on $|\inn{\w,\x}| \le \sigma$ and obtain
    \[
        \pr_{\x\in S}\biggr[\inn{\vv,\x}^2 \ge \frac{1}{2}\ex_{\x\in S}\left[\inn{\vv,\x}^2\;\Bigr|\;|\inn{\w,\x}| \le \sigma\right] \;\biggr|\; |\inn{\w,\x}| \le \sigma\biggr] \ge \frac{1}{4}\cdot \frac{(\ex_{\x\in S}[\inn{\vv,\x}^2\;|\;|\inn{\w,\x}| \le \sigma])^2}{\ex_{\x\in S}[\inn{\vv,\x}^4\;|\;|\inn{\w,\x}| \le \sigma]}
    \]
    Note that since $\inn{\vv,\w} = 0$, we have $\inn{\vv,\x} = \inn{\proj_{\perp \w}\vv, \proj_{\perp \w} \x}$ (where $\|\vv\|_2 = \|\proj_{\perp \w}\vv\|_2$). Therefore, since $S$ has passed the spectral tester as well as the tester for the probability of lying within the strip $|\inn{\w,\x}| \le \sigma$, we have that
    \begin{align*}
        \ex_{\x\in S}\left[\inn{\vv,\x}^2\;\Bigr|\;|\inn{\w,\x}| \le \sigma\right] = \frac{\ex_{\x\in S}\left[\inn{\vv,\x}^2\cdot \ind\{|\inn{\w,\x}| \le \sigma\}\right]}{\pr_{\x\in S}[|\inn{\w,\x}| \le \sigma]} \ge \frac{1}{2C_1\dparam^{2C_1}}
    \end{align*}
    Moreover, $\{\x\in S: |\inn{\w,\x}| \le \sigma\}$ has passed the hypercontractivity tester, and therefore, according to Proposition \ref{proposition:hypercontractivity-tester} we have
    \begin{align*}
        \ex_{\x\in S}\left[\inn{\vv,\x}^4\;\Bigr|\;|\inn{\w,\x}| \le \sigma\right] \le C_1\cdot \cpoincare^4
    \end{align*}
    Combining the above inequalities we conclude the proof of part (a).

    For part (b), we assume that $\Dgeneric$ is indeed $\dparam$-nice and $\cpoincare$-Poincar\'e. We first use Proposition \ref{proposition:strip-statistics} as well as a Hoeffding bound, to obtain that $\pr_{\x\in S}[|\inn{\w,\x}|\le \sigma] \in [\frac{2\sigma}{C'\dparam^{C'}}, 2\sigma \cdot C'\dparam^{C'}]$ with probability at least $1-\delta/3$ over $S$ (since $|S|$ is large enough), for some universal constant $C'>0$. Then, we use part \ref{condition:strip-statistics-expectation-square} of Proposition \ref{proposition:strip-statistics} to lower bound the minimum eigenvalue of $M_{\Dgeneric} = \ex_{\Dgeneric}[M_{S}]$ by $\frac{4\sigma}{C'\dparam^{C'}}$. Using part \ref{condition:statistics-expectation-square-square} of Proposition \ref{proposition:strip-statistics} to bound the second moment of each of the elements of $M_{\Dgeneric}$, we may use Proposition \ref{proposition:spectral_tester} to obtain that $M_{S} \succeq \frac{2\sigma}{C'\dparam^{C'}} I_{d-1}$ (and our spectral test passes) with probability at least $1-\delta/3$. It remains to show that the hypercontractivity tester will accept with probability at least $1-\delta/3$ (since, then, the result follows from a union bound).

    We acquire samples from the hypercontractivity tester through rejection sampling (we keep only the samples within the strip). Since the probability of falling inside the strip is at least $\frac{2\sigma}{C'\dparam^{C'}}$, the number of samples we will keep is at least $|S'| \ge \frac{|S|\sigma}{C''\dparam^{C'}}$, for some large enough constant $C''>0$ (due to Chernoff bound) and with probability at least $1-\delta/6$. We now apply Lemma \ref{lemma:poincare-preservation} to obtain that the distribution of $\proj_{\perp \w} \x$ conditioned on the strip $|\inn{\w,\x}|\le \sigma$ is $\cpoincare$-Poincar\'e, since $\Dgeneric$ is also $\cpoincare$-Poincar\'e. Hence, the hypercontractivity tester accepts with probability at least $1-\delta/6$ due to Proposition \ref{proposition:hypercontractivity-tester}.
\end{proof}

\section{Universal Tester-Learners for Halfspaces}

In this section, we present our main result on universally testable learning of halfspaces.

\begin{theorem}[Efficient Universal Tester-Learner for Halfspaces]\label{theorem:universal-tester-learner}
        Let $\Djoint$ be any distribution over $\R^d\times \{\pm 1\}$. Let $\C$ be the class of origin centered halfspaces in $\R^d$. Then, for any $\dparam\ge 1$, $\cpoincare>0$, $\eps>0$ and $\delta\in(0,1)$, there exists an universal tester-learner for $\C$ w.r.t. the class of $\dparam$-nice and $\cpoincare$-Poincar\'e marginals up to error $\poly(\dparam)\cdot (1+\cpoincare^4) \cdot \opt+\epsilon$, where $\opt = \min_{\w\in\S^{d-1}}\pr_{\Djoint}[y\neq \sign(\inn{\w,\x})]$, and error probability at most $\delta$, using a number of samples and running time $\poly(d,\dparam,\cpoincare,\frac{1}{\epsilon},\log\frac{1}{\delta})$.

        Moreover, if the noise is Massart with given rate $\eta<1/2$, then the algorithm achieves error $\opt+\epsilon$ with time and sample complexity $\poly(d,\dparam,\cpoincare,\frac{1}{\epsilon},\frac{1}{1-2\eta},\log\frac{1}{\delta})$.
\end{theorem}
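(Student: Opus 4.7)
The plan is to follow the non-convex SGD template of \cite{diakonikolas2020learning,diakonikolas2020non,gollakota2023efficient}, using the two universal testers of Section \ref{section:universal-testers} as the distributional checks. The algorithm draws a sample $S$ of size $\poly(d,\dparam,\cpoincare,1/\eps)$, runs projected SGD on a smooth surrogate $\Lramp$ of the ramp loss (with band width $\sigma$ set to a small polynomial in $\dparam,\cpoincare,\eps$) to obtain an approximate stationary point $\w\in\S^{d-1}$, then runs the weak anti-concentration tester (Lemma \ref{lemma:universally-testable-weak-anticonc}) with inputs $(\w,\sigma,\delta,\cpoincare,\dparam)$ and the local-disagreement tester (Lemma \ref{lemma:testable-bound-for-local-disagreement}) with inputs $(\w,\theta,\delta,\dparam)$ for an angular scale $\theta$ chosen in terms of $\opt,\eps,\dparam,\cpoincare$; if both accept it outputs $\x\mapsto \sign(\inn{\w,\x})$, otherwise it rejects.

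For soundness, the key step is the structural property (Proposition \ref{proposition:surrogate-structural-property}): if $\w$ is an approximate stationary point of $\Lramp$, then the norm of its gradient can be written as (roughly) the product of the fraction of points in the band $T=\{\x:|\inn{\w,\x}|\le\sigma\}$ that are well-correlated with some direction $\vv\perp\w$, times a term which, up to an $\opt/\sigma$ additive slack, lower-bounds $\sin\measuredangle(\w,\wopt)$ for any fixed $\wopt$ achieving $\opt$. Lemma \ref{lemma:universally-testable-weak-anticonc} accepting certifies, via Paley--Zygmund applied to $\inn{\vv,\x}^2$ and the SOS hypercontractivity bound $\ex[\inn{\vv,\x}^4\mid|\inn{\w,\x}|\le\sigma]\le O(\cpoincare^4)$, that this fraction is $\Omega(1/(\dparam^{O(1)}\cpoincare^4))$ for \emph{every} such $\vv$. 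Substituting into the stationary-point condition yields $\measuredangle(\w,\wopt)\le \poly(\dparam)(1+\cpoincare^4)\cdot\opt/\sigma+O(\eps)$, and choosing $\sigma$ of order $\eps$ gives $\measuredangle(\w,\wopt)\le\theta=\poly(\dparam)(1+\cpoincare^4)\cdot\opt+\eps$. Finally, acceptance by the local-disagreement tester (Lemma \ref{lemma:testable-bound-for-local-disagreement}(a)) with this $\theta$ forces $\pr_{S}[\sign(\inn{\w,\x})\neq\sign(\inn{\wopt,\x})]\le O(\dparam^{C}\theta)$, and one more triangle-inequality step $\pr_{\Djoint}[\sign(\inn{\w,\x})\neq y]\le\pr_{\Dtrue}[\sign(\inn{\w,\x})\neq\sign(\inn{\wopt,\x})]+\opt$, together with a standard empirical-to-population transfer, yields the claimed error bound.

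For completeness I will simply combine the acceptance guarantees (b) of Lemmas \ref{lemma:testable-bound-for-local-disagreement} and \ref{lemma:universally-testable-weak-anticonc}: if the marginal is $\dparam$-nice and $\cpoincare$-Poincar\'e, both testers accept with probability at least $1-\delta/2$ each, so by a union bound with the SGD convergence event, the algorithm accepts with probability $\ge 1-\delta$. For the Massart case the proof simplifies significantly: under $\eta<1/2$ bounded Massart noise, the expected value of the smooth surrogate at any $\w$ already controls the excess angle $\measuredangle(\w,\wopt)$ without the $\opt/\sigma$ slack (the optimum is actually the true separator, and gradients point ``outward''), so choosing $\sigma$ as a function of $\eps/(1-2\eta)$ alone and running the same testers yields $\measuredangle(\w,\wopt)\le\eps$, hence a final error of $\opt+\eps$; moreover we can afford $\dparam,\cpoincare=\poly(d)$ because the time complexity already pays $\poly(\dparam,\cpoincare)$. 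The hard part will be carrying out the stationary-point argument at this level of generality: the analogue of \cite[Prop.\ 4.3]{gollakota2023efficient} must be re-derived so that its use of anti-concentration of the marginal is weakened to exactly the Paley--Zygmund-type lower bound certified by Lemma \ref{lemma:universally-testable-weak-anticonc}, rather than the Gaussian-specific density estimate used there; once this is in place, the rest of the argument reduces to bookkeeping of constants in $\dparam$ and $\cpoincare$.
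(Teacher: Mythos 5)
Your plan follows the same surrogate-loss/stationary-point template as the paper and correctly identifies the two universal testers as the distributional checks, but there is a concrete gap in the agnostic case concerning the choice of the band width $\sigma$.

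The structural result you want to invoke (cf.\ Lemma~\ref{lemma:stationary_points_suffice} specialized to the adversarial setting) has the form: \emph{if} $\opt_S \le \sigma/(C\dparam^C)$ and $\|\nabla_\w \L_\sigma(\w;S)\|_2$ is small, \emph{then} $\measuredangle(\w,\wopt_S) \le \poly(\dparam)(1+\cpoincare^4)\cdot\sigma$. The hypothesis $\opt_S \lesssim \sigma$ is essential: in the adversarial case the term $A_3 = \Theta(\sqrt{\opt/\sigma}\cdot\poly(\dparam))$ in Proposition~\ref{proposition:surrogate-structural-property} must be dominated by $A_1 = \Theta(1/(\dparam^C\cpoincare^4))$, and if $\opt \gg \sigma$ this fails and the gradient lower bound collapses, so the stationary-point condition yields no angle control at all. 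Your intermediate claim ``$\measuredangle(\w,\wopt)\le \poly(\dparam)(1+\cpoincare^4)\cdot\opt/\sigma + O(\eps)$, and choosing $\sigma$ of order $\eps$ gives $\measuredangle(\w,\wopt)\le \poly(\dparam)(1+\cpoincare^4)\cdot\opt + \eps$'' is arithmetically inconsistent: setting $\sigma=\Theta(\eps)$ in that expression produces $\opt/\eps$, not $\opt$. The correct arithmetic requires taking $\sigma\approx\opt\cdot\poly(\dparam)$, which then gives $\theta=\poly(\dparam)(1+\cpoincare^4)\cdot\sigma\approx\poly(\dparam)(1+\cpoincare^4)\opt$. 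But $\opt$ is unknown, so no single choice of $\sigma$ works; the paper instead runs PSGD and the testers for every $\sigma$ in an $\eps\cdot\poly(1/\dparam)$-cover of the interval $[0,1/\poly(\dparam)]$, rejects if any $\sigma$ in the cover yields no approximate stationary point, and outputs the candidate from the list with the smallest empirical error on a fresh sample. This search-and-ERM step is what converts the unknown-$\opt$ guarantee into the claimed $\poly(\dparam)(1+\cpoincare^4)\opt+\eps$ bound, and it is absent from your plan.

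Your treatment of the Massart case is essentially correct: there the $A_3$ term does not appear, the gradient lower bound is $(1-2\eta)A_1 - A_2$ with no $\opt$-dependent condition, and a single $\sigma=\Theta\bigl(\eps(1-2\eta)/(\dparam^{O(1)}(1+\cpoincare^4))\bigr)$ suffices, matching the paper. Two smaller points you should make explicit: (i) PSGD (Proposition~\ref{proposition:sgd}) returns a \emph{list} of iterates, only one of which is guaranteed to be near-stationary, so you must test gradient norms on a fresh sample to locate it; and (ii) the conclusion of Lemma~\ref{lemma:stationary_points_suffice} is that either $\w$ or $-\w$ is angularly close to $\wopt_S$, so the local-disagreement test of Lemma~\ref{lemma:testable-bound-for-local-disagreement} must be run on both signs and the final output selected by empirical error. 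Once the $\sigma$-cover is added, your argument is the same one the paper uses.
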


Our proof follows a surrogate loss minimization approach that has been used for classical learning of noisy halfspaces \cite{diakonikolas2020learning,diakonikolas2020non} as well as classical (non-universal) testable learning \cite{gollakota2023efficient}. In particular, the algorithm runs Projected Stochastic Gradient Descent (see \ref{proposition:sgd}) on a surrogate loss whose stationary points are shown to be close to optimum parameter vectors under certain distributional assumptions. In the regular testable learning setting, given a stationary point, the above property can be tested with respect to any (fixed and known) target strongly log-concave marginal as shown by \cite{gollakota2023efficient}. For such a stationary point, more tests are used in order to ensure bounds on local halfspace disagreement. We provide some delicate refinements of the proofs in \cite{gollakota2023efficient} that enable us to substitute their testers with the universal testers we designed in Section \ref{section:universal-testers}.
We use the following surrogate loss function which was also used in \cite{gollakota2023efficient}.
\begin{equation}\label{equation:surrogate-loss}
    \L_\sigma(\w;\Djoint) = \ex_{(\x,y)\sim \Djointemp} \biggr[ \loss_\sigma\biggr(-y \frac{\inn{\w, \x}}{\norm{\w}_2} \biggr) \biggr],
\end{equation}
In Equation \eqref{equation:surrogate-loss}, the function $\ell_\sigma$ is a smoothed version of the step function defined as follows.
\begin{proposition}[Proposition 4.2 of \cite{gollakota2023efficient}]\label{proposition:activation}
    There is a universal constant $C>0$, such that for any $\sigma>0$, there exists a continuously differentiable function $\ell_\sigma:\R \to [0,1]$ with the following properties.
    \begin{enumerate}
        \item For any $t\in[-\sigma/6,\sigma/6]$, $\ell_\sigma(t) = \frac{1}{2}+\frac{t}{\sigma}$.
        \item For any $t>\sigma/2$, $\ell_\sigma(t)=1$ and for any $t<-\sigma/2$, $\ell_\sigma(t) = 0$.
        \item For any $t\in\R$, $\ell_\sigma'(t) \in [0,C/\sigma]$, $\ell_\sigma'(t)=\ell_\sigma'(-t)$ and $|\ell_\sigma''(t)| \le C / \sigma^2$.
    \end{enumerate}
\end{proposition}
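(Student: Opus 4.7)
The plan is to build $\ell_\sigma$ by explicit piecewise construction, using the linear formula of Property 1 on the core interval $[-\sigma/6,\sigma/6]$, the constant clamps $0$ and $1$ outside $[-\sigma/2,\sigma/2]$ required by Property 2, and a suitably chosen cubic polynomial in each of the two transition bands $[\sigma/6,\sigma/2]$ and $[-\sigma/2,-\sigma/6]$ to glue everything together in a $C^1$ fashion. The odd symmetry of $\ell_\sigma-\tfrac{1}{2}$ will then deliver the symmetry $\ell_\sigma'(t)=\ell_\sigma'(-t)$ required in Property 3.

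The first step is to fix the endpoint data that the right-hand transition must satisfy. On $[-\sigma/6,\sigma/6]$ the prescribed linear piece gives $\ell_\sigma(\sigma/6)=\tfrac{2}{3}$ and $\ell_\sigma'(\sigma/6)=1/\sigma$, while the clamp forces $\ell_\sigma(\sigma/2)=1$ and $\ell_\sigma'(\sigma/2)=0$. After the affine reparameterization $u=(t-\sigma/6)/(\sigma/3)\in[0,1]$, the chain rule converts these into four conditions on a function $p(u)$, namely $p(0)=\tfrac{2}{3}$, $p'(0)=\tfrac{1}{3}$, $p(1)=1$, $p'(1)=0$. Since a cubic has exactly four degrees of freedom, these conditions uniquely determine $p$, and a direct computation yields $p(u)=-u^3/3+u^2/3+u/3+2/3$. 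I then set $\ell_\sigma(t)=p((t-\sigma/6)/(\sigma/3))$ on $[\sigma/6,\sigma/2]$ and define $\ell_\sigma$ on $[-\sigma/2,-\sigma/6]$ by $\ell_\sigma(t)=1-\ell_\sigma(-t)$, which is again the same cubic (up to reflection) and automatically produces the odd symmetry of $\ell_\sigma'$ about the origin.

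Verifying the three properties is then routine. Property 1 is true by construction of the middle piece, and Property 2 is true by construction of the outer pieces. For Property 3, continuity and $C^1$-smoothness at the four join points $t=\pm\sigma/6,\pm\sigma/2$ follow from the four boundary conditions that defined $p$. To bound $\ell_\sigma'$ and $\ell_\sigma''$, I would compute $p'(u)=-u^2+2u/3+1/3$ and $p''(u)=-2u+2/3$, both of which are bounded by absolute constants on $[0,1]$ (with $\max p'=4/9$ at $u=1/3$ and $|p''|\le 4/3$). The chain rule yields $\ell_\sigma'(t)=(3/\sigma)\,p'(u)$ and $\ell_\sigma''(t)=(9/\sigma^2)\,p''(u)$, giving $0\le\ell_\sigma'(t)\le 4/(3\sigma)$ and $|\ell_\sigma''(t)|\le 12/\sigma^2$ on the transition intervals, while on $[-\sigma/6,\sigma/6]$ we have $\ell_\sigma'=1/\sigma$ and $\ell_\sigma''=0$, and on the clamp regions both vanish. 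Taking $C=12$ therefore satisfies all three items simultaneously.

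I do not expect any serious obstacle: the proof amounts to writing down one cubic and doing an elementary computation. The one point requiring a little care is remembering that only $C^1$ smoothness is required globally, so the second derivative is allowed to have jump discontinuities at the four join points; the bound $|\ell_\sigma''|\le C/\sigma^2$ only needs to hold where the second derivative exists, and the explicit computation above confirms this.
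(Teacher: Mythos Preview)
Your construction is correct and complete: the cubic $p(u)=-u^3/3+u^2/3+u/3+2/3$ really does satisfy the four Hermite boundary conditions, $p'(u)=-(u-1)(u+1/3)\ge 0$ on $[0,1]$ so $\ell_\sigma'\ge 0$ everywhere, and the chain-rule bounds $\ell_\sigma'\le 4/(3\sigma)$, $|\ell_\sigma''|\le 12/\sigma^2$ are exactly as you computed. The symmetry $\ell_\sigma(t)=1-\ell_\sigma(-t)$ also guarantees $\ell_\sigma'$ is even, and the range stays in $[0,1]$ since $p$ is monotone from $2/3$ to $1$ on the right transition band.

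There is nothing to compare against here: the paper does not prove this proposition at all but simply imports it verbatim as Proposition~4.2 of \cite{gollakota2023efficient}. Your explicit piecewise-cubic construction is the standard way to establish such a statement and would serve as a self-contained proof.
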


In order to analyze the properties of the stationary points of the surrogate loss, we provide the following refinement of results implicit in \cite{gollakota2023efficient,diakonikolas2020learning,diakonikolas2020non}. We show that the gradient of the surrogate loss is lower bounded by the difference between certain quantities that are controlled by the marginal distribution (see Figure \ref{fig:regions}). We stress that we do not use any assumptions for the marginal distribution in this step. Prior work included similar bounds, but the corresponding quantities were slightly different. We need to be more precise and provide the following result, whose proof is based on two-dimensional geometry and is deferred to Appendix \ref{appendix:proof-of-proposition-surrogate-structural-property}.

\begin{figure}[hb]
  \centering
    %\includesvg[width=0.6\textwidth]{regions.svg}
    %\includegraphics[width=0.6\textwidth]{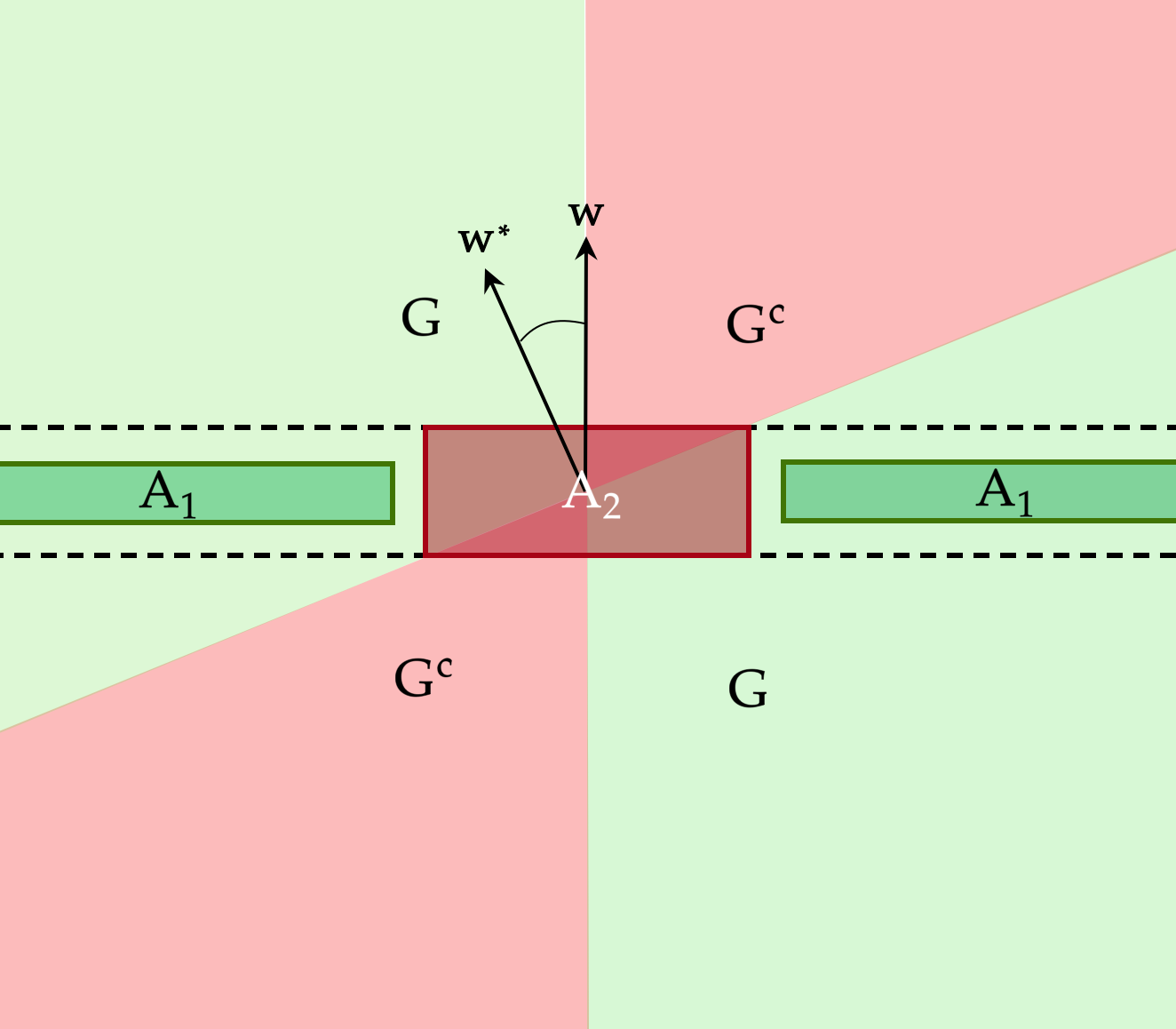}
    \includegraphics[trim={0 10cm 0 6cm},clip,width=0.6\textwidth]{regions.png}
\caption{The Gaussian mass in each of the regions labelled $A_1$ and $A_2$ is proportional to the corresponding term appearing in the statement of Proposition \ref{proposition:surrogate-structural-property}. As $\sigma$ tends to $0$, the Gaussian mass of region $A_2$ shrinks faster than the one of region $A_1$, since both the height $(\sigma)$ and the width $(\frac{\sigma}{\tan\theta})$ of $A_2$ are proportional to $\sigma$, while the width of $A_1$ is not affected (the height is $\sigma/3$). Lemma \ref{lemma:stationary_points_suffice} demonstrates that a similar property is universally testable under any nice Poincar\'e distribution.
}
\label{fig:regions}
\end{figure}

\begin{proposition}[Modification from \cite{gollakota2023efficient,diakonikolas2020learning,diakonikolas2020non}]\label{proposition:surrogate-structural-property}
For a distribution $\Djoint$ over $\R^d\times \{\pm1\}$ let $\opt$ be the minimum error achieved by some origin-centered halfspace and $\woptemp\in\S^{d-1}$ a corresponding vector. Consider $\L_\sigma$ as in Equation \eqref{equation:surrogate-loss} for $\sigma>0$ and let $\eta<1/2$. Let $\w\in\S^{d-1}$ with $\measuredangle(\w,\wopt) = \theta< \frac{\pi}{2}$ and $\vv\in\mathrm{span}(\w,\wopt)$ such that $\inn{\vv,\w} = 0$ and $\inn{\vv,\wopt}<0$. Then, for some universal constant $C>0$ and any $\alpha\ge \frac{\sigma}{2\tan \theta}$ we have $\|\nabla_{\w}\L_{\sigma}(\w;\Djoint)\|_2 \ge A_1-A_2-A_3$, where
\begin{align*}
    A_1 &= \frac{\alpha }{C\cdot\sigma}\cdot \pr\left[|\inn{\vv,\x}| \ge \alpha \;\text{ and }\; |\inn{\w,\x}| \le \frac{\sigma}{6}\right]\\
    %\cdot \pr\left[ |\inn{\w,\x}| \le \frac{\sigma}{6} \right] \\
    A_2 &= \frac{C}{\tan\theta}\cdot\pr\left[|\inn{\w,\x}|\le \frac{\sigma}{2}\right] \;\text{  and  }\; A_3 = \frac{C}{\sigma}\cdot\sqrt{\opt }\cdot \sqrt{\ex\Bigr[ \inn{\vv,\x}^2\cdot \ind_{\{|\inn{\w,\x}| \le \frac{\sigma}{2}\}} \Bigr]}
\end{align*}
Moreover, if the noise is Massart with rate $\eta$, then $\|\nabla_{\w}\L_{\sigma}(\w;\Djoint)\|_2 \ge (1-2\eta)A_1-A_2$.
% \[
%     \|\nabla_{\w}\L_{\sigma}(\w)\|_2 \ge \frac{\alpha \pr\left[|\inn{\vv,\x}| \ge \alpha, |\inn{\w,\x}| \le \frac{\sigma}{6}\right]}{C\sigma}
%     %\cdot\pr_{\Djoint}\left[|\inn{\vv,\x}| \ge \alpha, \;\Bigr|\; |\inn{\w,\x}| \le \frac{\sigma}{6}\right]
%     %\cdot \pr\left[|\inn{\w,\x}| \le \frac{\sigma}{6}\right]
%     -\frac{C\pr\left[|\inn{\vv,\x}|\le \frac{\sigma}{2}\right]}{\tan\theta}
%     %\cdot \pr_{\Djoint}\left[|\inn{\vv,\x}|\le \frac{\sigma}{2}\right]
%     - \frac{\sqrt{C\opt \ex[ \inn{\vv,\x}^2\cdot \ind_{\{|\inn{\w,\x}| \le \frac{\sigma}{2}\}} ]}}{\sigma}
%     %\cdot \sqrt{\ex_{\Djoint}\left[ \inn{\vv,\x}^2\cdot \ind\{|\inn{\w,\x}| \le \frac{\sigma}{2}\} \right]}
% \]
\end{proposition}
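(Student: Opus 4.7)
The plan is to compute $\nabla_\w \L_\sigma(\w;\Djoint)$ explicitly, project onto the unit vector $\vv$ (which reduces the norm bound to a scalar estimate since $\|\nabla_\w \L_\sigma\|_2 \ge |\inn{\vv,\nabla_\w \L_\sigma}|$), and then carry out a careful two-dimensional analysis in the plane $\spn(\w,\wopt)=\spn(\w,\vv)$. Differentiating at $\|\w\|_2=1$ and using that $\ell_\sigma'$ is even (Proposition~\ref{proposition:activation}) yields
\[
\nabla_\w \L_\sigma(\w;\Djoint) = -\ex_{(\x,y)\sim\Djoint}\bigl[\,y\,\ell_\sigma'(\inn{\w,\x})\,\proj_{\perp\w}\x\,\bigr].
\]
Since $\inn{\vv,\proj_{\perp\w}\x}=\inn{\vv,\x}$, I introduce coordinates $a=\inn{\w,\x}$ and $b=\inn{\vv,\x}$ in the 2D plane and write $\wopt=\cos\theta\cdot\w-\sin\theta\cdot\vv$ so that $\sign(\inn{\wopt,\x})=\sign(a\cos\theta-b\sin\theta)$. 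The task then reduces to lower bounding $\inn{\vv,\nabla_\w\L_\sigma} = -\ex[\,y\,b\,\ell_\sigma'(a)\,]$.

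Next, I decompose $y=\sign(\inn{\wopt,\x})-2\sign(\inn{\wopt,\x})\ind_E$, where $E=\{y\neq\sign(\inn{\wopt,\x})\}$ so that $\pr[E]\le\opt$. This splits the scalar into a \emph{clean} part and a \emph{noise} part. In the clean part, the integrand $-\sign(a\cos\theta-b\sin\theta)\cdot b\cdot\ell_\sigma'(a)$ is positive exactly when $|b|\tan\theta>|a|$. The hypothesis $\alpha\ge\sigma/(2\tan\theta)$ forces the ``good'' region $\{|a|\le\sigma/6,\,|b|\ge\alpha\}$ to satisfy $|b|\tan\theta\ge\sigma/2\ge|a|$, so the integrand there is positive and, using $\ell_\sigma'(a)=1/\sigma$ on the inner strip (Proposition~\ref{proposition:activation}), is at least $|b|/\sigma\ge\alpha/\sigma$; this contributes at least $A_1$ (up to an absolute constant absorbed into $C$). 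The integrand is negative only in the ``bad'' region $\{|a|\le\sigma/2,\,|b|\le\sigma/(2\tan\theta)\}$, on which its magnitude is at most $|b|\cdot(C/\sigma)\le C/(2\tan\theta)$; integrating against $\ind_{|a|\le\sigma/2}$ yields a negative contribution bounded in magnitude by a constant multiple of $A_2$.

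Finally, the noise part satisfies $\bigl|\ex[\sign(\inn{\wopt,\x})\,b\,\ell_\sigma'(a)\,\ind_E]\bigr|\le(C/\sigma)\,\ex[\,|b|\cdot\ind_E\cdot\ind_{|a|\le\sigma/2}\,]$, and a single application of Cauchy--Schwarz pairing $\ind_E$ against $|b|\cdot\ind_{|a|\le\sigma/2}$ gives a bound of the form $A_3$. Combining the three contributions proves the agnostic bound $\|\nabla_\w\L_\sigma\|_2\ge A_1-A_2-A_3$. For the Massart case, I use $\ex[y\mid\x]=(1-2\eta(\x))\sign(\inn{\wopt,\x})$ so no separate noise term arises; in the good region the integrand picks up a factor $1-2\eta(\x)\ge 1-2\eta$, giving $(1-2\eta)A_1$, while the bound $|1-2\eta(\x)|\le 1$ keeps the bad-region estimate $A_2$ intact, producing $\|\nabla_\w\L_\sigma\|_2\ge(1-2\eta)A_1-A_2$. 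I expect the main delicacy to be the careful sign and magnitude bookkeeping across the 2D plane: specifically, verifying that the threshold $\alpha\ge\sigma/(2\tan\theta)$ is sharp enough to push all of the good region onto the favorable side of the $\wopt$-boundary while simultaneously confining the sign-ambiguous region to be narrow enough in the $b$-direction to yield the $1/\tan\theta$ factor appearing in $A_2$.
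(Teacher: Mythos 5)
Your proposal is correct and follows essentially the same route as the paper's proof: project the gradient onto $\vv$ (which the paper phrases as projecting onto $\spn(\w,\wopt)$ and noting the $\w$-component vanishes), introduce 2D coordinates $(\inn{\w,\x},\inn{\vv,\x})$, split the integrand by the sign of $-\sign(\inn{\wopt,\x})\inn{\vv,\x}$ to isolate the favorable region (giving $A_1$) and the unfavorable strip (giving $A_2$), and apply Cauchy--Schwarz on the disagreement event to get $A_3$, with the Massart case handled via $\ex[y\mid\x]=(1-2\eta(\x))\sign(\inn{\wopt,\x})$. The only cosmetic difference is the order of decomposition — you peel off the noise indicator first and then split geometrically, whereas the paper splits geometrically first and then handles the noise factor within each region — but the resulting bounds and all key estimates are the same.
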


If the marginal distribution is nice, then the quantities $A_1,A_2$ and $A_3$ are such that $\sigma$ can be chosen accordingly so that stationary points of the surrogate loss (or their inverses) are close to some optimum vector (see Proposition \ref{proposition:strip-statistics} for properties of nice distributions). We use some simple tests (e.g., estimate the probability of falling in a strip, $\pr[|\inn{\w,\x}|\le \sigma/2]$ and appropriate spectral testers) as well as our universal tester for weak anti-concentration (see \ref{lemma:universally-testable-weak-anticonc}) to establish bounds on quantities $A_1,A_2$ and $A_3$ which ensure that the desired property holds for a given vector $\w$, under no distributional assumptions. The tester in the following result universally accepts nice distributions with bounded Poincar\'e parameter.

\begin{lemma}[Universally Testable Structure of Surrogate Loss]\label{lemma:stationary_points_suffice}
    Let $\Djoint$ be any distribution over $\R^d\times \{\pm1\}$.
    %, let $\opt$ be the minimum error achieved by some origin-centered halfspace and $\woptemp\in\S^{d-1}$ a corresponding vector. 
    Consider $\L_{\sigma}$ as in Equation \eqref{equation:surrogate-loss}. Then, there is a universal constant $C>0$ and a tester that given a unit vector $\w\in\R^d$, $\delta\in(0,1)$, $\eta<1/2$, $\cpoincare>0$, $\dparam\ge 1$, $\sigma\le \frac{1}{C\dparam^C}$ and a set $S$ of i.i.d. samples from $\Djoint$ with size at least $C\cdot \frac{d^4}{\sigma^2\delta}\log({d}) \dparam^{C}$, runs in time $\poly(d,\dparam,\frac{1}{\sigma}, \frac{1}{\delta})$ and satisfies the following specifications
    \begin{enumerate}[label=\textnormal{(}\alph*\textnormal{)}]
        \item If the tester accepts $S$, then, the following statements are true for the minimum error $\opt_S$ achieved by some origin-centered halfspace on $S$ and the optimum vector $\woptemp_S\in\S^{d-1}$
        \begin{itemize}
            \item If the noise is Massart with associated rate $\eta$ and $\|\nabla_{\w} \L_{\sigma}(\w;S)\|_2 \le \frac{1-2\eta}{C\dparam^C\cpoincare^4}$ then either $\measuredangle(\w,\wopt_S) \le \frac{C\dparam^C(1+\cpoincare^4)}{{1-2\eta}}\cdot  \sigma$ or $\measuredangle(-\w,\wopt_S) \le \frac{C\dparam^C(1+\cpoincare^4)}{{1-2\eta}}\cdot \sigma$.
            % \[
            %     \min\left\{\measuredangle(\w,\woptemp), \measuredangle(-\w,\woptemp)\right\} \le \frac{C}{{1-2\eta}}\cdot  \sigma 
            %     %\;\;\text{ or }\;\; \measuredangle(-\w,\woptemp) \le \frac{C}{{1-2\eta}}\cdot  \sigma
            % \]
            \item If the noise is adversarial with $\opt_S \le \frac{\sigma}{C\dparam^C}$ and $\norm{\nabla_{\w}\L_\sigma(\w;S)}_2 < \frac{1}{C\dparam^C\cpoincare^4}$ then either $\measuredangle(\w,\woptemp_S) \le C\dparam^C(1+\cpoincare^4)\cdot  \sigma$ or $\measuredangle(-\w,\woptemp_S) \le C\dparam^C(1+\cpoincare^4)\cdot \sigma$.
            % \[
            %     \measuredangle(\w,\woptemp) \le C \sigma \;\;\text{ or }\;\; \measuredangle(-\w,\woptemp) \le C \sigma.
            % \]
        \end{itemize}
        \item If the marginal $\Dtrue$ is $\dparam$-nice and $\cpoincare$-Poincar\'e, then the tester accepts $S$ with probability at least $1-\delta$.
    \end{enumerate}
\end{lemma}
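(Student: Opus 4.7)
The tester will be built by composing three subroutines, run on the input sample $S$ for a carefully chosen tuple of parameters depending on $\sigma, \dparam, \cpoincare, \delta$. First, by direct empirical estimation (with Hoeffding concentration, using the sample size $|S|\ge C\cdot d^4/(\sigma^2\delta)\log(d)\dparam^C$), verify the strip-probability inequalities
\[
\Omega\!\left(\tfrac{\sigma}{\dparam^C}\right) \;\le\; \pr_{\x\in S}\!\left[|\inn{\w,\x}|\le \tfrac{\sigma}{6}\right] \;\le\; \pr_{\x\in S}\!\left[|\inn{\w,\x}|\le \tfrac{\sigma}{2}\right] \;\le\; O(\sigma \dparam^C),
\]
and reject otherwise. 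Second, invoke the universal weak anti-concentration tester of Lemma \ref{lemma:universally-testable-weak-anticonc} with vector $\w$ and band half-width $\sigma/6$ (note $\sigma/6\le 1/(2\dparam)$ by the precondition $\sigma\le 1/(C\dparam^C)$). Third, run the spectral tester of Proposition \ref{proposition:spectral_tester} on the matrix $M_S := \ex_{\x\in S}\bigl[(\proj_{\perp\w}\x)(\proj_{\perp\w}\x)^\top\ind\{|\inn{\w,\x}|\le\sigma/2\}\bigr]$ to certify $\|M_S\|_{\mathrm{op}} \le O(\sigma \dparam^C)$.

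\textbf{Soundness (part (a)).} Apply Proposition \ref{proposition:surrogate-structural-property} contrapositively to the empirical distribution on $S$. Set $\theta = \measuredangle(\w,\wopt_S)$ and, up to replacing $\w$ by $-\w$ (which corresponds to the two cases in the conclusion), assume $\theta < \pi/2$. Take $\alpha = 1/(C_0\dparam^{C_0})$ for a sufficiently large universal $C_0$. The target threshold angle is $\theta^\star = C\dparam^C(1+\cpoincare^4)\cdot\sigma/(1-2\eta)$ (or the analogous adversarial quantity); we argue that if $\theta > \theta^\star$, then $\|\nabla_\w\L_\sigma(\w;S)\|_2$ exceeds the gradient-norm threshold in the hypothesis. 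Indeed $\theta > \theta^\star$ forces $\sigma/(2\tan\theta) \le \alpha$, so the proposition applies with this $\alpha$. The weak anti-concentration guarantee applied to the specific $\vv$ chosen by the proposition, combined with the lower bound on $\pr[|\inn{\w,\x}|\le \sigma/6]$, yields $A_1 \ge \Omega(1/(\dparam^{O(1)}\cpoincare^4))$. The upper bound on $\pr[|\inn{\w,\x}|\le\sigma/2]$ gives $A_2 \le O(\sigma\dparam^{O(1)})/\tan\theta \le A_1/4$ by our choice of $\theta^\star$. In the Massart case, $A_3$ is absent and we obtain $\|\nabla_\w\L_\sigma\|_2 \ge (1-2\eta)(A_1 - A_2) \ge (1-2\eta)\cdot \Omega(1/(\dparam^C\cpoincare^4))$, contradicting the hypothesis. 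In the adversarial case, the spectral test gives $\ex[\inn{\vv,\x}^2\ind\{|\inn{\w,\x}|\le\sigma/2\}] \le \vv^\top M_S \vv \le O(\sigma\dparam^C)$, so $A_3 \le (C/\sigma)\sqrt{\opt_S}\cdot\sqrt{O(\sigma\dparam^C)}$; the precondition $\opt_S \le \sigma/(C\dparam^C)$ then ensures $A_3 \le A_1/4$, again contradicting the gradient-norm hypothesis.

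\textbf{Completeness (part (b)).} Suppose $\Dtrue$ is $\dparam$-nice and $\cpoincare$-Poincar\'e. Proposition \ref{proposition:strip-statistics} (on strip statistics of nice distributions), together with Hoeffding's inequality on the empirical strip probabilities, yields the needed two-sided strip bounds with probability at least $1-\delta/3$. The weak anti-concentration tester accepts with probability at least $1-\delta/3$ by Lemma \ref{lemma:universally-testable-weak-anticonc}(b). Finally, the spectral test accepts with probability at least $1-\delta/3$ by the population-level bound on the moments of the entries of $M_S$ from Proposition \ref{proposition:strip-statistics} together with Proposition \ref{proposition:spectral_tester}. A union bound completes the argument.

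\textbf{Main obstacle.} The delicate part is the quantitative bookkeeping in soundness: picking $\alpha$ to simultaneously satisfy the constraint $\alpha \ge \sigma/(2\tan\theta)$ imposed by Proposition \ref{proposition:surrogate-structural-property} and the value $1/(C\dparam^C)$ needed so that the Paley--Zygmund / weak anti-concentration lower bound from Lemma \ref{lemma:universally-testable-weak-anticonc} applies, while ensuring the final angular threshold matches the stated form $O(\dparam^C(1+\cpoincare^4)/(1-2\eta))\cdot\sigma$. In the adversarial case, one must additionally balance $A_3$ against $A_1$ using the hypothesis $\opt_S\le\sigma/(C\dparam^C)$, and track the dependence on $\cpoincare$ through the hypercontractivity bound embedded in the weak anti-concentration guarantee.
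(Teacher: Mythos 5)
Your proposal matches the paper's proof in all essentials: the same composite tester (strip probability bounds at $\sigma/6$ and $\sigma/2$, a spectral upper bound on the conditional second moment, and the weak anti-concentration/hypercontractivity test), the same contrapositive application of Proposition~\ref{proposition:surrogate-structural-property} with $\alpha = \Theta(1/\dparam^{O(1)})$ to control $A_1, A_2, A_3$, and the same completeness argument via Proposition~\ref{proposition:strip-statistics} together with concentration and a union bound. The only cosmetic difference is that you invoke Lemma~\ref{lemma:universally-testable-weak-anticonc} as a black box at band $\sigma/6$, whereas the paper re-lists its subtests inline with bands $\sigma/6$, $\sigma/2$, and $\sigma$.
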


\begin{proof}
    The testing algorithm receives $\w\in\S^{d-1}$, $\delta\in(0,1), \eta<1/2$, $\cpoincare>0$, $\dparam\ge 1$, $\sigma\le \frac{1}{2\dparam}$ and a set $S\subset\R^d\times\{\pm 1\}$ and does the following for some sufficiently large $C_1>0$
    \begin{enumerate}
        \item If $\pr_{(\x,y)\in S}[|\inn{\w,\x}|\le \frac{\sigma}{6}] \le \frac{\sigma}{C_1\dparam^{C_1}}$ or $\pr_{(\x,y)\in S}[|\inn{\w,\x}|\le \frac{\sigma}{2}] > \sigma\cdot C_1\dparam^{C_1}$, then \textbf{reject}.
        \item Compute the $(d-1)\times(d-1)$ matrices $M_S^+$ and $M_S^-$ as follows:
        \begin{align*}
            M_S^+ &= \ex_{(\x,y)\in S}\left[ (\proj_{\perp \w}\x) (\proj_{\perp \w}\x)^T \cdot \ind_{\{|\inn{\w,\x}| \le \frac{\sigma}{2}\}}\right] \\
            M_S^- &= \ex_{(\x,y)\in S}\left[ (\proj_{\perp \w}\x) (\proj_{\perp \w}\x)^T \cdot \ind_{\{|\inn{\w,\x}| \le \frac{\sigma}{6}\}}\right]
        \end{align*}
        \item Run the (maximum singular value) spectral tester of Proposition \ref{proposition:spectral_tester} on $M_S^+$ given $\delta\leftarrow \frac{\delta}{4}$, $\dparam \leftarrow C_1\dparam^{C_1}$ and $\theta \leftarrow \frac{C_1\sigma \dparam^{C_1}}{2}$, i.e., \textbf{reject} if the maximum singular value of $M_S^+$ is greater than $\sigma \cdot C_1\dparam^{C_1}$.
        \item Run the (minimum singular value) spectral tester of Proposition \ref{proposition:spectral_tester} on $M_S^-$ given $\delta\leftarrow \frac{\delta}{4}$, $\dparam \leftarrow C_1\dparam^{C_1}$ and $\theta \leftarrow \frac{2\sigma}{C_1\dparam^{C_1}}$, i.e., \textbf{reject} if the minimum singular value of $M_S^-$ is less than $\frac{\sigma}{C_1\dparam^{C_1}}$.
        \item Run the hypercontractivity tester on $S' = \{\proj_{\perp \w} \x: (\x,y)\in S \text{ and }|\inn{\w,\x}|\le \sigma\}$, i.e., solve an appropriate SDP (see Prop. \ref{proposition:hypercontractivity-tester} with $\cpoincare\leftarrow\cpoincare$, $\delta\leftarrow\delta/4$) and \textbf{reject} if the solution is larger than a specified threshold. Otherwise, \textbf{accept}.
    \end{enumerate}

    For part (a), we suppose that the testing algorithm has accepted $S$. Therefore, $S$ has passed all the tests required for part (a) of Lemma \ref{lemma:universally-testable-weak-anticonc} and there exists a universal constant $C'>0$ such that 
    \[
        \pr_{(\x,y)\in S}\biggr[|\inn{\vv,\x}| \ge \frac{1}{C'\dparam^{C'}} \;\biggr|\; |\inn{\w,\x}| \le \sigma\biggr] \ge \frac{1}{C'\dparam^{C'}\cpoincare^4}
    \]
    Moreover, we have $ \frac{\sigma}{C'\dparam^{C'}} < \pr_{(\x,y)\in S}[|\inn{\w,\x}|\le \frac{\sigma}{6}] \le \pr_{(\x,y)\in S}[|\inn{\w,\x}|\le \frac{\sigma}{2}] < {\sigma}\cdot {C'\dparam^{C'}}$ and 
    \begin{align*}
        \ex_{(\x,y)\in S}\left[ \inn{\vv,\x}^2 \;\Bigr|\; |\inn{\w,\x}| \le \frac{\sigma}{2} \right] &\le C'\dparam^{C'} \\
        \ex_{(\x,y)\in S}\left[ \inn{\vv,\x}^2 \;\Bigr|\; |\inn{\w,\x}| \le \frac{\sigma}{6} \right] &\ge \frac{1}{C'\dparam^{C'}}
    \end{align*}
    
    Since Proposition \ref{proposition:surrogate-structural-property} holds for any distribution, it will also hold for the empirical distribution (uniform on $S$). We apply Proposition \ref{proposition:surrogate-structural-property} with $\alpha = \frac{1}{C'\dparam^{C'}}$ to lower bound $\|\nabla_{\w}\L_\sigma(\w;S)\|_2$ (or $\|\nabla_{\w}\L_\sigma(-\w;S)\|_2$) as follows
    \begin{align*}
        \|\nabla_{\w}\L_\sigma(\w;S)\|_2 &\ge A_1(\alpha) - A_2 - A_3 \tag{adversarial noise case} \\
        \|\nabla_{\w}\L_\sigma(\w;S)\|_2 &\ge (1-2\eta)\cdot A_1(\alpha) - A_2 \tag{Massart noise case}
    \end{align*}
    Combining the above inequalities with the bounds implied by the fact that $S$ has passed the tests, concludes the proof of part (a), since (after observing that $\tan\theta \ge \theta$) we obtain
    \begin{align*}
        \|\nabla_{\w}\L_\sigma(\w;S)\|_2 &\ge \frac{3}{{C}{\dparam^{C}} \cpoincare^4} - \frac{\sqrt{C}\sigma \dparam^{C/2}}{\theta} - \sqrt{\frac{\opt\cdot C\cdot \dparam^{C}}{\sigma}} \tag{adversarial noise case} \\
        \|\nabla_{\w}\L_\sigma(\w;S)\|_2 &\ge \frac{3(1-\eta)}{{C}{\dparam^{C}} \cpoincare^4} - \frac{\sqrt{C}\sigma \dparam^{C/2}}{\theta} \tag{Massart noise case}
    \end{align*}

    For part (b), we follow a similar recipe as the one used to prove part (b) of Lemma \ref{lemma:universally-testable-weak-anticonc}, i.e., we use the following reasoning to show that the tests will pass with probability at least $1-\delta$
    \begin{enumerate}
        \item We assume that the marginal distribution $\Dtrue$ is $\dparam$-nice and $\cpoincare$-Poincar\'e.
        \item We use Proposition \ref{proposition:strip-statistics} to bound the values of the tested quantities under the true distribution.
        \item We use appropriate concentration results (Hoeffding/Chernoff Bounds and Proposition \ref{proposition:spectral_tester}) to show that, since $|S|$ is large enough, each of the empirical quantities at hand does not deviate a lot from its mean.
    \end{enumerate}
    This concludes the proof of Lemma \ref{lemma:stationary_points_suffice}.
\end{proof}

We are ready to prove Theorem \ref{theorem:universal-tester-learner}.

\begin{proof}[Proof of Theorem \ref{theorem:universal-tester-learner}]
    We will give the algorithm for $\delta \leftarrow \delta' = 1/3$ since we can reduce the probability of failure with repetition (repeat $O(\log \frac{1}{\delta})$ times, accept if the rate of acceptance is $\Omega(1)$ and output the halfspace achieving the minimum test error among the halfspaces returned).

    The algorithm receives $\dparam\ge 1$, $\cpoincare>0$, $\epsilon>0$ and $\eta\in(0,1/2)\cup\{1\}$ (say $\eta=1$ when we are in the agnostic case) and does the following for some appropriately large universal constant $C_1>0$.
    \begin{enumerate}
        \item\label{step:main-parameters} First, initialize $E = \frac{\epsilon}{C_1\dparam^{C_1}}$, and let $\Sigma$ be a list of real numbers and $A$ be a positive real number, where $\Sigma$ and $A$ are defined as follows. If $\eta = 1$, then $\Sigma$ is an $\frac{E}{C_1\dparam^{C_1}}$-cover of the interval $\bigr[0,\frac{1}{C_1\dparam^{C_1}}\bigr]$ and $A = \frac{1}{C_1\dparam^{C_1}\cpoincare^4}$. Otherwise, let $\Sigma = \bigr\{\frac{E\cdot (1-2\eta)}{C_1\dparam^{C_1} (1+\cpoincare^4)}\bigr\}$ and $A= \frac{1-2\eta}{C_1\dparam^{C_1}\cpoincare^4}$.
        \item\label{step:main-psgd} Draw a set $S_1$ of $C_1\left(\frac{\dparam d \cpoincare}{\epsilon} \right)^{C_1}$ i.i.d. samples from $\Djoint$ and run PSGD, as specified in Proposition \ref{proposition:sgd} with $\epsilon\leftarrow A$, $\delta\leftarrow\frac{\delta}{C_1}$ on the loss $\L_{\sigma}$ for each $\sigma\in\Sigma$.
        \item\label{step:main-candidates-list} Form a list $L$ with all the pairs of the form $(\w,\sigma)$ where $\w\in\S^{d-1}$ is some iterate of the PSGD subroutine performed on $\L_{\sigma}$.
        \item\label{step:main-test-stationary-point} Draw a fresh set $S_2$ of $C_1\left(\frac{\dparam d\cpoincare}{\epsilon} \right)^{C_1}$ i.i.d. samples from $\Djoint$ and compute for each $(\w,\sigma)\in L$ the value $\|\nabla_{\w}\L_{\sigma}(\w;S_2)\|_2$. If, for some $\sigma\in\Sigma$, $\|\nabla_{\w}\L_{\sigma}(\w;S_2)\|_2 > A$ for all $(\w,\sigma)\in L$, then \textbf{reject}.
        \item\label{step:main-candidates-list-update} Update $L$ by keeping for each $\sigma\in\Sigma$ only one pair of the form $(\w,\sigma)$ for which we have $\|\nabla_{\w}\L_{\sigma}(\w;S_2)\|_2 \le A$.
        \item\label{step:main-test-if-stationary-points-suffice} Run the following tests for each $(\w,\sigma)\in L$. (This will ensure that part (a) of Lemma \ref{lemma:stationary_points_suffice} holds for each of the elements of $L$, i.e., that any stationary point of the  loss $\L_{\sigma}$ that lies in $L$ is angularly close to the empirical risk minimizer\footnote{Or the same holds for the inverse vector.}.).
        \begin{itemize}
        \item If $\pr_{(\x,y)\in S_2}[|\inn{\w,\x}|\le \frac{\sigma}{6}] \le \frac{\sigma}{C_1\dparam^{C_1}}$ or $\pr_{(\x,y)\in S_2}[|\inn{\w,\x}|\le \frac{\sigma}{2}] > \sigma\cdot C_1\dparam^{C_1}$, then \textbf{reject}.
        \item Compute the $(d-1)\times(d-1)$ matrices $M_{S_2}^+$ and $M_{S_2}^-$ as follows:\footnote{The operator $\proj_{\perp \w}:\R^d\to\R^{d-1}$ projects vectors on the hyperplane orthogonal to $\w$.}
        \begin{align*}
            M_{S_2}^+ &= \ex_{(\x,y)\in S_2}\left[ (\proj_{\perp \w}\x) (\proj_{\perp \w}\x)^T \cdot \ind_{\{|\inn{\w,\x}| \le \frac{\sigma}{2}\}}\right] \\
            M_{S_2}^- &= \ex_{(\x,y)\in S_2}\left[ (\proj_{\perp \w}\x) (\proj_{\perp \w}\x)^T \cdot \ind_{\{|\inn{\w,\x}| \le \frac{\sigma}{6}\}}\right]
        \end{align*}
        \item %Run the (maximum singular value) spectral tester of Proposition \ref{proposition:spectral_tester} on $M_{S_2}^+$ given $\delta\leftarrow \frac{\delta}{C_1}$, $\dparam \leftarrow C_1\dparam^{C_1}$ and $\theta \leftarrow \frac{C_1\sigma \dparam^{C_1}}{2}$, i.e., 
        \textbf{Reject} if the maximum singular value of $M_{S_2}^+$ is greater than $\sigma \cdot C_1\dparam^{C_1}$. %(Apply Proposition \ref{proposition:spectral_tester} on $M_{S_2}^+$ given $\delta\leftarrow \frac{\delta'}{C_1}$, $\dparam \leftarrow C_1\dparam^{C_1}$ and $\theta \leftarrow \frac{C_1\sigma \dparam^{C_1}}{2}$.)
        \item %Run the (minimum singular value) spectral tester of Proposition \ref{proposition:spectral_tester} on $M_{S_2}^-$ given $\delta\leftarrow \frac{\delta}{C_1}$, $\dparam \leftarrow C_1\dparam^{C_1}$ and $\theta \leftarrow \frac{2\sigma}{C_1\dparam^{C_1}}$, i.e., 
        \textbf{Reject} if the minimum singular value of $M_{S_2}^-$ is less than $\frac{\sigma}{C_1\dparam^{C_1}}$. %(Apply Proposition \ref{proposition:spectral_tester} on $M_{S_2}^-$ given $\delta\leftarrow \frac{\delta'}{C_1}$, $\dparam \leftarrow C_1\dparam^{C_1}$ and $\theta \leftarrow \frac{2\sigma}{C_1\dparam^{C_1}}$.)
        \item Run the hypercontractivity tester on $S' = \{\proj_{\perp \w} \x: (\x,y)\in S_2 \text{ and }|\inn{\w,\x}|\le \sigma\}$, i.e., solve an appropriate SDP (see Prop. \ref{proposition:hypercontractivity-tester} with $\cpoincare\leftarrow\cpoincare$, $\delta\leftarrow\delta'/C_1$) and \textbf{reject} if the solution is larger than a specified threshold. %Otherwise, \textbf{accept}.
        \end{itemize}
        \item\label{step:main-test-angle-to-zero-one} Set $\theta = \frac{(1+\cpoincare^4)\sigma}{A \cpoincare^4}$, and run the following tests for each pair of the form $(\w,\sigma)$ and $(-\w,\sigma)$ where $(\w,\sigma)\in L$. (This will ensure that part (a) of Lemma \ref{lemma:testable-bound-for-local-disagreement} is activated, i.e., that the distance of a vector from the empirical risk minimizer is an accurate proxy for the error of the corresponding halfspace.) 
        \begin{itemize}
            \item If $\pr_{(\x,y)\in S_2}[|\inn{\w,\x}|\le \theta] > C_1\dparam^{C_1}\theta$ then \textbf{reject}.
            \item Compute the $(d-1)\times (d-1)$ matrix $M_{S_2}$ as follows:\footnote{Note that only at most $|S_2|$ many terms below are non-zero, hence $M_{S_2}$ can be computed efficiently.}
            \[
                M_{S_2} = \ex_{(\x,y) \in S_2} \left[ \sum_{i=2}^{\infty} \frac{(\proj_{\perp \w} \x) (\proj_{\perp \w} \x)^T}{(i-1)^2} \ind\{|\inn{\w,\x}|\in [(i-1) \theta, i \theta)\} \right]
            \]
            \item %Run the spectral tester of Proposition \ref{proposition:spectral_tester} on $M_S$ given $\delta\leftarrow \frac{\delta}{C_1}$, $\dparam \leftarrow C_1\dparam^{C_1}$ and $\theta\leftarrow \frac{C_1}{2}\theta \dparam^{C_1}$, i.e.,
            %Compute $\norm{M_S}_{\text{op}}$ and 
            If $\norm{M_S}_{\text{op}} > C_1\theta \dparam^{C_1}$, then \textbf{reject}. (Apply Proposition \ref{proposition:spectral_tester} on $M_S$ given $\delta\leftarrow \frac{\delta}{C_1}$, $\dparam \leftarrow C_1\dparam^{C_1}$ and $\theta\leftarrow \frac{C_1}{2}\theta \dparam^{C_1}$).
        \end{itemize}
        \item\label{step:main-output} Otherwise, \textbf{accept} and output the vector $\w$ that achieves the smallest empirical error on $S_2$ among the vectors in the list $L$.
    \end{enumerate}

    For the following, let $\alpha = 1$ in the Massart noise case and $\alpha= C_1\dparam^{C_1} \cpoincare^4$ in the adversarial noise case. Consider also $\opt_{S_2}$ to be the error of the origin-centered halfspace with the minimum empirical error on $S_2$ and $\wopt_{S_2}$ the corresponding optimum vector. 
    
    \paragraph{Soundness.} We first prove the soundness condition, i.e., that the following implication holds with probability at least $1-\delta'$ over the samples:
    \[
        \text{If the tester accepts, then }\pr_{\Djoint}[y\neq\sign(\inn{\w,\x})] \le \alpha\cdot \opt + \epsilon
    \]
    The tester accepts only if for every $\sigma\in \Sigma$, we have some $\w\in L$ with $\|\nabla_{\w}\L_\sigma(\w;S_2)\|_2 \le A$ (step \ref{step:main-test-stationary-point}) and for which part (a) of each of Lemmas \ref{lemma:stationary_points_suffice} (step \ref{step:main-test-if-stationary-points-suffice}) and \ref{lemma:testable-bound-for-local-disagreement} (step \ref{step:main-test-angle-to-zero-one}) is activated. Therefore, in the Massart noise case, for any $\sigma\in \Sigma$, there is some $\w$ such that either $(\w,\sigma)\in L$ or $(-\w,\sigma)\in L$ and also
    \begin{align}
        \measuredangle(\w,\wopt_{S_2}) &\le \frac{1+\cpoincare^4}{\cpoincare^4} \cdot \frac{\sigma}{A} \stackrel{\mathrm{def}}{=} \theta \\
        \pr_{S_2}[y\neq \sign(\inn{\w,\x})] &\le \opt_{S_2} + C'\dparam^{C'}\cdot \theta \label{condition:main-theorem-proof-empirical-error-bound}
    \end{align}
    In the adversarial noise case, the above are true conditional on $\sigma$ being such that $\opt_{S_2}\le \frac{\sigma}{C'\dparam^{C'}}$.
    
    Therefore, in the Massart noise case, the above are true for $\sigma = \frac{E(1-2\eta)}{C_1\dparam^{C_1}(1+\cpoincare^4)}$ which gives
    \[
        \pr_{S_2}[y\neq \sign(\inn{\w,\x})] \le \opt_{S_2} + C'\dparam^{C'} E
    \]
    In the agnostic case, condition \ref{condition:main-theorem-proof-empirical-error-bound} is true for some $\sigma\in[0,\frac{1}{C_1\dparam^{C_1}}]$ such that 
    \[
        \frac{\sigma}{C'\dparam^{C'}} - \frac{1}{C_1\dparam^{C_1}} \le \opt_{S_2} \le \frac{\sigma}{C'\dparam^{C'}}
    \]
    unless $\opt> \frac{1}{C_1 C'\dparam^{C_1+C'}}$, in which case any halfspace has error at most $1 = \opt \cdot ({C_1 C'\dparam^{C_1+C'}})$. Hence we obtain
    \[
        \pr_{S_2}[y\neq \sign(\inn{\w,\x})] \le \poly(\dparam)\cdot (1+\cpoincare^4)\cdot \opt_{S_2} + C'\dparam^{C'} E
    \]
    Soundness follows from the fact that if $|S_2|$ is sufficiently large (but still polynomial in every parameter, since the VC dimension of the class of halfspaces in $\R^d$ is $d+1$), then $|\opt_{S_2} - \opt| \le \frac{\epsilon}{C_1\dparam^{C_1}(1+\cpoincare)^4}$ with probability at least $1-\delta'$.

    \paragraph{Completeness.} Suppose now that the marginal is indeed $\dparam$-nice and $\cpoincare$-Poincar\'e. Then, for sufficiently large $S_1$, after step \ref{step:main-candidates-list}, $L$ will contain a stationary point of $\L_\sigma(\;\cdot\;;\Djoint)$ for each $\sigma\in\Sigma$, due to Proposition \ref{proposition:sgd}. If $S_2$ is large enough, then steps \ref{step:main-test-stationary-point}, \ref{step:main-test-if-stationary-points-suffice} and \ref{step:main-test-angle-to-zero-one} will each accept with probability at least $1-\delta'/C_1$, due to part (b) of Lemmas \ref{lemma:stationary_points_suffice} and \ref{lemma:testable-bound-for-local-disagreement}, as well as the fact that each coordinate of $\nabla_{\w} \L_{\sigma}(\w;S_2)$ has bounded second moment (Proposition \ref{proposition:strip-statistics}) and therefore $\nabla_{\w} \L_{\sigma}(\w;S_2)$ is concentrated around $\nabla_{\w} \L_{\sigma}(\w;\Djoint)$ for any fixed $\w$ such that $(\w,\sigma)\in L$ (we also need a union bound over $L$). Hence, in total, the tester will accept with probability at least $1-\delta'$. 
\end{proof}

\bibliographystyle{alpha}
\bibliography{refs}

\appendix

\section{Technical Lemmas}

In this section, we provide a list of technical results that we use in our proofs.

\begin{lemma}[Preservation of Poincar\'e constant]\label{lemma:poincare-preservation}
    Let $I$ be an open interval in $\R$ and $q:\R^d\to \R_+$ the density of a $\gamma$-Poincar\'e distribution. Let $\vv\in\S^{d-1}$ and $q'_{\vv}:\R^{d-1}\to \R_+$ be the density of the distribution resulting from conditioning $q$ to $\x\cdot \vv \in I$ and projecting on the subspace perpendicular to $\vv$. Then, the distribution corresponding to $q'_{\vv}$ is $\gamma$-Poincar\'e.
\end{lemma}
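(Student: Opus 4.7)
The plan is to transfer the Poincar\'e inequality from $q$ to $q'_\vv$ by a lifting argument. Given any differentiable $f : \R^{d-1} \to \R$, I consider its natural lift $\tilde f : \R^d \to \R$ defined by $\tilde f(\x) = f(\proj_{\perp \vv}(\x))$. Since $\tilde f$ is constant along the $\vv$-direction, one has $\|\nabla \tilde f(\x)\|_2 = \|\nabla f(\proj_{\perp \vv}(\x))\|_2$ at every point of $\R^d$. Writing $Z = \pr_q[\x\cdot\vv \in I]$ and $\hat q(\x) = q(\x)\,\ind\{\x\cdot\vv \in I\}/Z$ for the conditional of $q$ on the slab, the definition of $q'_\vv$ immediately gives
\[
\ex_{q'_\vv}[f] = \ex_{\hat q}[\tilde f], \quad \var_{q'_\vv}(f) = \var_{\hat q}(\tilde f), \quad \ex_{q'_\vv}[\|\nabla f\|_2^2] = \ex_{\hat q}[\|\nabla \tilde f\|_2^2].
\]
Hence the claim reduces to showing $\var_{\hat q}(\tilde f) \le \gamma\,\ex_{\hat q}[\|\nabla \tilde f\|_2^2]$ for every such lifted function.

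My main approach is an approximation argument. Let $\chi_n : \R \to [0,1]$ be a sequence of smooth functions with $\chi_n \to \ind_I$ pointwise, and for each $n$ pick a constant $c_n$ so that the centered function $F_n(\x) = (\tilde f(\x) - c_n)\chi_n(\x \cdot \vv)$ has zero mean under $q$. Applying the $\gamma$-Poincar\'e inequality of $q$ to $F_n$ and expanding the gradient of a product yields
\[
\ex_q[F_n^2] \;\le\; \gamma\, \ex_q\!\bigl[\chi_n(\x\cdot\vv)^2\,\|\nabla\tilde f\|_2^2 + (\tilde f(\x) - c_n)^2\,\chi_n'(\x\cdot\vv)^2\bigr].
\]
As $n \to \infty$ the left-hand side converges to $Z\,\var_{\hat q}(\tilde f)$, and the main term on the right converges to $Z\gamma\,\ex_{\hat q}[\|\nabla\tilde f\|_2^2]$, which would immediately give the desired inequality were it not for the boundary contribution from $\chi_n'$ concentrating near $\partial I$.

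The main obstacle is handling this boundary term. Its essential feature is that, because $\tilde f$ has no $\vv$-component in its gradient, the boundary contribution reduces in the limit to an integral of $(\tilde f - c)^2$ against the restriction of $q$ to the boundary hyperplanes $\{\x\cdot\vv \in \partial I\}$, and does not involve $\|\nabla f\|_2^2$ at all. The crucial step is then to choose the cutoff sequence $\chi_n$ carefully --- for instance so that $|\chi_n'|^2$ is reweighted to compensate for the marginal density of $\x\cdot\vv$ under $q$ --- so that the boundary contribution vanishes in the limit. An alternative, and perhaps cleaner, route --- which avoids the boundary analysis entirely --- is to show directly that $\hat q$ itself is $\gamma$-Poincar\'e by disintegrating $q$ along $\vv$ and combining the one-dimensional fact that conditioning on a subinterval of $\R$ cannot increase the Poincar\'e constant (a consequence of the Muckenhoupt characterization in dimension one) with the general principle that Poincar\'e constants are non-increasing under $1$-Lipschitz pushforwards, applied here to the projection $\proj_{\perp\vv}$. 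In either approach, the key geometric feature exploited is that the conditioning region is a slab cut out by the very direction along which $\tilde f$ is constant.
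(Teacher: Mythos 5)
Your reduction to the claim $\var_{\hat q}(\tilde f) \leq \gamma\,\ex_{\hat q}[\|\nabla \tilde f\|_2^2]$ is correct, and it is worth noting that after clearing denominators this is exactly the inequality the paper writes in the step it annotates \emph{``since $q$ is $\gamma$-Poincar\'e.''} You were right to be uneasy there: the integrals on both sides of that step are restricted to the slab, so what is being invoked is a Poincar\'e inequality for the conditional measure $\hat q$, not for $q$, and some further argument really is needed. The trouble is that the argument you propose cannot be completed. Any smooth $\chi_n : \R\to[0,1]$ with $\chi_n\to\ind_I$ pointwise must rise from $0$ to $1$ over a shrinking window near each endpoint of $I$, so $\int \chi_n' = 1$ over each transition, and Cauchy--Schwarz forces $\int\chi_n'^2 \to \infty$; equivalently, $\ind_I$ has infinite $H^1$-norm. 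Since the one-dimensional marginal of $\inn{\vv,\x}$ under $q$ is typically bounded below near the endpoints of $I$, and $(\tilde f - c_n)^2$ need not vanish there, the boundary term $\gamma\,\ex_q\bigl[(\tilde f-c_n)^2\chi_n'(\inn{\vv,\x})^2\bigr]$ diverges and the Poincar\'e inequality applied to $F_n$ yields nothing in the limit. No reweighting of $\chi_n$ rescues this, because the divergence comes from the infinite Dirichlet energy of $\ind_I$ itself rather than from a poor choice of mollifier. Your alternative route is also circular: the $1$-Lipschitz pushforward principle transports a Poincar\'e inequality from $\hat q$ down to $q'_\vv$, but whether $\hat q$ is $\gamma$-Poincar\'e is exactly the open question, and the one-dimensional conditionals of $q$ along $\vv$ are not themselves controlled by $\gamma$.

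In fact the lemma appears to be false without a further hypothesis. Take $q$ to be a small mollification (at scale $\delta$) of the uniform distribution on two boxes joined only by a thin bridge lying entirely outside the slab $\{\inn{\vv,\x}\in I\}$. The Poincar\'e constant of $q$ stays bounded as $\delta\to 0$ (Poincar\'e constants are stable under convolution and $1$-Lipschitz pushforward), but the conditioned-and-projected measure $q'_\vv$ develops a bottleneck of depth $e^{-\Omega(1/\delta^2)}$ separating the two boxes, and its Poincar\'e constant blows up. In the paper the lemma is only ever invoked for marginals that are additionally $\dparam$-nice, and the anti-anti-concentration condition in Definition~\ref{definition:nice} forbids such bottlenecks; a correct proof will have to use that hypothesis (or something comparable) to control exactly the boundary term you identified.
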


\begin{proof}
    Assume, without loss of generality, that $\vv = \mathbf{e}_d$. We have that
    \[
        q'_{\vv}(\x_{<d}) = \frac{\int_{x_d\in I} q(\x_{<d},x_d) \,dx_d}{\int_{\x_{<d}} \int_{x_d\in I} q(\x) \,d\x } \,, \text{ for any }\x_{<d} \in \R^{d-1}\,.
    \]
    Let $f:\R^{d-1}\to \R$ be any differentiable function. In order to show that $q'_{\vv}$ is $\gamma$-Poincar\'e, it is sufficient to show that under no further assumptions on $f$, the quantity $\var_{q'_{\vv}}(f(\x_{<d}))$ is upper bounded by the product of $\gamma$ and $\ex_{q'_{\vv}}[\|\nabla f(\x_{<d})\|_2^2]$. We expand the quantity $\var_{q'_{\vv}}(f(\x_{<d}))$ as follows
    \begin{align*}
        \var_{q'_{\vv}}(f(\x_{<d})) &= \inf_{\tau} \int_{\x_{<d}} (f(\x_{<d}) - \tau)^2 q'_{\vv}(\x_{<d}) \, d\x_{<d} \\
        &= \inf_{\tau} \int_{\x_{<d}} (f(\x_{<d}) - \tau)^2 \cdot \frac{\int_{x_d\in I} q(\x_{<d},x_d) \,dx_d}{\int_{\x_{<d}} \int_{x_d\in I} q(\x) \,d\x } \, d\x_{<d} \\
        &= \frac{\inf_{\tau} \int_{\x_{<d}}\int_{x_d\in I}  (f(\x_{<d}) - \tau)^2 \cdot  q(\x) \,dx_d \, d\x_{<d}} {\int_{\x_{<d}} \int_{x_d\in I} q(\x) \,d\x } \\
        &\le \frac{\gamma\cdot \int_{\x_{<d}}\int_{x_d\in I}  \|\nabla_{\x}f(\x_{<d})\|_2^2 \cdot  q(\x) \,d\x}{{\int_{\x_{<d}} \int_{x_d\in I} q(\x) \,d\x }} \tag{since $q$ is $\gamma$-Poincar\'e} \\
        &= {\gamma\cdot \int_{\x_{<d}}  \|\nabla_{\x_{<d}}f(\x_{<d})\|_2^2 \cdot  q'_{\vv}(\x_{<d}) \,d\x_{<d}} \tag{since $\frac{\partial f}{\partial x_d}\equiv 0$} \\
        &= \gamma \cdot \ex_{q'_{\vv}}[\|\nabla f(\x_{<d})\|_2^2]\,,
    \end{align*}
    which concludes the proof.    
\end{proof}

\begin{proposition}[Spectral Tester]\label{proposition:spectral_tester}
    Let $\Dgeneric$ be a distribution over $\R^d$. Then, there is a tester that given $\delta\in (0,1)$, $\dparam \ge 1$, $\theta > 0$ and a set $S$ of i.i.d. samples from $\Dgeneric$ with size at least $\frac{2\dparam d^4}{\theta^2\delta}$, runs in time $\poly(d, \frac{1}{\theta}, |S|)$ and satisfies the following specifications
    \begin{enumerate}[label=\textnormal{(}\alph*\textnormal{)}]
        \item\label{condition:spectral_tester_a} If the tester accepts, then, for $\z\sim S$, $\ex_S[\z\z^T]\succeq \frac{\theta}{2} I_d$ (resp. $\ex_S[\z\z^T] \preceq 2\theta I_d$).
        \item\label{condition:spectral_tester_b} If, for $\z\sim \Dgeneric$, $\ex_{\Dgeneric}[(\z_i\z_j)^2] \le \dparam$ and $\ex_{\Dgeneric}[\z\z^T] \succeq {\theta} I_d$ (resp. $\ex_{\Dgeneric}[\z\z^T] \preceq \theta I_d$), then the tester accepts with probability at least $1-\delta$.
    \end{enumerate}
\end{proposition}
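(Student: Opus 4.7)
The plan is to have the tester compute the empirical second-moment matrix $\hat{M}_S = \ex_{\z\in S}[\z\z^T]$ and accept or reject based on whether $\hat{M}_S$ satisfies the claimed spectral bound with a safety margin. Concretely, for the lower-bound version we accept iff $\hat{M}_S \succeq \frac{3\theta}{4} I_d$, and for the upper-bound version iff $\hat{M}_S \preceq \frac{5\theta}{4} I_d$. Both conditions can be checked via an eigendecomposition in time $\poly(d, |S|)$, giving the claimed overall runtime. Soundness is then immediate from the construction: if the tester accepts, the empirical matrix automatically lies in the desired spectral range with room to spare, so $\ex_S[\z\z^T]$ satisfies the required inequality (using $\frac{3\theta}{4} I_d \succeq \frac{\theta}{2} I_d$ and $\frac{5\theta}{4} I_d \preceq 2\theta I_d$).

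For completeness, I will argue that when the population matrix $M = \ex_{\Dgeneric}[\z\z^T]$ satisfies the assumed spectral bound, $\hat{M}_S$ is close to $M$ in operator norm with high probability; Weyl's inequality then transfers the bound to $\hat{M}_S$ with the required slack. The hypothesis $\ex_{\Dgeneric}[(\z_i\z_j)^2] \le \dparam$ yields $\var[(\hat{M}_S)_{ij}] \le \dparam/|S|$ for each entry. Chebyshev's inequality with threshold $\theta/(4d)$ gives $\pr[|(\hat{M}_S - M)_{ij}| \ge \theta/(4d)] \le 16\dparam d^2/(\theta^2 |S|)$, and a union bound over the $d^2$ entries, combined with the specified sample size $|S| \ge 2\dparam d^4/(\theta^2\delta)$ (up to an absorbable constant), ensures that with probability at least $1-\delta$ every entry of $\hat{M}_S - M$ has magnitude at most $\theta/(4d)$. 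This yields
\[
    \|\hat{M}_S - M\|_{\mathrm{op}} \le \|\hat{M}_S - M\|_F \le d \cdot \max_{ij} |(\hat{M}_S - M)_{ij}| \le \theta/4,
\]
so by Weyl's inequality $\hat{M}_S$ inherits the spectral bound on $M$ with the slack built into the acceptance threshold.

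There is no genuine obstacle here: the argument is a routine empirical-concentration plus Weyl's inequality combination. The only mild tension is in matching the stated sample complexity $O(\dparam d^4/(\theta^2\delta))$ while using nothing more than the weak second-moment hypothesis; sharper tools such as matrix Bernstein would require higher-moment information, but the cruder entrywise Chebyshev combined with the bound $\|E\|_{\mathrm{op}} \le d \max_{ij} |E_{ij}|$ is exactly calibrated to yield the advertised $d^4$ dependence without needing additional assumptions.
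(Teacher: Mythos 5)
Your proposal is correct and takes essentially the same approach as the paper: compute the empirical second-moment matrix, accept based on a spectral threshold, and prove completeness via entrywise Chebyshev plus a union bound and the crude $\|E\|_{\mathrm{op}} \le d\,\|E\|_{\max}$ estimate. The only difference is bookkeeping: the paper sets the acceptance threshold directly at $\theta/2$ (resp. $2\theta$), so it only needs concentration at scale $\theta/(2d)$ per entry and union-bounds over the $\binom{d}{2}$ distinct off-diagonal entries, which makes the stated sample bound $2\dparam d^4/(\theta^2\delta)$ work out exactly; your tighter thresholds of $3\theta/4$ and $5\theta/4$, combined with union-bounding over all $d^2$ entries, cost a constant factor (about $8$) in the required $|S|$, which you correctly flag as absorbable.
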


\begin{proof}%[Proof of Proposition \ref{lemma:spectral_tester}]
    The tester receives $\dparam$, a set $S$ and $\delta\in (0,1)$ and does the following:
    \begin{enumerate}
        \item Compute the matrix $M_S = \ex_S[\z\z^T]$. 
        \item If the minimum (resp. maximum) eigenvalue of $M_S$ is larger than $\frac{\theta}{2}$ (resp. smaller than $2\theta$), then \textbf{accept}. Otherwise \textbf{reject}.
    \end{enumerate}
    Clearly, if the tester accepts, then the desired property is satisfied by construction. If the distribution $\Dgeneric$ satisfies the conditions of part \ref{condition:spectral_tester_b}, we can show that for $M_{\Dgeneric} = \ex_{\z\sim \Dgeneric}[\z\z^T]$ we have
    \[
        \bigr\|M_S - M_{\Dgeneric}\bigr\|_{\mathrm{op}} \le \frac{\theta}{2}, \text{ with probability at least }1-\delta
    \]
    which implies that $M_S \succeq \frac{\theta}{2} I_d$ (and $M_S \preceq (\theta+\frac{\theta}{2}) I_d \preceq 2\theta I_d$). In particular, we have that $(M_S)_{ij} = \ex_S[\z_i\z_j]$, and by Chebyshev's inequality we have
    \[
        \pr\left[|(M_S)_{ij} - (M_{\Dgeneric})_{ij}| > \frac{\theta}{2 d}\right] \le \frac{4d^2}{\theta^2|S|} \ex_{\z\sim\Dgeneric}[(\z_i\z_j)^2] \le \frac{4\dparam d^2}{\theta^2|S|} \le \frac{\delta}{\binom{d}{2}}
    \]
    By a union bound, we obtain that $\|M_S-M_{\Dgeneric}\|_{\max} \le \frac{\theta}{2 d}$ with probability at least $1-\delta$ and hence $\|M_S-M_{\Dgeneric}\|_{\mathrm{op}} \le d\|M_S-M_{\Dgeneric}\|_{\max} \le \frac{\theta}{2}$, which concludes the proof.    
\end{proof}

\begin{proposition}\label{proposition:strip-statistics}
    Let $c\ge 0$, $\dparam\ge 1$, $\sigma \le \frac{1}{2\dparam}$ and $\Dgeneric$ be a $\dparam$-nice distribution over $\R^d$. Then, for any unit vectors $\w,\vv,\vv',\vvu,\vvu'\in\R^d$ with $\inn{\w,\vv} = \inn{\w,\vv'} = 0$ and for some universal constant $C>0$ we have
    \begin{enumerate}[label=(\roman*)]
        \item\label{condition:strip-statistics-probability} ${\pr[|\inn{\w,\x}| \le \sigma]} = {2\sigma}\cdot \alpha^C,$ for some $\alpha \in [\frac{1}{C\dparam}, C\dparam]$.
        %$\alpha \in [\frac{1}{\dparam^2}, 4\dparam^2]$.
        %\Theta_\dparam(\sigma)$.
        \item\label{condition:strip-statistics-expectation-square} $\ex[\inn{\vv,\x}^2\cdot \ind\{|\inn{\w,\x}| \le \sigma\}]  = 2\sigma \cdot \alpha^C$, for some $\alpha\in [\frac{1}{C\dparam}, C\dparam]$.
        %$\alpha\in [\frac{1}{8\dparam^4}, 4\dparam^2]$.
        %\Theta_\dparam(\sigma)$.
        \item\label{condition:statistics-expectation-square-square} $\ex[\inn{\x,\vvu}^2\inn{\x,\vvu'}^2] = \alpha^C$, for some $\alpha\le C\dparam$.
        \item\label{condition:strip-statistics-expectation-offset} $\ex[\inn{\vv,\x}^2\cdot \ind\{|\inn{\w,\x}| \in [c, c +\sigma]\}]  \le 2\sigma \cdot \alpha^C$, for some $\alpha \le C\dparam $.
    \end{enumerate}
\end{proposition}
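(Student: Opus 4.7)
The plan is to reduce each of the four bounds to a computation against the marginal density $q_V$ of $\Dgeneric$ on a well-chosen $2$-dimensional subspace $V$, and then apply the pointwise envelope and lower bound provided by properties~2 and~3 of Definition~\ref{definition:nice}. For (i), (ii), and (iv) I take $V$ to contain $\w$ (and $\vv$, where relevant) and express the quantities of interest in coordinates $y_1 = \inn{\w,\x}$, $y_2 = \inn{\vv,\x}$. For (iii), Cauchy--Schwarz reduces the task to bounding $\ex[\inn{\x,\vvu}^4]$ uniformly over unit vectors $\vvu$, which is again a 2D density integral on any 2D subspace containing $\vvu$.

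For the upper bounds, I dominate $q_V(y_1,y_2) \le Q\bigl(\sqrt{y_1^2+y_2^2}\bigr)$ using property~3 of Definition~\ref{definition:nice}, and establish the key $y_1$-uniform estimate
\[
    \int_{-\infty}^{\infty} y_2^{2k}\, Q\bigl(\sqrt{y_1^2+y_2^2}\bigr)\, dy_2 \;\le\; O(\dparam)
    \qquad \text{for } k \in \{0,1\}.
\]
This single estimate implies the upper bounds in (i), (ii), and (iv) after integrating over the appropriate strip of width $O(\sigma)$. Its proof interpolates between the moment hypotheses $\int_0^\infty r^j Q(r)\, dr \le \dparam$ ($j=1,3,5$) via the elementary inequality $r^j \le 1 + r^{j+1}$, together with $\int_0^\infty Q(r)\, dr \le \dparam + \int_1^\infty r\, Q(r)\, dr \le 2\dparam$, splitting the $y_2$-integral at $|y_2|=1$ and using $\sup_r Q(r) \le \dparam$ on the compact piece. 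For (iii), polar coordinates on a 2D subspace through $\vvu$ give $\ex[\inn{\x,\vvu}^4] \le \int_0^\infty r^5 Q(r)\, dr \cdot \int_0^{2\pi} \cos^4\theta\, d\theta \le O(\dparam)$, and Cauchy--Schwarz finishes it.

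For the lower bounds in (i) and (ii), I use property~2 of Definition~\ref{definition:nice}: $q_V(y_1,y_2) \ge 1/\dparam$ whenever $\sqrt{y_1^2+y_2^2} \le 1/\dparam$. Since $\sigma \le 1/(2\dparam)$, the strip $|y_1| \le \sigma$ intersects this disk in a rectangle of width $2\sigma$ and height at least $\sqrt{3}/(2\dparam)$; integrating the constant density $1/\dparam$ over it yields a lower bound of order $\sigma/\dparam^2$ for (i), and integrating $y_2^2/\dparam$ over it yields order $\sigma/\dparam^4$ for (ii).

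The only slightly delicate step is establishing the $y_1$-uniform bound on $\int y_2^{2k} Q(\sqrt{y_1^2+y_2^2})\, dy_2$: the change of variables $r=\sqrt{y_1^2+y_2^2}$ produces a Jacobian factor $r/\sqrt{r^2-y_1^2}$ whose singularity at $r=|y_1|$ must be handled by first restricting to $|y_2|\ge 1$, where the factor is dominated by a constant multiple of $r$ so the integrand is controlled by $r^{2k+1} Q(r)$ and the $j=2k+1$ moment bound applies, and then disposing of the small-$|y_2|$ piece directly via $\sup Q \le \dparam$. This is essentially bookkeeping and poses no fundamental obstacle.
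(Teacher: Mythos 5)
Your proof is correct, and it takes a cleaner route than the paper's. Your organizing lemma --- the $y_1$-uniform bound $\int_{\R} y_2^{2k}\, Q\bigl(\sqrt{y_1^2+y_2^2}\bigr)\, dy_2 \le O(\dparam)$ for $k\in\{0,1\}$ --- handles the upper bounds in (i), (ii), and (iv) all at once by integrating over the relevant strip of width $O(\sigma)$. The paper instead decomposes the strip $\{|x_1|\le\sigma\}$ into an angular wedge $\{|x_1|\le\sigma|x_2|\}$ (bounded in polar coordinates against $\int rQ$ or $\int r^3Q$) and a bounded box (handled via $\sup_r Q(r)\le\dparam$) for parts (i) and (ii), and only for part (iv) does it establish a uniform-in-offset inner-integral estimate, via the substitution $r'=\sqrt{r^2+\beta}$. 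Your unified lemma subsumes the paper's (iv) argument and makes the cone-plus-box decomposition unnecessary, which is a genuine simplification; it also makes the extension to an arbitrary offset $c$ immediate rather than requiring a separate change of variables. Both proofs rest on the same three ingredients --- the odd-moment bounds on $Q$, the cap $\sup Q\le\dparam$, and the anti-anti-concentration floor $q_V\ge 1/\dparam$ on the small disk --- and your lower bounds for (i) and (ii) match the paper's disk-intersection argument. For (iii) you invoke Cauchy--Schwarz to reduce to $\ex[\inn{\x,\vvu}^4]$ for a single direction, whereas the paper works directly in the two-dimensional marginal on $\mathrm{span}(\vvu,\vvu')$, bounding $x_1^2(x_1\cos\beta + x_2\sin\beta)^2\le(x_1^2+x_2^2)^2$; the two are equivalent up to constants, and yours avoids setting up coordinates adapted to the pair $(\vvu,\vvu')$.
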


\begin{proof}
We start by deriving property (i). Recall the function $Q$ from the definition of a $\dparam$-nice distribution, which upper-bounds the density of any two-dimensional projection of a $\dparam$-nice distribution we see that:
\begin{align*}
\pr[|\inn{\w,\x}| \le \sigma ]  
&=
\int_{x_1=-\sigma}^{\sigma}  \int_{x_2=-\infty}^{\infty}   q_{\text{span}(\vv, \w)}(x_1,x_2) \,dx_1dx_2
\\
&\le \int_{x_1=-\sigma}^{\sigma} \int_{x_2=-\infty}^{\infty}  Q\left(\sqrt{x_1^2+x_2^2}\right) \,dx_1dx_2
\end{align*}
Now, note that the region $\{(x_1, x_2): |x_1|\leq \sigma\}$ is a subset of the set 
\[
\{(x_1, x_2): |x_2|\leq  \sigma|x_1|\}
\cup
\{(x_1, x_2): |x_1|\leq \sigma \:\text{\&}\: |x_2|\leq 1\}.\]
Therefore:
\begin{multline*}
    \int_{x_1=-\sigma}^{\sigma} \int_{x_2=-\infty}^{\infty}  Q\left(\sqrt{x_1^2+x_2^2}\right) \,dx_1dx_2
\leq\\
4 \arcsin (\sigma) \cdot
\int_{r =0 }^{\infty} 2\pi r Q\left(r \right) \,dr
+
\int_{x_1=-\sigma}^{\sigma} \int_{x_2=-1}^{1} Q\left(\sqrt{x_1^2+x_2^2}\right) \,dx_1dx_2 \leq
O(\sigma \dparam)
\end{multline*}
Note that in the last line above, we bounded the first term via the bound $\int_{r =0 }^{\infty} r Q\left(r \right) \,dr \leq \dparam$ from the definition of $\dparam$-nice distributions. Likewise, we bounded the second term via the inequality $Q(r)\leq \dparam$ from the definition of $\dparam$-nice distributions. Overall, we obtain
\[
\ex[\inn{\vv,\x}^2\cdot \ind\{|\inn{\w,\x}| \le \sigma\}]  
\leq O(\sigma \dparam)
\]
Now, we shall lower-bound the same quantity. We have 
\begin{align*}
\pr[|\inn{\w,\x}| \le \sigma ]   
&=
\int_{x_1=-\sigma}^{\sigma}  \int_{x_2=-\infty}^{\infty} q_{\text{span}(\vv, \w)}(x_1,x_2) \,dx_1dx_2
\\
&\ge \int_{x_1=-\sigma}^{\sigma}  \int_{x_2=-\frac{1}{2\dparam}}^{\frac{1}{2\dparam}}   q_{\text{span}(\vv, \w)}(x_1,x_2) \,dx_1dx_2
\end{align*}
Now, since $\sigma \leq \frac{1}{2 \dparam}$ via the premise of the lemma, we see that the whole region of integration on the right side of the set $\{(x_1, x_2):\: \sqrt{x_1^2+x_2^2}\leq \frac{1}{\dparam}\}$. From the definition of $\dparam$-nice distributions, the density $q_{\text{span}(\vv, \w)}$ is lower-bounded by $1/\dparam$ in this region. Therefore, we have
\[
\pr[|\inn{\w,\x}| \le \sigma ]   
\geq
\frac{2\sigma}{\dparam} \cdot
 \frac{1}{\dparam}
=\frac{2\sigma}{\dparam^2},
\]
which finishes the proof of property (i).

Now, we derive property (ii).
Recall the function $Q$ from the definition of a $\dparam$-nice distribution, which upper-bounds the density of any two-dimensional projection of a $\dparam$-nice distribution we see that:
\begin{align*}
\ex[\inn{\vv,\x}^2\cdot \ind\{|\inn{\w,\x}| \le \sigma\}]  
&=
\int_{x_1=-\sigma}^{\sigma}  \int_{x_2=-\infty}^{\infty} x_2^2 \cdot q_{\text{span}(\vv, \w)}(x_1,x_2) \,dx_1dx_2
\\
&\le\int_{x_1=-\sigma}^{\sigma} \int_{x_2=-\infty}^{\infty} x_2^2 \cdot Q\left(\sqrt{x_1^2+x_2^2}\right) \,dx_1dx_2
\end{align*}
Now, note that the region $\{(x_1, x_2): |x_1|\leq \sigma\}$ is a subset of the set 
\[
\{(x_1, x_2): |x_2|\leq  \sigma|x_1|\}
\cup
\{(x_1, x_2): |x_1|\leq \sigma \:\text{\&}\: |x_2|\leq 1\}.\]
Therefore:
\begin{multline*}
    \int_{x_1=-\sigma}^{\sigma} \int_{x_2=-\infty}^{\infty} x_2^2 \cdot Q\left(\sqrt{x_1^2+x_2^2}\right) \,dx_1dx_2
\leq\\
4 \arcsin (\sigma) \cdot
\int_{r =0 }^{\infty} 2\pi r^3 Q\left(r \right) \,dr
+
\int_{x_1=-\sigma}^{\sigma} \int_{x_2=-1}^{1} x_2^2 \cdot Q\left(\sqrt{x_1^2+x_2^2}\right) \,dx_1dx_2 \leq
O(\sigma \dparam)
\end{multline*}
Note that in the last line above, we bounded the first term via the bound on $\int_{r =0 }^{\infty} r^3 Q\left(r \right) \,dr$ from the definition of $\dparam$-nice distributions. Likewise, we bounded the second term via the inequality $Q(r)\leq \dparam$ from the definition of $\dparam$-nice distributions. Therefore, we obtain
\[
\ex[\inn{\vv,\x}^2\cdot \ind\{|\inn{\w,\x}| \le \sigma\}]  
\leq O(\sigma \dparam)
\]
Now, we shall lower-bound the same quantity. We have 
\begin{align*}
\ex[\inn{\vv,\x}^2\cdot \ind\{|\inn{\w,\x}| \le \sigma\}]  
&=
\int_{x_1=-\sigma}^{\sigma}  \int_{x_2=-\infty}^{\infty} x_2^2 \cdot q_{\text{span}(\vv, \w)}(x_1,x_2) \,dx_1dx_2
\\
&\ge \int_{x_1=-\sigma}^{\sigma}  \int_{x_2=-\frac{1}{2\dparam}}^{\frac{1}{2\dparam}} x_2^2 \cdot q_{\text{span}(\vv, \w)}(x_1,x_2) \,dx_1dx_2
\end{align*}
Now, since $\sigma \leq \frac{1}{2 \dparam}$ via the premise of the lemma, we see that the whole region of integration on the right side of the set $\{(x_1, x_2):\: \sqrt{x_1^2+x_2^2}\leq \frac{1}{\dparam}\}$. From the definition of $\dparam$-nice distributions, the density $q_{\text{span}(\vv, \w)}$ is lower-bounded by $1/\dparam$ in this region. Therefore, we have
\[
\ex[\inn{\vv,\x}^2\cdot \ind\{|\inn{\w,\x}| \le \sigma\}]  
\geq
\frac{2\sigma}{\dparam} \cdot
\frac{1}{4\dparam^2} \cdot \frac{1}{\dparam}
=\frac{\sigma}{2\dparam^4},
\]
which finishes the proof of property (ii).

We proceed to property (iii). We will denote the angle between $\vv$ and $\vv'$ as $\beta$, which allows us to write

\begin{align*}
\ex[\inn{\x,\vv}^2\inn{\x,\vv'}^2]
&=
\int_{x_1=-\infty}^{\infty}  \int_{x_2=-\infty}^{\infty} x_1^2 (x_1 \cos \beta + x_2 \sin \beta)^2  q_{\text{span}(\vv, \w)}(x_1,x_2) \,dx_1dx_2
\\
&\le \int_{x_1=-\infty}^{\infty} \int_{x_2=-\infty}^{\infty} x_1^2 (x_1 \cos \beta + x_2 \sin \beta)^2 \cdot Q\left(\sqrt{x_1^2+x_2^2}\right) \,dx_1dx_2
\\
&\le \int_{x_1=-\infty}^{\infty} \int_{x_2=-\infty}^{\infty}  (x_1^2+x_2^2)^2 \cdot Q\left(\sqrt{x_1^2+x_2^2}\right) \,dx_1dx_2 \\
&=
\int_{r=0}^{\infty} 2\pi r^5 Q(r) \:dr\leq 2\pi \dparam,
\end{align*}
which finishes the proof of property (iii).

Finally, we prove property (iv). For $\beta \ge 0$ we have
\begin{align*}
    \int_{r=0}^\infty r^2 Q\bigr(\sqrt{r^2+\beta}\bigr)\; dr  &= \int_{r=0}^1 r^2 Q\bigr(\sqrt{r^2+\beta}\bigr) \;dr + \int_{r=1}^\infty r^2 Q\bigr(\sqrt{r^2+\beta}\bigr)\;dr \\
    &\le \dparam + \int_{r=1}^\infty r^3 Q\bigr(\sqrt{r^2+\beta}\bigr)\;dr \tag{since $\sup_{r\ge 0} Q(r) \le \dparam$} \\
    &\le \dparam + \int_{r'=\sqrt{1+\beta}}^\infty (r'^3-\beta r')Q(r') \; dr' \tag{by setting $r' = \sqrt{r^2+\beta}$} \\
    &\le \dparam + \int_{r=0}^\infty r^3Q(r)\; dr \tag{since $\beta r Q(r) \ge 0$ for any $r\ge 0$} \\
    &\le 2\dparam
\end{align*}
Applying the above inequality to the quantity of property (iv), we obtain the desired result.
\begin{align*}
    \ex[\inn{\vv,\x}^2\cdot \ind\{|\inn{\w,\x}| \in [c, c +\sigma]\}]  &= \int_{|x_1|\in [c,c+\sigma]} \int_{x_2= -\infty}^\infty x_2^2\cdot q_{\mathrm{span}(\vv,\w)} \; dx_1dx_2 \\
    &\le \int_{|x_1|\in [c,c+\sigma]} \int_{x_2= -\infty}^\infty x_2^2\cdot Q\bigr(\sqrt{x_1^2+x_2^2}\bigr) \; dx_1dx_2 \\
    &= \int_{|x_1|\in [c,c+\sigma]} \left(2\int_{r= 0}^\infty r^2\cdot Q\bigr(\sqrt{x_1^2+r^2}\bigr) \; dr\right) dx_{1} \\
    &\le \int_{|x_1|\in [c,c+\sigma]}(4\dparam)\;dx_1 \le 8\dparam\sigma
\end{align*}
This concludes the proof of Proposition \ref{proposition:strip-statistics}.
\end{proof}

\begin{proposition}[PSGD Convergence \cite{diakonikolas2020learning}, restated in \cite{gollakota2023efficient}]\label{proposition:sgd}
    Let $\L_\sigma$ be as in Equation \eqref{equation:surrogate-loss} with $\sigma\in(0,1]$, $\ell_\sigma$ as described in Proposition \ref{proposition:activation}, $\dparam\ge 1$ and $\Djointemp$ such that the marginal $\Dtrueemp$ on $\R^d$ is $\dparam$-nice. Then for some universal constant $C>0$ and for any $\epsilon>0$ and $\delta\in (0,1)$, there is an algorithm whose time and sample complexity is $O(\frac{\dparam^C d}{\sigma^4} +\frac{\dparam^C\log(1/\delta)}{\eps^4\sigma^4})$, which, having access to samples from $\Djointemp$, outputs a list $L$ of vectors $\w\in\S^{d-1}$ with $|L| = O(\frac{\dparam^C d}{\sigma^4} +\frac{\dparam^C \log(1/\delta)}{\eps^4\sigma^4})$ so that there exists $\w\in L$ with 
    \[
        \norm{\nabla_{\w}\L_\sigma(\w;\Djoint)}_2 \le \epsilon\,, \text{ with probability at least }1-\delta\,.
    \]
    In particular, the algorithm performs Stochastic Gradient Descent on $\L_\sigma$ Projected on $\S^{d-1}$ (PSGD).
\end{proposition}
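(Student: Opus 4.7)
The plan is to apply a standard non-convex stochastic gradient descent analysis to $\L_\sigma$ on the sphere $\S^{d-1}$, noting that $\L_\sigma(\w;\Djoint)$ depends on $\w$ only through $\w/\|\w\|_2$, so it is natural to restrict the iterates to unit norm. At each iteration $t$, we draw a fresh sample $(\x_t,y_t)\sim\Djointemp$, compute the tangent-space stochastic gradient
\[
    g_t = -y_t\,\ell_\sigma'\!\bigl(-y_t\,\inn{\w_t,\x_t}\bigr)\cdot (I - \w_t\w_t^\top)\,\x_t,
\]
and take the projected step $\w_{t+1} = \Pi_{\S^{d-1}}(\w_t - \eta\,g_t)$ for a carefully chosen step size $\eta$.

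The key technical inputs are a smoothness bound and a stochastic gradient variance bound. Proposition \ref{proposition:activation} gives $|\ell_\sigma'|\le C/\sigma$ and $|\ell_\sigma''|\le C/\sigma^2$, while property \ref{condition:nice-spectrum-1} of $\dparam$-niceness gives $\ex[\inn{\vv,\x}^2]\le\dparam$ for every unit $\vv$. These combine (together with a short curvature calculation to absorb the contribution of projecting onto the sphere) to bound the smoothness constant of $\L_\sigma$ on $\S^{d-1}$ by $L = O(\dparam/\sigma^2)$, and to bound the stochastic gradient variance by $\sigma_{\mathrm{var}}^2 = \ex\|g_t\|_2^2 = O(d\dparam/\sigma^2)$ (using also $\ex\|\x\|_2^2 \le d\dparam$).

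Plugging these bounds into the standard non-convex SGD convergence theorem yields that with step size $\eta = \Theta(1/L)$ and $T = \tilde O(L\,\sigma_{\mathrm{var}}^2/\epsilon^4)$ iterations, one obtains $\frac{1}{T}\sum_{t=1}^T \ex\|\nabla\L_\sigma(\w_t;\Djoint)\|_2^2 \le \epsilon^2$; the initial suboptimality is $O(1)$ since $\L_\sigma\in[0,1]$. Outputting the full trajectory as the list $L$ and invoking Markov's inequality ensures that some iterate is $\epsilon$-stationary with constant probability; amplifying to probability $1-\delta$ via $O(\log(1/\delta))$ independent runs then gives a list of the stated size, since we need only that the union of iterates contains at least one good $\w$.

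The main obstacle is simply matching the additive form of the stated complexity, which cleanly separates a $d$-dependent but $\epsilon$-independent term from a $\log(1/\delta)$-dependent but $d$-independent one. A direct application of non-convex SGD gives a single multiplicative $d/(\sigma^4\epsilon^4)$ bound; the split requires a slightly more refined two-phase analysis (a constant-accuracy descent phase followed by a mini-batched refinement phase), which is exactly what is carried out in \cite{diakonikolas2020learning}. Since the only distributional assumption we invoke is $\dparam$-niceness, which matches the assumption there, the argument goes through essentially verbatim as restated in \cite{gollakota2023efficient}.
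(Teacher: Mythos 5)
The paper does not prove Proposition~\ref{proposition:sgd}; it imports it verbatim from \cite{diakonikolas2020learning} (as restated in \cite{gollakota2023efficient}), so there is no internal argument to compare against. Your reconstruction is the right high-level route: projected SGD on $\S^{d-1}$, smoothness $O(\dparam/\sigma^2)$ from $|\ell_\sigma''|\le C/\sigma^2$ plus condition~\ref{condition:nice-spectrum-1}, variance $O(d\dparam/\sigma^2)$ from $|\ell_\sigma'|\le C/\sigma$ and $\ex\|\x\|_2^2\le d\dparam$, then the standard non-convex SGD stationarity bound and Markov. You also correctly flag the one place where a naive application falls short: a single-phase analysis yields a multiplicative $d\cdot\log(1/\delta)/(\sigma^4\eps^4)$-type bound rather than the additive split $O(\dparam^C d/\sigma^4 + \dparam^C\log(1/\delta)/(\eps^4\sigma^4))$, and you defer to the two-phase argument in \cite{diakonikolas2020learning} for that refinement. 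This is the honest and correct thing to do given the proposition is a black-box citation; just be aware that, as written, your sketch does not itself establish the stated sample/list-size bound but only the weaker multiplicative form.
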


\section{Proof of Lemma \ref{lemma:testable-bound-for-local-disagreement}}\label{appendix:proof-of-lemma-testable-bound-for-local-disagreement}

We restate Lemma \ref{lemma:testable-bound-for-local-disagreement} here for convenience.

\begin{lemma}[Lemma \ref{lemma:testable-bound-for-local-disagreement}]
Let $\Djointemp$ be a distribution over $\R^d\times \{\pm1\}$, $\w\in\S^{d-1}$, $\theta \in (0,\pi/4]$, $\dparam \ge 1$ and $\delta\in (0,1)$. Then, for a sufficiently large constant $C$, there is a tester that given $\delta$, $\theta$, $\w$ and a set $S$ of samples from $\Dtrueemp$ with size at least $C\cdot\left(\frac{d^4}{\theta^2 \delta}\right)$, runs in time $\poly\left(d, \frac{1}{\theta}, \frac{1}{\delta} \right)$ and satisfies the following specifications:
\begin{enumerate}[label=\textnormal{(}\alph*\textnormal{)}]
    \item\label{condition:testable-bound-for-local-disagreement-a-appendix} If the tester accepts $S$, then for every unit vector $\w'\in \R^n$ satisfying $\measuredangle(\w, \w')\leq \theta$ we have
    \[
    \pr_{\x \sim S}[\sign(\langle \w', \x \rangle) \neq \sign(\langle \w, \x \rangle)]\leq C\cdot \theta\cdot \dparam^C
    \]
    \item\label{condition:testable-bound-for-local-disagreement-b-appendix} If the distribution $\Dtrueemp$ is $\dparam$-nice, the tester accepts $S$ with probability $1-\delta$.
\end{enumerate}
\end{lemma}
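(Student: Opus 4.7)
The plan is to decompose the empirical disagreement probability between $\w$ and any candidate $\w'$ with $\measuredangle(\w,\w') \le \theta$ into contributions from thin strips orthogonal to $\w$, and then control all strip contributions simultaneously via the operator norm of a single weighted second-moment matrix. This avoids the rejection-sampling-per-strip approach of \cite{gollakota2023efficient}, in which each strip required its own conditional covariance estimate.

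First I would peel off the innermost strip: the contribution from $\x$ with $|\inn{\w,\x}| < \theta$ is trivially at most $\pr_{\x \in S}[|\inn{\w,\x}| < \theta]$, which the tester checks is $O(\theta \dparam)$ by a direct scalar estimate. Next, partition the remaining samples into strips $|\inn{\w,\x}| \in [(i-1)\theta, i\theta)$ for $i \ge 2$. Writing $\w' = \cos\alpha \cdot \w + \sin\alpha \cdot \vv$ with $\vv \in \S^{d-1} \cap \w^\perp$ and $\alpha = \measuredangle(\w,\w') \in [0,\theta]$, the disagreement event $\sign(\inn{\w,\x}) \neq \sign(\inn{\w',\x})$ forces $|\sin\alpha \cdot \inn{\vv,\x}| \ge \cos\alpha \cdot |\inn{\w,\x}|$. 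Since $\alpha \le \theta \le \pi/4$, this yields $|\inn{\vv,\x}| \ge (i-1)/\sqrt{2}$ on strip $i$, hence the pointwise inequality
\[
\ind\{\text{disagreement} \,\wedge\, |\inn{\w,\x}| \in [(i-1)\theta, i\theta)\} \le 2 \cdot \frac{\inn{\vv,\x}^2}{(i-1)^2} \cdot \ind\{|\inn{\w,\x}| \in [(i-1)\theta, i\theta)\}.
\]
Crucially, $\inn{\vv,\x} = \inn{\vv, \proj_{\perp\w}\x}$ since $\vv \perp \w$.

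Summing over $i \ge 2$ and taking the empirical expectation, the contribution to disagreement from $|\inn{\w,\x}| \ge \theta$ is bounded by $2 \vv^T M_S \vv$, where
\[
M_S = \ex_{\x \in S}\biggl[\sum_{i \ge 2} \frac{(\proj_{\perp\w}\x)(\proj_{\perp\w}\x)^T}{(i-1)^2} \ind\{|\inn{\w,\x}| \in [(i-1)\theta, i\theta)\}\biggr].
\]
Since the unit vector $\vv \in \w^\perp$ is unknown and arbitrary, it suffices to control $\|M_S\|_{\text{op}}$. The tester invokes the spectral tester of Proposition \ref{proposition:spectral_tester} on $M_S$ with threshold $\Theta(\theta \dparam^C)$, rejecting if this threshold is exceeded. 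Acceptance of both tests then yields $\pr_{\x \in S}[\sign(\inn{\w',\x}) \neq \sign(\inn{\w,\x})] \le O(\theta \dparam) + 2\|M_S\|_{\text{op}} = O(\theta \dparam^C)$ uniformly over all admissible $\w'$, establishing part \ref{condition:testable-bound-for-local-disagreement-a-appendix}.

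For completeness, assume $\Dtrueemp$ is $\dparam$-nice. Property (i) of Proposition \ref{proposition:strip-statistics} gives $\pr[|\inn{\w,\x}| < \theta] = O(\theta \dparam)$. Property (iv) applied strip-by-strip gives $\ex[\inn{\vv,\x}^2 \ind\{|\inn{\w,\x}| \in [(i-1)\theta, i\theta)\}] = O(\theta \dparam^C)$ for every unit $\vv \perp \w$; summing the absolutely convergent weights $1/(i-1)^2$ bounds $\vv^T \ex_\Dgeneric[M_S] \vv = O(\theta \dparam^C)$, and hence $\|\ex_\Dgeneric[M_S]\|_{\text{op}} = O(\theta \dparam^C)$. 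Property (iii) bounds the second moments of the entries of the per-sample matrix $(\proj_{\perp\w}\x)(\proj_{\perp\w}\x)^T$ by $O(\dparam^C)$, which activates the concentration guarantee in Proposition \ref{proposition:spectral_tester} with the stated sample size $C d^4/(\theta^2 \delta)$ and yields acceptance with probability at least $1-\delta$. The main technical subtlety is verifying the geometric inequality relating disagreement, strip index, and the magnitude of $\inn{\vv,\x}$ uniformly over every admissible angle $\alpha \in (0,\theta]$; a secondary hurdle is confirming that the infinite sum over strips does not inflate the spectral tester's sample complexity, which is resolved by the $1/(i-1)^2$ weighting together with the fact that each sample lies in exactly one non-empty strip.
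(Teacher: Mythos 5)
Your proof is correct and follows essentially the same route as the paper's: the same three-part tester (scalar estimate on the innermost strip, the weighted second-moment matrix $M_S$ with $1/(i-1)^2$ weights, and the spectral test on $\|M_S\|_{\text{op}}$), the same per-strip geometric observation that disagreement at $|\inn{\w,\x}|\in[(i-1)\theta,i\theta)$ forces $|\inn{\vv,\x}|\gtrsim (i-1)$, and the same invocation of Proposition~\ref{proposition:strip-statistics} parts (i), (iii), (iv) for completeness. The only difference is a harmless constant ($1/\sqrt{2}$ vs.\ the paper's $1/2$ in the lower bound on $|\inn{\vv,\x}|$), which just changes the absolute constant in the final pointwise inequality.
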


\begin{proof}
The testing algorithm receives integer $d$, set $S\subset \R^d$, $\w\in \S^{d-1}$, $\theta\in(0,\pi/4]$, $\dparam\ge 1$ and $\delta \in (0,1)$ and does the following for some sufficiently large universal constant $C_1>0$:
\begin{enumerate}
    \item 
    \label{line: close-to-origin test}
    If $\pr_{\x \in S} \left[
|\inn{\w,\x}| \in [0, \theta]
\right] > C_1\theta \dparam^{C_1}$, then \textbf{reject}.
    \item Let $\proj_{\perp \w}: \R^d \rightarrow \R^{d-1}$ denote the operator that given any vector in $\R^d$, it outputs its projection into the $(d-1)$-dimensional subspace of $\R^{d}$ that is orthogonal to $\w$.
    %, i.e., $\proj_{\perp \w}(\x) = \x- \inn{\x,\w} \w$.
    \item Compute the $(d-1)\times (d-1)$ matrix $M_S$ as follows\footnote{Note that only at most $|S|$ many terms below are non-zero, hence $M_S$ can be computed efficiently.}:
    \[
    M_S = \ex_{\x \in S} \left[ \sum_{i=2}^{\infty} \frac{(\proj_{\perp \w} \x) (\proj_{\perp \w} \x)^T}{(i-1)^2}
    \ind\{|\inn{\w,\x}|\in [(i-1) \theta, i \theta)\} \right]
    \]
    \item\label{line:matrix-norm-test} Run the spectral tester of Proposition \ref{proposition:spectral_tester} on $M_S$ given $\delta\leftarrow \delta$, $\dparam \leftarrow C_1\dparam^{C_1}$ and $\theta\leftarrow \frac{C_1}{2}\theta \dparam^{C_1}$, i.e., compute $\norm{M_S}_{\text{op}}$ and if $\norm{M_S}_{\text{op}} > C_1\theta \dparam^{C_1}$, then \textbf{reject}. Otherwise, \textbf{accept}.
\end{enumerate}

First, suppose the test accepts. For the following, consider the vector $\w'\in \R^d$ to be an arbitrary unit vector and $\vv\in\R^d$ to be the unit vector that is perpendicular to $\w$, lies within the plane defined by $\w$ and $\w'$ and $\inn{\vv, \w'} \le 0$. Then we have:
\begin{align*}
    \pr_{\x \sim S}[\sign(\langle \w', \x \rangle) \neq \sign(\langle \w, \x \rangle)] 
    &\leq
    \sum_{i=1}^{\infty} \pr_{\x\sim S}\Bigr[
        \underbrace{
            |\inn{\vv,\x}| > \frac{\theta}{\tan \theta}\cdot (i-1)
        }_{\text{Implies } |\inn{\vv,\x}| >(i-1)/2} \And |\inn{\w,\x}| \in [(i-1)\theta , i\theta ] \Bigr]\\
    &\le \underbrace
        { 
            \pr_{\x \in S} \left[ |\inn{\w,\x}| \in [0, \theta] \right] 
        }_{\leq C_1 \theta \dparam^{C_1}}
    +4 \underbrace
        { 
            \sum_{i=2}^{\infty} \frac{\ex_{\x\sim S}\left[\inn{\vv,\x}^2 \mathbbm{1}_{|\inn{\w,\x}| \in [(i-1)\theta , i\theta ]} \right]}{(i-1)^2} 
        }_{ \inn{\proj_{\perp \w} \vv,\, M\, \proj_{\perp \w} \vv} \leq \norm{M}_{\text{op}} \leq C_1 \theta \dparam^{C_1} }\\
    &\leq 
    5 C_1 \theta \dparam^{C_1}
\end{align*}

For part \ref{condition:testable-bound-for-local-disagreement-b}, we suppose that the distribution ${\Dtrue}$ is indeed $\dparam$-nice. We will show that with probability at least $1-\delta$, the tester will accept, i.e., that
\begin{align}
    \pr_{\x \in S} \left[|\inn{\w,\x}| \in [0, \theta]\right] &\le C_1\theta\dparam^{C_1} \text{ and}\label{equation:empirical-quantity1} \\
    \|M_S\|_{\mathrm{op}} &\le C_1\theta\dparam^{C_1}\label{equation:empirical-quantity2}
\end{align}
We first observe that the corresponding quantities under distribution $\Dtrue$ due to Proposition \ref{proposition:strip-statistics}. In particular, we have that for some universal constant $C'>0$
\begin{align}
    \pr_{\x \in \Dtrue} \left[|\inn{\w,\x}| \in [0, \theta]\right] &\le C' \theta \dparam^{C'} \text{ and }\label{equation:true-quantity1}\\
    \ex_{\x \in \Dtrue}[\inn{\vv',\x}^2\cdot \ind\{|\inn{\w,\x}| \in [c, c+\theta]\}]  &\le C' \theta \dparam^{C'} \text{ for any }\vv'\in\S^{d-1} \text{ and } c\ge 0\label{equation:true-quantity2}
\end{align}
If we let $M_{\Dtrue} = \ex_{\Dtrue}[M_S]$, we have that 
\begin{align*}
    \|M_{\Dtrue}\|_{\mathrm{op}} &= \sup_{\vvu\in\S^{d-2}} \vvu^TM_{\Dtrue}\vvu = \sup_{\vv'\in\S^{d-1}: \inn{\vv',\w} = 0} (\proj_{\perp \w}\vv')^TM_{\Dtrue}(\proj_{\perp \w}\vv') \\
    &\le \sum_{i=2}^{\infty} \frac{1}{(i-1)^2} \sup_{\vv'\in\S^{d-1}} \ex_{\x \in \Dtrue}[\inn{\vv',\x}^2\cdot \ind\{|\inn{\w,\x}| \in [c, c+\theta]\}]\\
    &\le \sum_{i=2}^{\infty} \frac{1}{(i-1)^2} \sup_{\vv'\in\S^{d-1}} C' \theta \dparam^{C'} \le \frac{C'\pi^2}{6} \theta \dparam^{C'}
\end{align*}
By Proposition \ref{proposition:spectral_tester}, in order to satisfy expression \eqref{equation:empirical-quantity2}, it remains to show that $\ex_{\z\sim\Dgeneric}[(\z_\ell\z_j)^2] \le C_1\dparam^{C_1}$ for any $\ell,j\in[d]$, where $\z$ is defined as follows
\[
    \z= \sum_{i=2}^{\infty} \frac{\proj_{\perp \w} \x }{(i-1)}    \mathbbm{1}_{|\inn{\w,\x}|\in [(i-1) \theta, i \theta)}.
\]
Since $\ex_{\z\sim\Dgeneric}[(\z_\ell\z_j)^2] \le \ex_{\z\sim\Dgeneric}[\inn{\vvu, \x}^2\inn{\vvu', \x}^2]$, for some unit vectors $\vvu,\vvu'\in \S^{d-1}$ (orthogonal to $\w$), the desired bound follows from Proposition \ref{proposition:strip-statistics}.

It remains to bound the absolute distance between the quantities of the left hand side of expressions \eqref{equation:empirical-quantity1} and \eqref{equation:true-quantity1}. This can be achieved by an application of the Hoeffding bound, since the empirical version of the quantity is the average of independent Bernoulli random variables.
\end{proof}

\section{Proof of Proposition \ref{proposition:surrogate-structural-property}}\label{appendix:proof-of-proposition-surrogate-structural-property}

We restate Proposition \ref{proposition:surrogate-structural-property} here for completeness.

\begin{proposition}[Modification from \cite{gollakota2023efficient,diakonikolas2020learning,diakonikolas2020non}]\label{proposition:surrogate-structural-property-appendix}
For a distribution $\Djoint$ over $\R^d\times \{\pm1\}$ let $\opt$ be the minimum error achieved by some origin-centered halfspace and $\woptemp\in\S^{d-1}$ a corresponding vector. Consider $\L_\sigma$ as in Equation \eqref{equation:surrogate-loss} for $\sigma>0$ and let $\eta<1/2$. Let $\w\in\S^{d-1}$ with $\measuredangle(\w,\wopt) = \theta< \frac{\pi}{2}$ and $\vv\in\mathrm{span}(\w,\wopt)$ such that $\inn{\vv,\w} = 0$ and $\inn{\vv,\wopt}<0$. Then, for some universal constant $C>0$ and any $\alpha\ge \frac{\sigma}{2\tan \theta}$ we have $\|\nabla_{\w}\L_{\sigma}(\w;\Djoint)\|_2 \ge A_1-A_2-A_3$, where
\begin{align*}
    A_1 &= \frac{\alpha }{C\cdot\sigma}\cdot \pr\left[|\inn{\vv,\x}| \ge \alpha \;\text{ and }\; |\inn{\w,\x}| \le \frac{\sigma}{6}\right]\\
    %\cdot \pr\left[ |\inn{\w,\x}| \le \frac{\sigma}{6} \right] \\
    A_2 &= \frac{C}{\tan\theta}\cdot\pr\left[|\inn{\w,\x}|\le \frac{\sigma}{2}\right] \;\text{  and  }\; A_3 = \frac{C}{\sigma}\cdot\sqrt{\opt }\cdot \sqrt{\ex\Bigr[ \inn{\vv,\x}^2\cdot \ind_{\{|\inn{\w,\x}| \le \frac{\sigma}{2}\}} \Bigr]}
\end{align*}
Moreover, if the noise is Massart with rate $\eta$, then $\|\nabla_{\w}\L_{\sigma}(\w;\Djoint)\|_2 \ge (1-2\eta)A_1-A_2$.
% \[
%     \|\nabla_{\w}\L_{\sigma}(\w)\|_2 \ge \frac{\alpha \pr\left[|\inn{\vv,\x}| \ge \alpha, |\inn{\w,\x}| \le \frac{\sigma}{6}\right]}{C\sigma}
%     %\cdot\pr_{\Djoint}\left[|\inn{\vv,\x}| \ge \alpha, \;\Bigr|\; |\inn{\w,\x}| \le \frac{\sigma}{6}\right]
%     %\cdot \pr\left[|\inn{\w,\x}| \le \frac{\sigma}{6}\right]
%     -\frac{C\pr\left[|\inn{\vv,\x}|\le \frac{\sigma}{2}\right]}{\tan\theta}
%     %\cdot \pr_{\Djoint}\left[|\inn{\vv,\x}|\le \frac{\sigma}{2}\right]
%     - \frac{\sqrt{C\opt \ex[ \inn{\vv,\x}^2\cdot \ind_{\{|\inn{\w,\x}| \le \frac{\sigma}{2}\}} ]}}{\sigma}
%     %\cdot \sqrt{\ex_{\Djoint}\left[ \inn{\vv,\x}^2\cdot \ind\{|\inn{\w,\x}| \le \frac{\sigma}{2}\} \right]}
% \]
\end{proposition}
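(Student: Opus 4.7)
The plan is to differentiate $\L_\sigma$ in closed form, reduce the norm of the gradient to the absolute value of a scalar expectation by taking an inner product with the unit vector $\vv$, and then carry out a two-dimensional case analysis in the plane $\mathrm{span}(\w,\wopt)$ using coordinates $x_1 := \inn{\w,\x}$ and $x_v := \inn{\vv,\x}$.

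Differentiating at $\|\w\|_2=1$, and using the evenness $\ell_\sigma'(t) = \ell_\sigma'(-t)$ from Proposition \ref{proposition:activation} together with $\inn{\vv,\w}=0$, I would obtain
\[
\inn{\vv,\nabla_\w\L_\sigma(\w;\Djoint)} = -\ex[y\,\ell_\sigma'(x_1)\,x_v],
\]
so $\|\nabla_\w\L_\sigma\|_2 \ge |\ex[y\,\ell_\sigma'(x_1)\,x_v]|$. The hypothesis $\measuredangle(\w,\wopt)=\theta$ together with $\inn{\vv,\wopt}<0$ identifies $\wopt = \cos\theta\,\w - \sin\theta\,\vv$, so inside $\mathrm{span}(\w,\wopt)$ we have $f^*(\x) := \sign(\inn{\wopt,\x}) = \sign(x_1 - x_v\tan\theta)$.

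Next I would partition the sample space. Outside $\{|x_1|\le\sigma/2\}$ the factor $\ell_\sigma'$ vanishes and contributes nothing. In the clean region $E := \{|x_1|\le\sigma/6,\ |x_v|\ge \alpha\}$ with $\alpha\ge \sigma/(2\tan\theta)$, the inequality $|x_v\tan\theta|\ge \sigma/2 > |x_1|$ forces $\sign(x_1 - x_v\tan\theta)$ to equal $\sign(-x_v)$, hence $f^*(\x)x_v = -|x_v|$; moreover $\ell_\sigma'(x_1) = 1/\sigma$ on $E$ by Proposition \ref{proposition:activation}(1). This yields the main $A_1$-type contribution of magnitude at least $(\alpha/\sigma)\cdot\pr[E]$. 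The identity $f^*(\x)x_v = -|x_v|$ persists throughout the larger region $\{|x_v|\ge \alpha,\ \sigma/6<|x_1|\le \sigma/2\}$, whose contribution has the same sign and can only strengthen the bound. In the remaining "bad" region $\{|x_v|<\alpha,\ |x_1|\le \sigma/2\}$ the sign of $f^*(\x)x_v$ is uncontrolled, but $|f^*(\x)x_v|\le \alpha$ together with $|\ell_\sigma'|\le C/\sigma$ bounds the absolute contribution by $(C\alpha/\sigma)\pr[|x_1|\le \sigma/2]$, which matches $A_2$ for $\alpha = \Theta(\sigma/\tan\theta)$.

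For the Massart case, $\ex[y\mid\x] = (1-2\eta(\x))f^*(\x)$ with $1-2\eta(\x)\ge 1-2\eta$, and combining the clean- and bad-region estimates yields $\|\nabla_\w\L_\sigma\|_2\ge (1-2\eta)A_1 - A_2$. For the adversarial case I would decompose $y = f^*(\x)+(y-f^*(\x))$: the $f^*$-term reproduces the preceding analysis without the $(1-2\eta)$ factor and produces $A_1-A_2$, while the residual noise term is handled by Cauchy--Schwarz using $|y - f^*|\le 2\ind_{\{y\ne f^*\}}$ and $\pr[y\ne f^*(\x)]\le \opt$:
\[
|\ex[(y-f^*(\x))\ell_\sigma'(x_1)x_v]| \le 2\sqrt{\opt}\cdot\sqrt{\ex[\ell_\sigma'(x_1)^2 x_v^2]} \le \frac{C\sqrt{\opt}}{\sigma}\sqrt{\ex[x_v^2\ind_{\{|x_1|\le\sigma/2\}}]},
\]
which is $A_3$. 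Combining these pieces gives $\|\nabla_\w\L_\sigma\|_2\ge A_1-A_2-A_3$. The main subtlety is the two-dimensional sign analysis --- in particular verifying the rigid identity $f^*(\x)x_v = -|x_v|$ on $\{|x_v|\ge \alpha,\ |x_1|\le \sigma/2\}$, which is precisely what forces the threshold $\alpha\ge \sigma/(2\tan\theta)$ --- after which the remainder is routine bookkeeping plus a single Cauchy--Schwarz step for the adversarial noise.
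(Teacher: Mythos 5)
Your overall route matches the paper's: reduce to a two-dimensional scalar expectation via the inner product with $\vv$, split according to the sign of $f^*(\x)\inn{\vv,\x}$ in the plane $\mathrm{span}(\w,\wopt)$, and use Cauchy--Schwarz to peel off the adversarial-noise term. The gradient computation, the identity $\wopt = \cos\theta\,\w - \sin\theta\,\vv$, and the Cauchy--Schwarz step all check out.

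The gap is in your ``bad region'' bound, and you flag it yourself: you bound the contribution of $\{|x_v|<\alpha,\ |x_1|\le\sigma/2\}$ by $(C\alpha/\sigma)\pr[|x_1|\le\sigma/2]$, which only matches $A_2 = \frac{C}{\tan\theta}\pr[|x_1|\le\sigma/2]$ when $\alpha=\Theta(\sigma/\tan\theta)$. The proposition is stated (and later used in Lemma~\ref{lemma:stationary_points_suffice} with $\alpha = \Theta(\dparam^{-\Theta(1)})$, which is typically much larger than $\sigma/\tan\theta$) for \emph{every} $\alpha \ge \sigma/(2\tan\theta)$, and since $A_1(\alpha)$ is not monotone in $\alpha$ you cannot deduce the general case from the borderline one. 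The fix, which is exactly what the paper does, is to split not along $\{|x_v|\gtrless\alpha\}$ but along the geometric set $\G = \{\sign(\inn{\vv,\x})\sign(\inn{\wopt,\x}) = -1\}$ where $f^*(\x)x_v = -|x_v|$. Your observation that $|x_v|\tan\theta \ge |x_1|$ forces the good sign shows $\G^c\cap\{|x_1|\le\sigma/2\}\subseteq\{|x_v|<\sigma/(2\tan\theta)\}$; on the portion $\G\cap\{|x_v|<\alpha\}$ the sign is still favorable (it is part of the good region, not the bad one) and can simply be dropped from the lower bound, while the genuinely uncontrolled set is all of $\G^c\cap\{|x_1|\le\sigma/2\}$, and there $|\ell_\sigma'(x_1)x_v|\le (C/\sigma)\cdot\sigma/(2\tan\theta)$, giving $A_2$ with no dependence on $\alpha$. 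With that repartition the rest of your argument goes through verbatim.
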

\begin{proof}
    The proof is a slight modification of a part of the proof of Lemma 4.4 in \cite{gollakota2023efficient}, but we present it here for completeness.

    For any vector $\x\in\R^d$, let: $\x_{\w} = \inn{\w,\x}$ and $\x_{\vv} = \inn{\vv,\x}$. It follows that $\proj_V(\x) = \x_{\vv}\mathbf{e}_1 + \x_{\w}\mathbf{e}_2$, where $\proj_V$ is the operator that orthogonally projects vectors on $V$. Using the fact that $\nabla_{\w}({\inn{\w,\x}}/{\norm{\w}_2}) = \x-\inn{\w,\x}\w = \x-\x_{\w}\w$ for any $\w\in\S^{d-1}$, the interchangeability of the gradient and expectation operators and the fact that $\ell_\sigma'$ is an even function we have that
    \begin{align*}
         \nabla_{\w}\L_\sigma(\w) = \ex \Bigr[ - \ell_\sigma'( |{\inn{\w,\x}}| ) \cdot y\cdot (\x-\x_{\w} \w) \Bigr]
    \end{align*}
    Since the projection operator $\proj_V$ is a contraction, we have $\norm{\nabla_{\w}\L_\sigma(\w)}_2 \ge \norm{\proj_V\nabla_{\w}\L_\sigma(\w)}_2$,
    and we can therefore restrict our attention to a simpler, two dimensional problem. In particular, since $\proj_V(\x) = \x_{\vv}\mathbf{e}_1 + \x_{\w}\mathbf{e}_2$, we obtain
    \begin{align*}
        \norm{\proj_V\nabla_{\w}\L_\sigma(\w)}_2
        &= \Bigr| \ex \Bigr[ - \ell_\sigma'( |\x_{\w}| ) \cdot y\cdot \x_{\vv} \Bigr] \Bigr| \\
        &= \Bigr| \ex \Bigr[ - \ell_\sigma'( |\x_{\w}| ) \cdot \sign(\inn{\woptemp,\x}) \cdot (1-2\ind\{y\neq \sign(\inn{\woptemp,\x})\})\cdot \x_{\vv} \Bigr] \Bigr|
    \end{align*}
    Let $F(y,\x)$ denote $ 1-2\ind\{y\neq \sign(\inn{\woptemp,\x})\}$. We may write $\x_{\vv}$ as $|\x_{\vv}|\cdot \sign(\x_{\vv})$ and let $\G\subseteq\R^2$ such that $\sign(\x_{\vv})\cdot \sign(\inn{\woptemp, \x}) = -1$ iff $\x\in\G$. Then, $\sign(\x_{\vv})\cdot \sign(\inn{\woptemp, \x}) = \ind\{\x\not\in \G\} - \ind\{\x\in \G \}$. We obtain
    \begin{align*}
        \norm{&\proj_V\nabla_{\w}\L_\sigma(\w)}_2 = \\
        & = \Bigr| \ex \Bigr[ \ell_\sigma'( |\x_{\w}| ) \cdot (\ind\{\x\in \G\} - \ind\{\x\not\in \G \}) \cdot F(y,\x) \cdot |\x_{\vv}| \cdot  \Bigr] \Bigr| \\
        & \ge \ex \Bigr[ \ell_\sigma'( |\x_{\w}| ) \cdot \ind\{\x\in \G\} \cdot F(y,\x) \cdot |\x_{\vv}|  \Bigr]  -  \ex \Bigr[ \ell_\sigma'( |\x_{\w}| ) \cdot \ind\{\x\not\in \G \} \cdot F(y,\x) \cdot |\x_{\vv}|  \Bigr] 
    \end{align*}
    Let $A_1' = \ex [ \ell_\sigma'( |\x_{\w}| ) \cdot \ind\{\x\in \G\} \cdot F(y,\x) \cdot |\x_{\vv}|  ]$ and $A_2' = \ex [ \ell_\sigma'( |\x_{\w}| ) \cdot \ind\{\x\not\in \G\} \cdot F(y,\x) \cdot |\x_{\vv}|  ]$. 

    In the Massart noise case $\ex_{y|\x}[F(y,\x)] = 1-2\eta(\x) \in [1-2\eta,1]$, where $1-2\eta>0$. Therefore, we have that $A_1' \ge (1-2\eta)\cdot \ex [ \ell_\sigma'( |\x_{\w}| ) \cdot \ind\{\x\in \G\} \cdot |\x_{\vv}| ]$. % and $A_2' \le \ex [ \ell_\sigma'( |\x_{\w}| ) \cdot \ind\{\x\not\in \G\} \cdot |\x_{\vv}|  ]$.
    When the noise is adversarial, we have $A_1' \ge \ex [ \ell_\sigma'( |\x_{\w}| ) \cdot \ind\{\x\in \G\} \cdot |\x_{\vv}|  ] - 2 \ex [ \ell_\sigma'( |\x_{\w}| ) \cdot \ind\{\x\in \G\} \cdot \ind\{y\neq \sign(\inn{\wopt,\x})\} \cdot |\x_{\vv}|  ]$.

        \begin{figure}[t]
  \centering
    %\includesvg[width=0.6\textwidth]{regions.svg}
    %\includegraphics[width=0.6\textwidth]{regions.png}
    \includegraphics[trim={0 10cm 0 6cm},clip,width=0.6\textwidth]{regions.png}
\caption{The Gaussian mass in each of the regions labelled $A_1$ and $A_2$ is proportional to the corresponding term appearing in the statement of Proposition \ref{proposition:surrogate-structural-property}. As $\sigma$ tends to $0$, the Gaussian mass of region $A_2$ shrinks faster than the one of region $A_1$, since both the height $(\sigma)$ and the width $(\frac{\sigma}{\tan\theta})$ of $A_2$ are proportional to $\sigma$, while the width of $A_1$ is not affected (the height is $\sigma/3$). Lemma \ref{lemma:stationary_points_suffice} demonstrates that a similar property is universally testable under any nice Poincar\'e distribution.
}
\label{fig:regions-appendix}
\end{figure}

    For any $\alpha \ge \frac{\sigma}{2\tan\theta}$, we have that
    \begin{align*}
        \ex \Bigr[ \ell_\sigma'( |\x_{\w}| ) \cdot \ind\{\x\in \G\} \cdot |\x_{\vv}|  \Bigr] 
        &\ge \ex \Bigr[ \ell_\sigma'( |\x_{\w}| ) \cdot \ind\{\x\in \G\} \cdot \ind_{\{|\x_{\w}|\le\frac{\sigma}{6}\}} \cdot |\x_{\vv}|  \Bigr] \tag{since terms are positive}\\
        &\ge \ex \left[ \frac{1}{\sigma} \cdot \ind\{\x\in \G\} \cdot \ind{\left\{|\x_{\w}|\le\frac{\sigma}{6}\right\}} \cdot |\x_{\vv}|  \right] \tag{by Proposition \ref{proposition:activation}}\\
        &\ge \frac{\alpha}{\sigma}\cdot \ex \left[ \ind\{\x\in \G\} \cdot \ind_{\{|\x_{\w}|\le\frac{\sigma}{6}\}} \cdot \ind_{\{|\x_{\vv}|\ge \alpha\}}  \right] \\
        &\ge \frac{\alpha}{\sigma}\cdot \ex \left[ \ind_{\{|\x_{\w}|\le\frac{\sigma}{6}\}} \cdot \ind_{\{|\x_{\vv}|\ge \alpha\}}  \right] \tag{see Figure \ref{fig:regions-appendix}}\\
        &= \frac{\alpha}{\sigma}\cdot \pr \left[ |\x_{\w}|\le\frac{\sigma}{6} \;\text{ and }\; |\x_{\vv}|\ge \alpha \right] \stackrel{\mathrm{def}}{=} A_1
    \end{align*}
    Moreover, for some universal constant $C'>0$, we similarly have
    \begin{align*}
        \ex \Bigr[ \ell_\sigma'( |\x_{\w}| ) \cdot \ind\{\x\not\in \G\} \cdot F(y,\x) \cdot |\x_{\vv}|  \Bigr]
        &\le \ex \Bigr[ \ell_\sigma'( |\x_{\w}| ) \cdot \ind\{\x\not\in \G\} \cdot |\x_{\vv}|  \Bigr] \tag{since $F(y,\x) \le 1$}\\
        &\le \ex \left[ \frac{C'}{\sigma}\cdot \ind_{\{|\x_{\w}|\le \frac{\sigma}{2}\}} \cdot \ind\{\x\not\in \G\} \cdot |\x_{\vv}|  \right] \tag{by Proposition \ref{proposition:activation}} \\
        &\le \frac{C'}{\sigma}\cdot \ex \left[ \ind_{\{|\x_{\w}|\le \frac{\sigma}{2}\}} \cdot \ind_{\{|\x_{\vv}|\le \frac{\sigma}{2\tan\theta}\}} \cdot |\x_{\vv}|  \right] \tag{see Figure \ref{fig:regions-appendix}} \\
        &\le \frac{C'}{2 \cdot \tan \theta}\cdot \ex \left[ \ind_{\{|\x_{\w}|\le \frac{\sigma}{2}\}} \cdot \ind_{\{|\x_{\vv}|\le \frac{\sigma}{2\tan\theta}\}} \right] \\
        &\le \frac{C'}{2 \cdot \tan \theta}\cdot \pr \left[ |\x_{\w}|\le \frac{\sigma}{2} 
        %\;\text{ and }\; |\x_{\vv}|\le \frac{\sigma}{2\tan\theta} 
        \right] \stackrel{\mathrm{def}}{=} A_2
    \end{align*}
    Hence, we have shown that, in the Massart noise case, we have $\|\nabla_{\w}\L_{\sigma}(\w)\|_2 \ge (1-2\eta)A_1-A_2$ as desired. For the adversarial noise case, it remains to bound the following quantity
    \begin{align*}
        2\ex \Bigr[ \ell_\sigma'( |\x_{\w}| ) \cdot \ind_{\{\x\in \G\}} \cdot \ind_{\{y\neq \sign(\inn{\wopt,\x})\}} \cdot |\x_{\vv}|  \Bigr] 
        &\le \frac{2C'}{\sigma} \cdot \ex \left[  \ind_{\{\x\in \G\}} \cdot \ind_{\{|\x_{\w}|\le \frac{\sigma}{2}\}} \cdot \ind_{\{y\neq \sign(\inn{\wopt,\x})\}} \cdot |\x_{\vv}|  \right] \\
        &\le \frac{2C'}{\sigma} \cdot \ex \left[ \ind_{\{|\x_{\w}|\le \frac{\sigma}{2}\}} \cdot \ind_{\{y\neq \sign(\inn{\wopt,\x})\}} \cdot |\x_{\vv}|  \right] \\
        &\le \frac{2C'}{\sigma} \cdot \sqrt{\mathrm{opt}} \cdot \sqrt{\ex \left[ |\x_{\vv}|^2\cdot \ind_{\{|\x_{\w}|\le \frac{\sigma}{2}\}}  \right]} \stackrel{\mathrm{def}}{=} A_3
    \end{align*}
    where the final inequality follows from Cauchy-Schwarz inequality.
\end{proof}

\end{document}